\setlist{noitemsep, topsep=0pt}
\numberwithin{equation}{section}
\theoremstyle{plain}
\newtheorem{theorem}{Theorem}
\newtheorem{lemma}{Lemma}
\theoremstyle{definition}
\newtheorem{definition}{Definition}
\newtheorem{remark}{Remark}
\def\H{{H}}  
\def\I{{I}}  
\newcommand{\bfA}{\mathbf{A}}
\newcommand{\bfB}{\mathbf{B}}
\newcommand{\bfC}{\mathbf{C}}
\newcommand{\bfE}{\mathbf{E}}
\newcommand{\bfN}{\mathbf{N}}
\newcommand{\bfS}{\mathbf{S}}
\newcommand{\bfU}{\mathbf{U}}
\newcommand{\bfV}{\mathbf{V}}
\newcommand{\bfX}{\mathbf{X}}
\newcommand{\bfc}{\mathbf{c}}
\newcommand{\bfk}{\mathbf{k}}
\newcommand{\bfx}{\mathbf{x}}
\newcommand{\calD}{\mathcal{D}}
\newcommand{\calG}{\mathcal{G}}
\newcommand{\calK}{\mathcal{K}}
\newcommand{\rmPr}{\mathrm{Pr}}
\newcommand{\hcalG}{\hat{\mathcal{G}}}
\newcommand{\tdcalG}{\tilde{\mathcal{G}}}
\newcommand{\pa}[1]{{\bm \pi}_{#1}^\mathcal{G}}
\newcommand{\Pa}[1]{{\bm \Pi}_{#1}^\mathcal{G}}
\newcommand{\sep}{\!\perp\!\!\!\!\perp\!}
\newcommand{\dsep}[3]{(#1 \sep #2 \!\mid\! #3)_{\mathcal{\calG}}}
\newcommand{\notdsep}[3]{(#1 \not\sep #2 \!\mid\! #3)_{\mathcal{G}}}
\newcommand{\indep}[3]{(#1 \sep #2 \!\mid\! #3)_P}
\newcommand{\notindep}[3]{(#1 \not\sep #2 \!\mid\! #3)_P}
\newcommand{\indepinfo}{{\{ {\perp\!\!\!\!\perp}_P \}}}
\newcommand{\ind}[1]{\mathds{1} \left\{ #1 \right\}}
\providecommand{\abs}[1]{\lvert#1\rvert}
\DeclareMathOperator*{\argmin}{argmin}
\DeclareMathOperator*{\argmax}{argmax}
\DeclareFontFamily{U} {MnSymbolA}{}
\DeclareFontShape{U}{MnSymbolA}{m}{n}{
  <-6> MnSymbolA5
  <6-7> MnSymbolA6
  <7-8> MnSymbolA7
  <8-9> MnSymbolA8
  <9-10> MnSymbolA9
  <10-12> MnSymbolA10
  <12-> MnSymbolA12}{}
\DeclareFontShape{U}{MnSymbolA}{b}{n}{
  <-6> MnSymbolA-Bold5
  <6-7> MnSymbolA-Bold6
  <7-8> MnSymbolA-Bold7
  <8-9> MnSymbolA-Bold8
  <9-10> MnSymbolA-Bold9
  <10-12> MnSymbolA-Bold10
  <12-> MnSymbolA-Bold12}{}
\DeclareSymbolFont{MnSyA} {U} {MnSymbolA}{m}{n}
\DeclareMathSymbol{\leftrightline}{\mathrel}{MnSyA}{208}
\newcommand{\adjacent}{\leftrightline}
\begin{document}

\title{Partitioned Hybrid Learning of Bayesian Network Structures\thanks{This work was supported by US NSF grant DMS-1952929}}

\author{Jireh Huang and Qing Zhou\thanks{UCLA Department of Statistics. Emails: jirehhuang@ucla.edu, zhou@stat.ucla.edu}}
\date{}
\maketitle

\begin{abstract}
We develop a novel hybrid method for Bayesian network structure learning called partitioned hybrid greedy search (pHGS), composed of three distinct yet compatible new algorithms: Partitioned PC (pPC) accelerates skeleton learning via a divide-and-conquer strategy, $p$-value adjacency thresholding (PATH) effectively accomplishes parameter tuning with a single execution, and hybrid greedy initialization (HGI) maximally utilizes constraint-based information to obtain a high-scoring and well-performing initial graph for greedy search. We establish structure learning consistency of our algorithms in the large-sample limit, and empirically validate our methods individually and collectively through extensive numerical comparisons. The combined merits of pPC and PATH achieve significant computational reductions compared to the PC algorithm without sacrificing the accuracy of estimated structures, and our generally applicable HGI strategy reliably improves the estimation structural accuracy of popular hybrid algorithms with negligible additional computational expense. Our empirical results demonstrate the superior empirical performance of pHGS against many state-of-the-art structure learning algorithms.
\end{abstract}

\textbf{Keywords:} Bayesian networks, structure learning, greedy search, PC algorithm








\section{Introduction}\label{sec:intro}

Bayesian networks are compact yet powerful graphical models that efficiently encode in their graphical structures probabilistic relationships amongst a large number of variables \citep{neapolitan2004}. Despite their utility for probabilistic inference, the problem of recovering from data the structure of the true underlying Bayesian network that governs a domain of variables is notoriously challenging \citep{chickering2004large}. The space of Bayesian network structures grows super-exponentially with the number of variables, severely limiting exhaustive evaluation of all structures and motivating decades of work in developing efficient algorithms for structure learning \citep{robinson1977, spirtes2000}. 

Generally, Bayesian network structure learning algorithms can be classified as one of the following three classes of algorithms. Constraint-based methods strategically test conditional independence relationships between pairs of variables, first determining the existence of edges before inferring orientations \citep{spirtes1991, meek1995}. In the score-based approach, heuristics are designed to optimize some scoring criterion that evaluates the goodness-of-fit of a proposed structure to the available data \citep{heckerman1995, chickering2002optimal, russell2009}. Finally, hybrid methods combine the two strategies, optimizing a score over a reduced space of structures restricted through a constraint-based approach \citep{tsamardinos2006max, gasse2014}. 

The PC algorithm 
\citep{spirtes1991} is often considered state-of-the-art amongst constraint-based methods for Bayesian network structure learning because of its polynomial complexity for sparse graphs and attractive theoretical properties \citep{kalisch2007}. Even with its favorable scaling, PC can quickly become unwieldy for large networks, motivating various developments to structure learning speed. Several works have contributed to accelerating its execution with various parallelization strategies, resulting in speed-ups ranging from up to ten times to over three orders of magnitude \citep{pcalg, le2016, madsen2017, scutari2017, zare2020}. However, these improvements are entirely feats of distributed processing implementation and are limited by the availability of required hardware. \cite{gu2020learning} proposed a hybrid framework for partitioned estimation of Bayesian networks called partition, estimation, and fusion (PEF) in the interest of distributing learning by adopting a divide-and-conquer strategy. Unfortunately, its application to the PC algorithm does not in general retain the completeness of the PC algorithm and is limited in its capacity for parallel processing.
Finally, none of these contributions tackle the practical problem that the performance of constraint-based algorithms can vary substantially with certain tuning parameters, potentially requiring multiple algorithm executions.

Prominent hybrid methods leverage the efficiency of constraint-based strategies to considerably reduce the space of Bayesian network models but sacrifice the asymptotic guarantees of constraint-based edge orientation for the generally superior empirical structural accuracy of restricted greedy search \citep{tsamardinos2006max}. This is characteristic of members of what we call the generalized sparse candidate (GSC) framework, named after the sparse candidate algorithm \citep{friedman1999}, in which a greedy search in the DAG space is executed from an empty graph restricted to a sparse set of candidate edges obtained through a constraint-based strategy. Hybrid algorithms belonging to GSC include max-min hill-climbing (MMHC) and hybrid hybrid parents and children (H2PC), which, despite their popularity and general regard for well-performance, are well-known to be lacking in asymptotic guarantees \citep{tsamardinos2006max, gasse2014}. While the adaptively restricted greedy equivalence search (ARGES) stands out as a hybrid framework with established consistency \citep{nandy2018}, our simulations suggest that ARGES can likewise empirically benefit from the developments in our work. In particular, both GSC and ARGES initialize their respective greedy searches with an empty graph and, to our knowledge, no principled and well-performing initialization strategy without assuming expert knowledge has been proposed. 

We propose an answer to these challenges by the development of the partitioned hybrid greedy search (pHGS) algorithm, a hybrid structure learning algorithm that can be considered the composition of three independent contributions to the computational efficiency, theoretical guarantees, and empirical performance of Bayesian network structure learning. In particular, pHGS accomplishes the following:
\begin{enumerate}[topsep=0.5em,itemsep=0.25em]

	\item Restricts the search space with our proposed partitioned PC (pPC) algorithm that improves on the efficiency of the PC algorithm while retaining its soundness and completeness and capacity for parallel processing;
	
	\item Mitigates the need for parameter tuning by automatically selecting the sparsity-controlling threshold of conditional independence tests with our $p$-value adjacency thresholding (PATH) algorithm that extends the accessibility of constraint-based consistency;
	
	\item Initializes the restricted greedy search with our hybrid greedy initialization (HGI) algorithm that elevates the asymptotic guarantees of existing hybrid algorithms such as members of the GSC framework to that of sound and complete constraint-based methods while  improving empirical performance.
	
\end{enumerate}

After reviewing relevant preliminaries in \autoref{sec:background}, the novel components of pHGS are organized in the remainder of this paper as follows. In \autoref{sec:ppc_path}, we develop the pPC algorithm which employs a partitioned estimation strategy to reduce the number of statistical tests required for the exhaustive conditional independence investigation in PC-like CPDAG learning. We additionally detail the PATH thresholding algorithm, which efficiently generates and selects from a set of CPDAG estimates with varying sparsity from a single execution of pPC (or PC) and extends the accessibility of classical asymptotic consistency results to more flexible parameter specification. We begin \autoref{sec:hybrid} with a brief review of score-based structure learning before developing HGI, a greedy initialization strategy which endears constraint-based edge orientation to the empirical setting with desirable theoretical guarantees. 

We empirically validate pPC, PATH, and HGI in \autoref{sec:results}, first independently and then collectively in the form of pHGS through an extensive simulation study. We show that pPC generally requires significantly fewer statistical calls as compared to PC, and that PATH effectively accomplishes the task of parameter tuning from a single algorithm execution with practically negligible computational expense. Compared to repeated executions of PC, the combined effect of pPC and PATH consistently achieves significant computational reductions without sacrificing (and indeed often improving on) estimation accuracy. We demonstrate the effectiveness of HGI on several instantiations of the GSC hybrid framework, and validate the holistic merits of pHGS against several popular structure learning algorithms. Though the focus of our paper is on the discrete case, we include succinct comments and results for our methods on high-dimensional Gaussian data.

\section{Background}\label{sec:background}

A \emph{graph} $\calG = (\bfV, \bfE)$ is a structure composed of a set of nodes $\bfV = \{1, \dots, p\}$, and a set of edges $\bfE$. For a pair of distinct nodes $i, j \in \bfV$, we encode an undirected edge between $i$ and $j$ in $\calG$ by an unordered connected pair $i \adjacent j \in \bfE$, and a directed edge from $i$ to $j$ in $\calG$ by an ordered pair $i \to j \in \bfE$.
A \emph{directed acyclic graph} (DAG) has only directed edges and is oriented such that there are no directed cycles in $\calG$. A DAG $\calG$ defines the structure of a \emph{Bayesian network} of a joint probability distribution $P$ of variables $\bfX$ corresponding to $\bfV$ if $P$ factorizes according to the structure of $\calG$: 
\begin{align}\label{eq:bn}
P(\bfX) = \prod_{i=1}^p P(X_i \mid \Pa{i}),
\end{align}
where $\Pa{i} = \{X_j : j \to i \in \bfE \}$ denotes the parents of $X_i$ according to $\calG$. In this paper, we may refer to a node $i \in \bfV$ and its corresponding variable $X_i \in \bfX$ interchangeably. In a causal DAG, $i \to j$ asserts that $i$ is a direct cause of $j$, whereas more generally, a DAG encodes in its structure a set of conditional independence statements between distinct variables according to the above factorization. For ease of notation, we let $\bfX_{\bfk} = \{X_k \in \bfX : k \in \bfk \}$ for $\bfk \subseteq \bfV$. 

This paper focuses on the setting in which $P$ is a discrete probability distribution, although many of the presented strategies are not limited to such a domain. Each variable $X_i$ probabilistically attains one of $r_i \geq 2$ states depending on the attained states of its parents $\Pa{i}$. The conditional probability distributions of the variables given each of their parent configurations are multinomial distributions. 

Let $\indep{X_i}{X_j}{\bfX_{\bfk}}$ denote that $X_i$ and $X_j$ are independent given conditioning set $\bfX_{\bfk} \subseteq \bfX \setminus \{X_i, X_j \}$ in $P$, and $\dsep{X_i}{X_j}{\bfX_{\bfk}}$ that $X_i$ and $X_j$ are d-separated by $\bfX_\bfk$ in $\calG$. The factorization \eqref{eq:bn} implies that $\calG$ and $P$ satisfy the \emph{(global) Markov condition}: for disjoint sets of variables $\bfA, \bfB, \bfC \subseteq \bfX$,
\begin{align}\label{eq:markov}
\dsep{\bfA}{\bfB}{\bfC} \Rightarrow \indep{\bfA}{\bfB}{\bfC}.
\end{align}

\subsection{Markov Equivalence}\label{sec:equivalence}

Multiple DAGs may encode the same set of d-separation statements and thus redundantly entail the same conditional independence statements. Such DAGs are said to be \emph{Markov equivalent}. Formally, two DAGs $\calG$ and $\calG^\prime$ are Markov equivalent if $\dsep{\bfA}{\bfB}{\bfC} \Leftrightarrow (\bfA \sep \bfB \!\mid\! \bfC)_{\mathcal{\calG}^\prime}$ for all mutually disjoint subsets $\bfA, \bfB, \bfC \subseteq \bfX$. We refer to Markov equivalent DAGs as simply \emph{equivalent} and belonging to the same \emph{equivalence class}. Given our distributional assumptions on $P$, equivalent DAGs are indistinguishable without background information or experimental data. As our interest lies in structure learning from observational data, the objective amounts to recovering the equivalence class of the underlying DAG. 

The \emph{skeleton} of a graph $\calG = (\bfV, \bfE)$ is the undirected graph obtained from replacing every connected node pair in $\calG$ with an undirected edge. A \emph{v-structure} is a triplet $i, j, k \in \bfV$ oriented $i \to k \gets j$ in $\calG$ with $i$ and $j$ not adjacent. Let the \emph{pattern} of $\calG$ be the \emph{partially directed acyclic graph} (PDAG) obtained by orienting all and only the v-structures of $\calG$ in its skeleton, leaving all remaining edges undirected. The following theorem was adapted from \cite{verma1991equivalence} to characterize equivalent DAGs. 

\begin{theorem}[\cite{meek1995}]\label{thm:equivalence}
Two DAGs are equivalent if and only if they have the same patterns.
\end{theorem}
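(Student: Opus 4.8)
The plan is to prove the two directions separately after recasting the hypothesis ``same pattern'' into the equivalent and more workable form ``same skeleton and same set of v-structures,'' which is immediate from the definition of a pattern as the PDAG that orients all and only the v-structures within the skeleton. The backbone of both directions is a single principle: adjacency and v-structures are precisely the features of a DAG that are \emph{detectable} from its family of d-separation statements. Since Markov-equivalent DAGs have, by definition, identical d-separation families, the necessity direction follows once these two features are expressed purely in terms of d-separation, while the sufficiency direction requires the converse claim that nothing \emph{beyond} skeleton and v-structures is needed to reconstruct all d-separations.

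For necessity (equivalence $\Rightarrow$ same pattern), I would first show that equivalent DAGs share a skeleton via the lemma that two nodes $i,j$ are adjacent in $\calG$ if and only if no subset $\bfS \subseteq \bfV \setminus \{i,j\}$ satisfies $\dsep{X_i}{X_j}{\bfX_{\bfS}}$: an edge can be blocked by no conditioning set, whereas nonadjacent $i,j$ always admit an explicit d-separator, namely the parents of whichever of $i$ and $j$ is not a descendant of the other (the local Markov property). Hence adjacency, and thus the skeleton, is a function of the d-separation family alone and is shared across equivalent DAGs. Given the common skeleton, I would then characterize v-structures: for an unshielded triple $i \adjacent k \adjacent j$ with $i,j$ nonadjacent, the orientation $i \to k \gets j$ is a v-structure if and only if $k$ lies in \emph{no} set that d-separates $i$ and $j$. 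This is exactly the collider rule — conditioning on a collider opens the length-two path $i \adjacent k \adjacent j$, while conditioning on a non-collider is required to block it. Since both the skeleton and the d-separation family coincide for equivalent DAGs, so does the set of v-structures, yielding identical patterns.

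Sufficiency (same pattern $\Rightarrow$ equivalence) is the substantive direction. Assuming $\calG,\calG'$ share skeleton and v-structures, I would establish $\dsep{\bfA}{\bfB}{\bfC} \Leftrightarrow (\bfA \sep \bfB \mid \bfC)_{\calG'}$; by symmetry it suffices to show that any path d-connecting some $a \in \bfA$ to some $b \in \bfB$ given $\bfC$ in $\calG$ induces a d-connecting path in $\calG'$. I would take a shortest such active path $\rho = (a = v_0, v_1, \dots, v_m = b)$ in $\calG$; since all its edges lie in the shared skeleton, the identical node sequence is a candidate path in $\calG'$, and it remains only to verify the activation rule at each internal node there. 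At every internal $v_t$ whose neighbors $v_{t-1},v_{t+1}$ on $\rho$ are nonadjacent, the triple is unshielded, so $v_t$ is a collider in $\calG$ exactly when it is one in $\calG'$ because v-structures are shared; the collider/non-collider status therefore transfers verbatim at all unshielded positions.

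The hard part lies precisely where skeleton and v-structure data underdetermine orientation. First, at a \emph{shielded} triple (with $v_{t-1},v_{t+1}$ adjacent) the collider status at $v_t$ need not transfer, as it is governed by orientations equivalence leaves free; second, the activation of a collider requires $v_t$ or one of its \emph{descendants} to lie in $\bfC$, and ancestral relations are genuinely not preserved — already $i \to j$ and $i \gets j$ are equivalent with opposite ancestry. My plan to surmount this is to stop tracking orientations node-by-node and instead pass to the ancestral-moralization criterion for d-separation, under which $\dsep{\bfA}{\bfB}{\bfC}$ holds iff $\bfA$ and $\bfB$ are separated in the moral graph of the subgraph induced on $\mathrm{An}(\bfA \cup \bfB \cup \bfC)$. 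The ``marrying'' edges that moralization adds between co-parents of a common child correspond exactly to v-structures on the shared skeleton, so the residual task is to show that, although the two ancestral subgraphs may differ, the undirected separations induced in their moral graphs coincide; establishing that invariance, which depends only on skeleton and v-structures, is where the real effort concentrates. As a fallback route I would keep in reserve Chickering's covered-edge-reversal characterization, transforming $\calG$ into $\calG'$ through reversals that manifestly preserve both the d-separation family and the pattern.
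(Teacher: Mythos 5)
This theorem is stated in the paper purely as background and is never proved there; it is imported from Verma and Pearl (1991) and Meek (1995). So your attempt can only be judged against the standard literature argument, not against a proof in the paper itself.

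Your necessity direction (equivalence $\Rightarrow$ same pattern) is correct and essentially complete: adjacency of $i,j$ is characterized by the non-existence of any d-separating set, with the parents of whichever node is not a descendant of the other furnishing an explicit separator for nonadjacent pairs, and collider status at an unshielded triple $i \adjacent k \adjacent j$ is characterized by whether $k$ lies in no (respectively, every) d-separating set of $i$ and $j$. Both features are therefore functions of the d-separation family alone. The genuine gap is in the sufficiency direction, and you have identified it rather than closed it. Passing to the moralization criterion does not finish the job, because the ancestral set $\mathrm{An}(\bfA \cup \bfB \cup \bfC)$, and hence the induced subgraph being moralized, is an orientation-dependent object that genuinely differs across same-pattern DAGs; the assertion that the resulting undirected separations nevertheless coincide is not a ``residual task'' but is the theorem itself in lightly disguised form, and your proposal stops exactly where the real proof must begin. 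The fallback you hold in reserve is also problematic as stated: Chickering's covered-edge-reversal characterization of equivalence is itself standardly derived from the Verma--Pearl pattern criterion, so invoking it wholesale is circular. To make that route self-contained you would need to prove from scratch (i) that reversing a covered edge $x \to y$ (one with $\Pa{y} = \Pa{x} \cup \{X_x\}$) preserves the d-separation family --- a direct local verification --- and (ii) that any two DAGs with the same skeleton and the same v-structures are connected by a finite sequence of covered-edge reversals, each preserving the pattern, which is Chickering's induction on the number of differently oriented edges. Item (ii) carries the substantive combinatorial content of the theorem, and it is absent from your plan.
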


Implied by \autoref{thm:equivalence} is the existence of \emph{compelled} and \emph{reversible} edges. An edge $i \to j$ in a DAG $\calG$ is {compelled} if it exists oriented as stated in every DAG in the equivalence class of $\calG$, whereas it is {reversible} if it is directed $j \to i$ in at least one DAG in the equivalence class of $\calG$. \cite{meek1995} detailed a set of sound and complete rules known as \emph{Meek's rules} (R1, R2, R3, and R4) that deterministically extend the pattern of a graph $\calG$ to its \emph{completed partially directed acyclic graph} (CPDAG), a PDAG featuring a directed edge for every compelled edge and an undirected edge for every reversible edge \citep{chickering2002learning}. As the unique representation of its equivalence class, the CPDAG is the structure of interest for structure learning methods in the observational setting.

\subsection{Faithfulness}\label{sec:faithful}

The global Markov property, as stated in \eqref{eq:markov}, defines an avenue for inference regarding the conditional independence relationships in $P$ according to information encoded in its Bayesian network structure $\calG$. As our interest is recovering $\calG$ from data generated from and thus sample estimates of probability distribution $P$, we require the assumption of faithfulness to infer the structure of $\calG$ from $P$. 

\begin{definition}[Faithfulness]\label{def:faithfulness}
A distribution $P$ and a DAG $\calG$ are said to be faithful to each other if all and only the conditional independence relations true in $P$ are entailed by the d-separation statements in $\calG$, i.e.
\begin{align*}
	\dsep{\bfA}{\bfB}{\bfC} \Leftrightarrow \indep{\bfA}{\bfB}{\bfC}.
\end{align*}
\end{definition}

Under faithfulness, we may say in such a case that $\bfA$ and $\bfB$ are separated by $\bfC$, regardless of whether we are referring to d-separation or conditional independence. If $P$ is faithful to $\calG$, then the existence of an edge between any distinct pair of nodes $i$ and $j$ can be necessarily and sufficiently determined by the nonexistence of a separation set of variables that render $i$ and $j$ conditionally independent in $P$. In particular,
\begin{align}\label{eq:exist}
	\text{$i, j \in \bfV$ are not connected in $\calG$} 
	~\Leftrightarrow~
	\exists \bfk \subseteq \bfV \setminus \{i, j \} \text{ such that } \indep{X_i}{X_j}{\bfX_{\bfk}}.
\end{align}

Throughout the development of our methodology, in what we call the population versions of procedures, we assume possession of all conditional independence information in $P$ denoted $\indepinfo$, thus having conditional independence oracles perfectly corresponding to d-separation. For inferring conditional independence from finite samples of discrete data $\calD$ in the sample counterparts, we use the popular $G^2$ log-likelihood ratio test of independence for empirical estimation of conditional independence in $P$ with some significance level threshold $\alpha$, denoting the $G^2$ test statistic for testing $\indep{X_i}{X_j}{\bfX_{\bfk}}$ as $G^2_{ij\mid \bfk}$ \citep{spirtes2000}. We briefly discuss the basic notation for and evaluation of the $G^2$ test in \autoref{sec:evaluations}, referring details and examples to \cite{neapolitan2004} 10.3.1.

\subsection{The PC Algorithm}\label{sec:pc}

The well-known PC algorithm \citep{spirtes1991}, named after its authors, is often considered the gold standard constraint-based structure learning method. The PC algorithm first efficiently estimates a skeleton, reducing the criterion stated in \eqref{eq:exist} by leveraging sparsity. Let $\bfN_i^\calG = \{X_j \in \bfX: \text{$i$ and $j$ are connected in $\calG$} \}$ be the \emph{neighbors}, or {adjacencies}, of node $i$ in a graph $\calG = (\bfV, \bfE)$.
If $\calG$ is a DAG, the following is evident from the Markov condition: 
\begin{align}\label{eq:exist2}
	\begin{split}
	\text{$i, j \in \bfV$ are not connected in $\calG$} 
		~\Leftrightarrow~
		&\exists \bfX_\bfk \subseteq {\bfN_i^\calG \setminus \{X_j\} } \text{ or } \exists \bfX_\bfk \subseteq {\bfN_j^\calG \setminus \{X_i\} } \\ 
		&\text{ such that } \indep{X_i}{X_j}{\bfX_{\bfk}}.
	\end{split}
\end{align}

\begin{algorithm}[h]
\noindent
\begin{minipage}{\textwidth}
\renewcommand*\footnoterule{}
\begin{savenotes}  
\caption{\texttt{PC-skeleton($\indepinfo$)} (PC-stable implementation; population version)}
\label{alg:pc}
\begin{algorithmic}[1]
\Require{conditional independence information $\indepinfo$}\footnote{As discussed, in the population versions of procedures we assume possession of conditional independence oracles. For finite-sample execution, we replace $\indepinfo$ with data samples $\calD$ from which conditional independence relationships are inferred using a consistent test and some threshold $\alpha$.}
\Ensure undirected graph $\calG$ 
\State{form the complete undirected graph $\calG = (\bfV, \bfE)$ over nodes $\bfV = \{k : X_k \in \bfX \}$}\label{algl:form}
\State{initialize $\bfS = \emptyset$}\label{algl:initialize_s}
\State{initialize $l = 0$}\label{algl:l}
\Repeat\label{algl:pc_repeat}
\State{set $\calG^\prime = \calG$ to fix adjacencies}\label{algl:pc_fix}
\ForAll{unordered node pairs $i, j$ adjacent in $\calG^\prime$ with $\abs{\bfN_i^{\calG^\prime} \setminus \{X_j\} }$ or $\abs{\bfN_j^{\calG^\prime} \setminus \{X_i\} } \geq l$}\label{algl:distinct_pairs}
\ForAll{unique subsets $\bfX_\bfk \subseteq \bfN_i^{\calG^\prime} \setminus \{X_j\}$ and $\bfX_\bfk \subseteq \bfN_j^{\calG^\prime} \setminus \{X_i\}$ of size $\abs{\bfk} = l$}\label{algl:subsets}
\If{$\indep{X_i}{X_j}{\bfX_{\bfk}}$}\label{algl:pc_update}
\State store separation set $\bfS(i, j) = \bfS(j, i) = \bfk$\label{algl:sep}
\State disconnect $i$ and $j$ in $\calG$ and continue to the next pair of nodes
\EndIf\label{algl:end_pc_update}
\EndFor
\EndFor\label{algl:end_distinct_pairs}
\State{$l = l + 1$}
\Until{there is no node $i$ with $\abs{\bfN_i^{\calG}} - 1 \geq l$, or $l > m$ for some user-specified $m$}\label{algl:pc_until}
\end{algorithmic}
\end{savenotes}  
\end{minipage}
\end{algorithm}

\noindent For easy reference in our algorithm description, we detail an implementation of the skeleton estimation step of the PC algorithm known as PC-stable in \autoref{alg:pc} \citep{colombo14a}. The key difference from the original PC algorithm is that in line \ref{algl:pc_fix}, the adjacencies are fixed in $\calG^\prime$ such that the considerations of adjacent node pairs within the outermost loop (lines \ref{algl:pc_repeat}-\ref{algl:pc_until}) become order-independent and thus executable in parallel. We further discuss parallel execution of the PC algorithm in \autoref{sec:skeleton}. Note that for every node $i$, $\bfN_i^\calG \subseteq \bfN_i^{\calG^\prime}$ for $\bfN_i^{\calG^\prime}$ in any stage in \autoref{alg:pc}, preserving the general design of the original PC skeleton learning method by ensuring the exhaustive investigation of \eqref{eq:exist2} and thus retaining its theoretical properties. Hereafter, when we discuss the PC algorithm, we refer to the PC-stable implementation.

After determining the skeleton of a DAG $\calG$, knowledge about the conditional independence relationships between variables (namely, the accrued separation sets $\bfS$) can be used to detect the existence of v-structures and orient the skeleton to the pattern of $\calG$. Recovery of the CPDAG of $\calG$ can then be achieved by repeated application of Meek's rules \citep{meek1995}. This process, which we refer to as \texttt{skel-to-cpdag} (\autoref{alg:cpdag} in \autoref{sec:orient}), is guaranteed to orient the skeleton of a DAG $\calG$ to its CPDAG given accurate conditional independence information entailed by $\calG$. For details regarding constraint-based edge orientation, see \autoref{sec:orient}.

The complete PC(-stable) algorithm consists of skeleton estimation according to \autoref{alg:pc} followed by edge orientation according to \autoref{alg:cpdag}, and is well-known to be sound and complete for CPDAG estimation \citep{kalisch2007, colombo14a}.

\section{The pPC and PATH Algorithms}\label{sec:ppc_path}

In constraint-based methods, the computational expense of edge orientation has been noted to be generally insignificant compared to that of skeleton estimation (see \autoref{sec:ppc}) \citep{chickering2002learning, madsen2017}. As such, we develop the partitioned PC algorithm (pPC) to reduce the computational expense of skeleton estimation by imposing a partitioned ordering to the conditional independence tests. Similarly, we propose the $p$-value adjacency thresholding (PATH) algorithm that effectively accomplishes the task of parameter tuning by efficiently generating a solution path of estimates from a single execution of pPC or PC.

\subsection{The Partitioned PC Algorithm}\label{sec:ppc}

The pPC algorithm improves on the already desirable efficiency of the PC algorithm while retaining its attractive theoretical properties and empirical structure learning accuracy. The structure follows similarly to the partition, estimation, and fusion (PEF) strategy applied to the PC algorithm in \cite{gu2020learning}. We develop improvements and computational exploits to further increase performance, formulate pPC to retain soundness and completeness, and propose adaptations to address the challenges of learning the structure of discrete Bayesian networks. 

The intuition motivating a partitioned strategy is that any structure learning algorithm that scales worse than linear to $p$ will be able to estimate $\kappa > 1$ subgraphs for node clusters that partition the $p$ nodes faster than a single graph on all nodes. If the $p$ nodes can be reliably partitioned such that the connectivity between clusters is weak relative to within clusters, then we can expect that there will not be many false positive edges (as a result of causal insufficiency) within subgraphs. As a consequence, the adjacencies are expected to be relatively well-estimated, providing a selective candidate set of neighbors to screen the edges between subgraphs. Coupled with the assumed weak connectivity between clusters, the process of determining the existence of edges amongst clusters is expected to be efficient.

The pPC algorithm estimation process proceeds as follows. We partition the $p$ nodes into $\kappa$ clusters using a version of the modified hierarchical clustering algorithm proposed in \cite{gu2020learning} applied with a normalized discrete distance metric, additionally blacklisting marginally independent node pairs. We then apply the PC algorithm to estimate edges within clusters, and filter and refine edges between nodes in different clusters. Finally, we achieve completeness by applying a reduced PC algorithm before orienting the edges.

\subsubsection{Clustering}\label{sec:cluster}

As previously motivated, the task of obtaining an effective partition of the nodes is crucial for the success of the skeleton learning. A partition with many clusters $\kappa$ is desirable for greatest computational benefit in subgraph estimation, but each cluster must be substantive so as to minimally violate causal sufficiency. To accomplish this, the distances between nodes are measured by a normalized mutual information distance metric, and the target number of clusters and initial clusters are chosen adaptively. 

Mutual information, denoted $\I(X_i, X_j)$, serves as a similarity measure between discrete random variables $X_i$ and $X_j$ and may be interpreted as the Kullback-Leibler divergence between the joint probability distribution and the product of their marginals. We obtain a distance measure by inverting the pairwise mutual information after normalizing using the joint entropy ${{\H}}(X_i, X_j)$. In particular, the distance between each pair of variables $X_i$ and $X_j$ is defined as
\begin{align}\label{eq:distance}
	d_{ij} = 
	1 - \frac{{\I}(X_i, X_j)}{{\H}(X_i, X_j)} \in [0, 1].
\end{align}
The proposed distance $d_{ij}$ is a metric in the strict sense as shown by \cite{kraskov2005}, meaning it is symmetric, non-negative, bounded, and satisfies the triangle inequality. In practice, we compute the empirical quantities of the mutual information and joint entropy $\hat{\I}$ and $\hat{\H}$ (see \autoref{sec:evaluations}).

Given our distance matrix $D = (d_{ij})_{p \times p}$, we apply Algorithm 1 in \cite{gu2020learning} with average linkage to determine a cut $l$ for the agglomerative hierarchical clustering of the $p$ nodes. Succinctly described, we choose the highest cut such that the resulting cluster consists of the greatest number of large clusters, defined as node clusters of at least size $0.05p$ according to a loose suggestion by \cite{hartigan1981}. We then merge clusters of size less than $0.05p$ with other small clusters or into large clusters sequentially, ordered by average linkage, until every cluster is a large cluster. For further details regarding the algorithm, we refer to the original paper. The clustering step partitions the $p$ nodes into $\kappa$ clusters, returning the cluster labels $\bfc = \{c_1, \dots, c_p\}$, with $c_i \in \{1, \dots, \kappa\}$ denoting the cluster label of node $i$.

While the pairwise computation of both the mutual information and the joint entropy may seem expensive for the purpose of obtaining a partition, we take advantage of two exploits to accomplish this economically. Observing that $\I(X_i, X_j) =  \H(X_i) + \H(X_j) - \H(X_i, X_j)$, we need only compute the marginal entropies $\H(X_i) = \I(X_i, X_i)$ to derive the joint entropies from the pairwise mutual information, a reduction from $p(p-1)$ computations to $p(p+1) / 2$. Further noting that the discrete unconditional $G^2$ test statistic for investigating the marginal independence between $X_i$ and $X_j$ is computed as $G^2_{ij} = 2n \cdot \hat{\I} (X_i, X_j)$, an initial edge screening can easily be obtained through the evaluation
\begin{align}\label{eq:marginal}
	\begin{split}
	\rmPr(\chi^2_f > 2n \cdot \hat{\I} (X_i, X_j)) > \alpha ~&\Rightarrow~ (X_i \sep X_j)_P ~\\&\Rightarrow~ \text{blacklist the edge $i \adjacent j$}.
	\end{split}
\end{align}
This effectively accomplishes the empty conditioning set ($l = 0$) testing step of the PC algorithm by separating all marginally independent distinct node pairs (\autoref{alg:ppc} line \ref{algl:pc_within}).

\subsubsection{Partitioned Skeleton Estimation}\label{sec:skeleton}

We now apply the PC algorithm skeleton learning phase (\autoref{alg:pc}) to estimate $\kappa$ disconnected undirected subgraphs according to the partition obtained in the clustering step. Practically, independently applying the PC algorithm to each node cluster benefits from at most $\kappa$ processors in parallel processing. Furthermore, the speed-up is limited by the longest estimation runtime, usually corresponding to the largest node cluster. In contrast, the design of the PC-stable implementation (\autoref{alg:pc}) by \cite{colombo14a} allows for parallel investigation of adjacent node pairs in lines \ref{algl:distinct_pairs}-\ref{algl:end_distinct_pairs}, provided that updating the graph estimate is deferred to a synchronization step between repetitions. Several contributions and implementations exist for this approach, referred to as {vertical parallelization}, which addresses the case where the number of variables $p$ is large \citep{pcalg, le2016, scutari2017, zare2020}. Alternatively, a {horizontal parallelization} approach parallelizes across data observations and is preferred when the sample size $n$ is large \citep{madsen2017}. In these parallelization paradigms, due to the large number of distributed tasks, the number of utilizable computing processors is not practically limited, and the computational load is reasonably expected to be evenly distributed. To take advantage of these developments in parallelizing the PC algorithm, we estimate subgraphs within the node clusters by executing \autoref{alg:pc} with the following modifications: (i) form the initial complete undirected graph by only connecting nodes within clusters, (ii) delete edges according to \eqref{eq:marginal}, and (iii) begin investigating candidate conditioning sets of size $l = 1$. The result is an undirected graph $\calG$ on $\bfV$ consisting of $\kappa$ disconnected subgraphs.

At this stage, given a good partition, we expect the node adjacencies to be relatively well-estimated, with the exception of extraneous edge connections within clusters due to the violation of causal sufficiency and missing edge connections between clusters that are disconnected by the partition. Recall from \autoref{sec:cluster} that many pairs, including those between clusters, were removed from consideration via the initial marginal independence filtering according to \eqref{eq:marginal}. We further refine the pairs between clusters through a two-step screening process, our strategy being similar to Algorithm 2 designed by \cite{gu2020learning}, with modifications made for discrete structure learning. Note that this is where we anticipate to derive the most computational advantage over the PC algorithm. Assuming a block structure was successfully detected, we aim to circumvent the lower order conditional independence tests in our investigation of \eqref{eq:exist2} by separating as many between pairs as possible with the currently estimated adjacencies. 

The proposed between cluster screening process is summarized in the following operations to the currently estimated skeleton $\calG = (\bfV, \bfE)$:
\begin{align}
	\label{eq:screen1}
	\bfE &\gets \bfE ~\cup~ \left\{i \adjacent j : c_i \neq c_j \text{, } (X_i \!\not\sep\! X_j)_P \text{, and } \notindep{X_i}{X_j}{\bfN_{i}^\calG \cup \bfN_{j}^\calG} \right\}, \\
	\label{eq:screen2}
	\bfE &\gets \bfE ~\setminus~ \left\{i \adjacent j \in \bfE : c_i \neq c_j \text{, and } \indep{X_i}{X_j}{\bfN_{i}^{\calG} \setminus \{X_j\}} \text{ or } \indep{X_i}{X_j}{\bfN_{j}^{\calG} \setminus \{X_i\}} \right\},
\end{align}
where $c_i$ is the cluster label of $X_i$. The first screen \eqref{eq:screen1} constructively connects the between cluster edges that are dependent marginally, as assessed according to \eqref{eq:marginal}, as well as conditioned on the union of the neighbor sets. With the addition of edges between clusters, the second screen \eqref{eq:screen2} disconnects pairs that are separated by the newly updated adjacencies. As in \autoref{alg:pc}, we fix adjacencies to retain the capacity for parallel investigation of the considered node pairs. Note that every between-cluster edge that is present in the underlying DAG will be connected by \eqref{eq:screen1}, and \eqref{eq:screen2} can only prune false positives. Thus, after this step every node pair will have been considered and, in the population setting, every truly connected edge in the underlying DAG will be connected in $\calG$.

\begin{algorithm}[!b]
\noindent
\begin{minipage}{\textwidth}
\renewcommand*\footnoterule{}  
\begin{savenotes}  
\caption{\texttt{pPC($\indepinfo$)} (population version)}\label{alg:ppc}
\begin{algorithmic}[1]
\Require{conditional independence information $\indepinfo$}
\Ensure{CPDAG estimate $\calG$}
\Statex \textbf{\emph{partitioning}}
\State assign cluster labels $\bfc$ by partitioning all nodes into $\kappa$ clusters as in \autoref{sec:cluster}
\Statex \textbf{\emph{within cluster edge estimation}}
\State execute $\calG \gets \texttt{PC-skeleton($\indepinfo$)}$ (\autoref{alg:pc}), with the following modifications: 
\begin{itemize}[label={},leftmargin=1em]
	\item line \ref{algl:form}: form undirected graph by only connecting node pairs $i, j \in \bfV$ with $c_i = c_j$
	\item line \ref{algl:l}: delete marginally independent pairs from $\calG$ according to \eqref{eq:marginal} and initialize $l = 1$
\end{itemize} \label{algl:pc_within}
\Statex \textbf{\emph{between cluster edge screening}}
\State{initialize $\calG^\prime = \calG$ to fix adjacencies}\label{algl:between_fix1}
\ForAll {unordered node pairs $i, j$ such that $c_i \neq c_j$ and $(X_i \not\sep X_j)_P$}
\If{$\notindep{X_i}{X_j}{\bfN_{i}^{\calG^\prime} \cup \bfN_{j}^{\calG^\prime}}$} \label{algl:ppc_eval1}
\State connect $i$ and $j$ in $\calG$
\Else
\State store separation set in $\bfS(i, j) = \bfS(j, i)$
\EndIf
\EndFor\label{algl:end_screen1}
\State{set $\calG^\prime = \calG$ to update fixed adjacencies}\label{algl:between_fix2}
\ForAll {unordered node pairs $i,j$ adjacent in $\calG$ with $c_i \neq c_j$}
\If{$\indep{X_i}{X_j}{\bfN_{i}^{\calG^\prime} \setminus \{X_j\}}$ or $\indep{X_i}{X_j}{\bfN_{j}^{\calG^\prime} \setminus \{X_i\}}$} \label{algl:ppc_eval2}
\State store separation set in $\bfS(i, j) = \bfS(j, i)$
\State disconnect $i$ and $j$ in $\calG$
\EndIf
\EndFor\label{algl:end_screen2}
\Statex \textbf{\emph{complete edge estimation}}
\State execute $\calG \gets\texttt{PC-skeleton($\indepinfo$)}$ (\autoref{alg:pc}), continuing from $\calG$ and restricting to \eqref{eq:criteria} with the following replacements:\label{algl:ppc_skeleton}
\begin{itemize}[label={},leftmargin=1em]
	\item line \ref{algl:form}: initialize undirected graph with current estimate $\calG$
	\item line \ref{algl:initialize_s}: omit this line
	\item line \ref{algl:l}: initialize $l = 1$
	\item line \ref{algl:subsets}: \textbf{for all} unique subsets ${\bfX_\bfk} \subseteq \bfN_{i}^{\calG^\prime} \setminus \{X_j\}$ and $\bfX_\bfk \subseteq \bfN_{j}^{\calG^\prime} \setminus \{X_i\}$ satisfying $\abs{\bfk} = l$ and criteria \eqref{eq:criteria} \textbf{do}
\end{itemize}
\textbf{\emph{edge orientation}}
\State execute $\calG \gets \texttt{skel-to-cpdag($\calG, \bfS$)}$ (\autoref{alg:cpdag}) to orient $\calG$ \label{algl:ppc_orient}
\end{algorithmic}
\end{savenotes}  
\end{minipage}
\end{algorithm}

\begin{remark}\label{rmk:test}
The formulation of the discrete $G^2$ tests of independence require enumerating and counting across all conditioning variable configurations. An unavoidable consequence is that the computational complexity and memory requirement increase dramatically with increasing conditioning variables, especially when they have a large number of discrete levels. As such, it is of practical interest to restrict the size of conditioning sets to some user-specified $m > 0$. Thus, for the evaluations in \eqref{eq:screen1} and \eqref{eq:screen2} (\autoref{alg:ppc} lines \ref{algl:ppc_eval1} and \ref{algl:ppc_eval2}), if for example $\abs{\bfN_i^\calG \cup \bfN_j^\calG} > m$, we instead investigate the unique sets ${\bfX_\bfk} \subseteq \bfN_{i}^\calG \cup \bfN_{j}^\calG$ such that $\abs{\bfk} = m$.
\end{remark}

However, it is important to note that the constraint-based criterion for edge existence \eqref{eq:exist2} has not yet been fully investigated for all node pairs. The tests for edges within clusters only considered conditioning sets consisting of nodes within the same clusters, and tests for edges between clusters were limited to the empty set, $\bfN_i^\calG \cup \bfN_j^\calG$, $\bfN_i^\calG \setminus \{X_j\}$, and $\bfN_j^\calG \setminus \{X_i\}$. In particular, for each remaining adjacent node pair $i \adjacent j \in \bfE$, the possible separation sets that have not been evaluated are limited to either of the following cases, if any:
\begin{align}
	\begin{split}
	\label{eq:criteria}
	&\text{(a) $c_i = c_j$ (within clusters), sets $\bfk$ where $\exists k \in \bfk$ such that $c_k \neq c_i$;} \\
	&\text{(b) $c_i \neq c_j$ (between clusters), sets $\bfk \neq \emptyset$ not defined in \eqref{eq:screen1} or \eqref{eq:screen2}.}
	\end{split}
\end{align}
The most straightforward continuation to achieve completeness in our partitioned skeleton learning process would be to exhaustively evaluate the dependence of the remaining connected node pairs in $\calG$ conditioned on the remaining conditioning sets. We accomplish this by restarting the PC algorithm on the current skeleton, evaluating independence conditioned on sets restricted to criteria \eqref{eq:criteria}, before finally orienting the resulting skeleton to a CPDAG with \texttt{skel-to-cpdag} (\autoref{alg:cpdag}) to complete structure learning. 

The resulting pPC algorithm is detailed in \autoref{alg:ppc}, and an example of its execution is illustrated in \autoref{fig:ppc_eg}. The pPC algorithm first estimates the edges within the clusters to obtain $\kappa = 3$ disconnected subgraphs (\autoref{fig:ppc_eg_a}), and then constructively screens the edges between clusters according to \eqref{eq:screen1}, which recovers all true positives but also connects some false positive edges (\autoref{fig:ppc_eg_b}). Two false positive edges $X_6 \adjacent X_8$ and $X_8 \adjacent X_9$, are pruned by the second screening of edges between clusters \eqref{eq:screen2}, but $X_6 \adjacent X_9$ cannot be separated by either of the complete neighbor sets since they both include $X_{12}$, thus requiring $\indep{X_6}{X_9}{X_3}$ to be investigated in line \ref{algl:ppc_skeleton} (\autoref{fig:ppc_eg_c}).

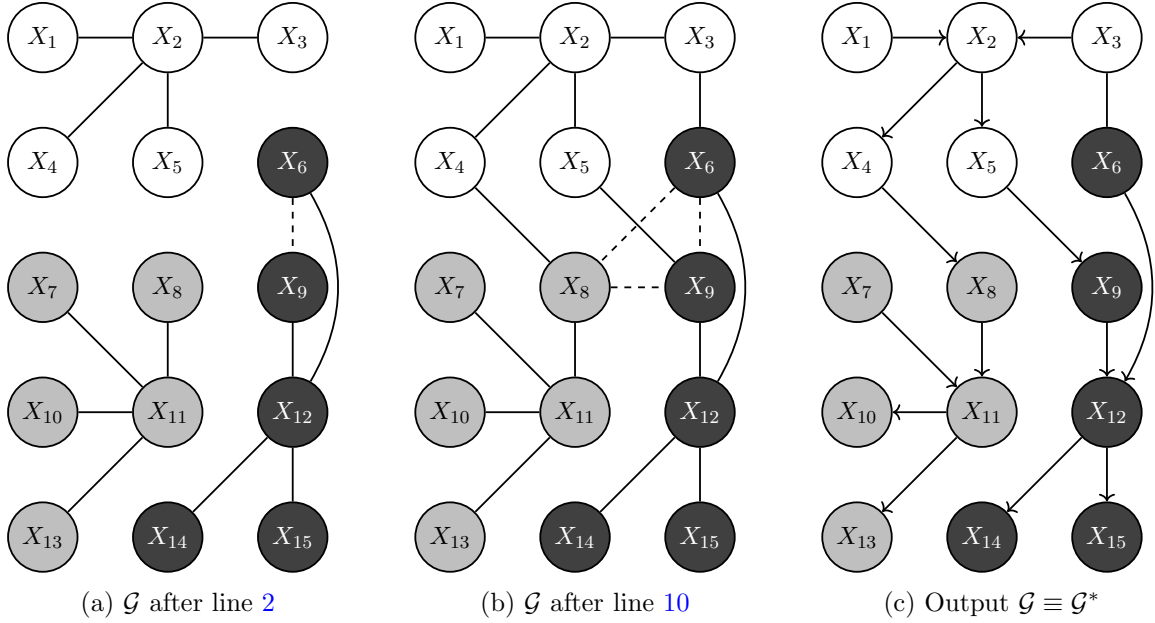
\begin{figure}[t]
\begin{subfigure}[b]{0.32\textwidth}
\centering
\resizebox{0.9\linewidth}{!}{
\noindent\begin{tikzpicture}[	thick, node distance={5em},
							one/.style = {draw, circle, minimum size=2.78em}, 
							two/.style={draw, circle, fill=lightgray, minimum size=2.78em},
							three/.style={draw, circle, fill=darkgray, text=white, minimum size=2.78em}] 
\node[one] (1) {$X_1$};
\node[one] (2) [right of=1] {$X_2$};
\node[one] (3) [right of=2] {$X_3$}; 
\node[one] (4) [below of=1] {$X_4$};
\node[one] (5) [right of=4] {$X_5$};
\node[three] (6) [right of=5] {$X_6$};
\node[two] (7) [below of=4] {$X_7$};
\node[two] (8) [right of=7] {$X_8$};
\node[three] (9) [right of=8] {$X_9$};
\node[two] (10) [below of=7] {$X_{10}$};
\node[two] (11) [right of=10] {$X_{11}$};
\node[three] (12) [right of=11] {$X_{12}$};
\node[two] (13) [below of=10] {$X_{13}$};
\node[three] (14) [right of=13] {$X_{14}$};
\node[three] (15) [right of=14] {$X_{15}$};
\draw (1) -- (2);
\draw (3) -- (2);
\draw (2) -- (4);
\draw (2) -- (5);
\draw (7) -- (11);
\draw (8) -- (11);
\draw (11) -- (10);
\draw (11) -- (13);
\draw (6) to [out=300,in=60,looseness=1] (12);
\draw (9) -- (12);
\draw (12) -- (14);
\draw (12) -- (15);
\draw[dashed] (6) -- (9);
\end{tikzpicture}
}
\caption{$\calG$ after line \ref{algl:pc_within}}
\label{fig:ppc_eg_a}
\end{subfigure}
\begin{subfigure}[b]{0.32\textwidth}
\centering
\resizebox{0.9\linewidth}{!}{
\noindent\begin{tikzpicture}[	thick, node distance={5em},
							one/.style = {draw, circle, minimum size=2.78em}, 
							two/.style={draw, circle, fill=lightgray, minimum size=2.78em},
							three/.style={draw, circle, fill=darkgray, text=white, minimum size=2.78em}] 
\node[one] (1) {$X_1$};
\node[one] (2) [right of=1] {$X_2$};
\node[one] (3) [right of=2] {$X_3$}; 
\node[one] (4) [below of=1] {$X_4$};
\node[one] (5) [right of=4] {$X_5$};
\node[three] (6) [right of=5] {$X_6$};
\node[two] (7) [below of=4] {$X_7$};
\node[two] (8) [right of=7] {$X_8$};
\node[three] (9) [right of=8] {$X_9$};
\node[two] (10) [below of=7] {$X_{10}$};
\node[two] (11) [right of=10] {$X_{11}$};
\node[three] (12) [right of=11] {$X_{12}$};
\node[two] (13) [below of=10] {$X_{13}$};
\node[three] (14) [right of=13] {$X_{14}$};
\node[three] (15) [right of=14] {$X_{15}$};
\draw (1) -- (2);
\draw (3) -- (2);
\draw (2) -- (4);
\draw (2) -- (5);
\draw (7) -- (11);
\draw (8) -- (11);
\draw (11) -- (10);
\draw (11) -- (13);
\draw (6) to [out=300,in=60,looseness=1] (12);
\draw (9) -- (12);
\draw (12) -- (14);
\draw (12) -- (15);
\draw (4) -- (8);
\draw (3) -- (6);
\draw (5) -- (9);
\draw[dashed] (6) -- (8);
\draw[dashed] (6) -- (9);
\draw[dashed] (8) -- (9);
\end{tikzpicture}
}
\caption{$\calG$ after line \ref{algl:end_screen1}}
\label{fig:ppc_eg_b}
\end{subfigure}
\begin{subfigure}[b]{0.32\textwidth}
\centering
\resizebox{0.9\linewidth}{!}{
\noindent\begin{tikzpicture}[	thick, node distance={5em},
							one/.style = {draw, circle, minimum size=2.78em}, 
							two/.style={draw, circle, fill=lightgray, minimum size=2.78em},
							three/.style={draw, circle, fill=darkgray, text=white, minimum size=2.78em}] 
\node[one] (1) {$X_1$};
\node[one] (2) [right of=1] {$X_2$};
\node[one] (3) [right of=2] {$X_3$}; 
\node[one] (4) [below of=1] {$X_4$};
\node[one] (5) [right of=4] {$X_5$};
\node[three] (6) [right of=5] {$X_6$};
\node[two] (7) [below of=4] {$X_7$};
\node[two] (8) [right of=7] {$X_8$};
\node[three] (9) [right of=8] {$X_9$};
\node[two] (10) [below of=7] {$X_{10}$};
\node[two] (11) [right of=10] {$X_{11}$};
\node[three] (12) [right of=11] {$X_{12}$};
\node[two] (13) [below of=10] {$X_{13}$};
\node[three] (14) [right of=13] {$X_{14}$};
\node[three] (15) [right of=14] {$X_{15}$};
\draw[->] (1) -- (2);
\draw[->] (3) -- (2);
\draw[->] (2) -- (4);
\draw[->] (2) -- (5);
\draw (3) -- (6);
\draw[->] (4) -- (8);
\draw[->] (5) -- (9);
\draw[->] (7) -- (11);
\draw[->] (8) -- (11);
\draw[->] (11) -- (10);
\draw[->] (11) -- (13);
\draw[->] (6) to [out=300,in=60,looseness=1] (12);
\draw[->] (9) -- (12);
\draw[->] (12) -- (14);
\draw[->] (12) -- (15);
\end{tikzpicture}
}
\caption{Output $\calG \equiv \calG^*$}
\label{fig:ppc_eg_c}
\end{subfigure}
\caption{Example of various stages of pPC (\autoref{alg:ppc}). Node shading denotes clusters, and dashed lines represent false positive edges. The CPDAG of the underlying DAG is $\calG^*$ in (c).}
\label{fig:ppc_eg}
\end{figure}

The pPC algorithm (\autoref{alg:ppc}) is sound and complete, formalized in \autoref{thm:ppc}, which we prove in \autoref{sec:proofs}.

\begin{theorem}\label{thm:ppc}
	Suppose that probability distribution $P$ and DAG $\calG^*$ are faithful to each other. Then given conditional independence oracles, the output of the partitioned PC algorithm (\autoref{alg:ppc}) is the CPDAG that represents the equivalence class of $\calG^*$.
\end{theorem}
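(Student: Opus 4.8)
The plan is to reduce \autoref{thm:ppc} to two claims about the undirected graph $\calG$ that is handed to line \ref{algl:ppc_orient}, and then invoke the established correctness of \texttt{skel-to-cpdag}. Since \autoref{alg:cpdag} is guaranteed to return the CPDAG of a DAG $\calG^*$ whenever it is supplied with the true skeleton together with valid separation sets, it suffices to show that (i) the skeleton of $\calG$ at line \ref{algl:ppc_orient} equals the skeleton of $\calG^*$, and (ii) every stored set $\bfS(i,j)$ genuinely d-separates $i$ and $j$ in $\calG^*$. Claim (ii) is immediate for the population version: with conditional independence oracles, faithfulness (\autoref{def:faithfulness}) identifies every oracle answer with a d-separation statement, so any set that triggers a deletion (and is therefore stored) truly d-separates the pair. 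Because the standard unshielded-triple lemma asserts that \emph{any} valid d-separator $\bfS(i,j)$ correctly classifies whether $i \to k \gets j$ is a v-structure (via the test $k \notin \bfS(i,j)$), claim (ii) makes the v-structure detection and the subsequent application of Meek's rules inside \autoref{alg:cpdag} correct. The work thus concentrates on (i), and I would note at the outset that the argument goes through for an \emph{arbitrary} partition $\bfc$, so the quality of the clustering affects only efficiency, not correctness.

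For (i) I would first establish the invariant that pPC never deletes a true edge, i.e.\ $\bfN_i^{\calG^*} \subseteq \bfN_i^\calG$ at every stage of \autoref{alg:ppc}. Edges disappear only when some conditioning set returns independence; by faithfulness a pair adjacent in $\calG^*$ is d-separated by no set, hence never found conditionally independent, so it survives the marginal screen \eqref{eq:marginal}, the within-cluster PC phase, the screens \eqref{eq:screen1}--\eqref{eq:screen2}, and the completion phase. In particular, since a truly adjacent between-cluster pair is not independent given $\bfN_i^\calG \cup \bfN_j^\calG$, screen \eqref{eq:screen1} reconnects it, so no true edge is lost to the partition. This yields the ``no false negatives'' half of the skeleton claim and, crucially, the monotonicity that lets the conditioning-set search range over supersets of the true neighbor sets.

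The substantive half is ``no false positives'': every pair $i,j$ non-adjacent in $\calG^*$ is eventually separated and deleted, with a separation set stored. By \eqref{eq:exist2} and the invariant there is a separating subset $\bfk \subseteq \bfN_i^{\calG^*}\setminus\{X_j\} \subseteq \bfN_i^\calG \setminus \{X_j\}$ (or the symmetric statement in $j$). I would then argue that, pooled across phases, pPC examines every subset of $\bfN_i^\calG \setminus \{X_j\}$ and of $\bfN_j^\calG\setminus\{X_i\}$, so $\bfk$ is reached unless the edge is deleted even earlier. This is exactly what criteria \eqref{eq:criteria} are designed to guarantee: for within-cluster pairs the within-cluster PC run covers all conditioning sets lying inside the cluster, and case (a) supplies every set containing an out-of-cluster node; for between-cluster pairs the marginal test, \eqref{eq:screen1}, and \eqref{eq:screen2} cover $\emptyset$, $\bfN_i^\calG\cup\bfN_j^\calG$, $\bfN_i^\calG\setminus\{X_j\}$, and $\bfN_j^\calG\setminus\{X_i\}$, and case (b) supplies all remaining nonempty sets. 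Since $|\bfk| \le |\bfN_i^\calG|-1$ while the edge is present, the termination rule of \autoref{alg:pc} never halts before level $|\bfk|$, so $\bfk$ is tested and the edge removed. Together with the invariant this gives skeleton equality and completes (i).

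The hard part will be the bookkeeping behind this exhaustiveness argument: the neighbor sets $\bfN_i^\calG$ are not static but shrink within clusters and grow across clusters as the algorithm proceeds, so I must verify that criteria \eqref{eq:criteria} partition precisely the conditioning sets left untested after the first three phases \emph{relative to the final neighbor sets}, not to some intermediate configuration. The monotone invariant $\bfN_i^{\calG^*}\subseteq\bfN_i^\calG$ is the lever that makes this work, since it ensures no true-neighbor separator is discarded as the sets evolve; the delicate points are confirming that sets mixing within- and between-cluster nodes are examined exactly once in the completion phase, and that the size cap $m$ of \autoref{rmk:test}, taken large enough (no smaller than the largest separator required, as in ordinary PC), omits no needed separating set in the population analysis.
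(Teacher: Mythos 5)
Your proposal is correct and takes essentially the same route as the paper's proof: both establish the supergraph invariant $\bfN_i^{\calG^*} \subseteq \bfN_i^{\calG}$ at every stage via the Markov condition and faithfulness, then argue that for every truly nonadjacent pair some separating set inside the true neighborhoods (guaranteed by \eqref{eq:exist2}) is necessarily evaluated by the within-cluster PC phase, the between-cluster screens, or the completion phase restricted to criteria \eqref{eq:criteria}, and finally invoke the soundness and completeness of \texttt{skel-to-cpdag} with the stored (oracle-accurate) separation sets. Your extra attention to the evolving neighbor sets and to the conditioning-set size cap $m$ of \autoref{rmk:test} addresses bookkeeping the paper's proof leaves implicit, but the decomposition and all key steps coincide.
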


Its implication is the asymptotic consistency of pPC for fixed $p$ as $n\to\infty$, given a consistent conditional independence test such as the $G^2$ test \citep{cressie1989}. Note that while the computational savings may depend heavily on the quality of the partitioning, \autoref{thm:ppc} holds regardless of the obtained clusters.

\subsection{\protect\boldmath $p$-value Adjacency Thresholding}\label{sec:path}

Despite the attractive theoretical properties of algorithms such as pPC and PC, it is well-known that in practice, constraint-based algorithms suffer from the multiple testing problem, a challenge exacerbated when $p$ is large \citep{spirtes2010}. The effect of individual errors can compound throughout the conditional independence testing process, leading to erroneous inferences regarding both the existence and orientation of edges \citep{koller2009, spirtes2010}. In addition to deteriorated quality of structures estimated, mistakes in conditional independence inferences can result in invalid PDAG estimates that do not admit a consistent extension (see \autoref{sec:equivalence}). In practical applications, the choice of conditional independence test threshold $\alpha$ can significantly control the sparsity and quality of the resulting estimate. To the best of our knowledge, proposed theoretical thresholds depend on unknown quantities and are not practically informative, such as in \cite{kalisch2007}. Empirically, the optimal choice of $\alpha$ varies depending on factors such as the sample size and the structure and parameters of the underlying Bayesian network, and no universally well-performing value is known. 

We propose the p-value adjacency thresholding (PATH) algorithm to generate and select from a CPDAG solution path across various values of $\alpha$ from a single execution of the pPC algorithm, or indeed from any constraint-based structure learning algorithm that is able to obtain the following. Define the maximum p-values $\Phi=(\Phi_{ij})$ such that $\Phi_{ij}$ is the maximum p-value obtained by the $G^2$ test of independence between $i$ and $j$ over all conditioning sets $\bfk \in \calK_{ij}$ visited in the algorithm, as well as the corresponding separation sets, thus extending the definition of $\bfS$. That is, for all distinct node pairs $i, j$, 
\begin{align}
	\begin{split}
	\label{eq:sep}
	\Phi_{ij} = \Phi_{ji} &\coloneqq \max_{\bfk \in \calK_{ij}} \rmPr \left( \chi^2_f > G^2_{ij|\bfk} \right),  \\
	\bfS(i, j) = \bfS(j, i) &\coloneqq \argmax_{\bfk \in \calK_{ij}} \rmPr \left(\chi^2_f > G^2_{ij|\bfk} \right),
	\end{split}
\end{align}
for degrees of freedom $f$ corresponding to the test of independence between $i$ and $j$ conditioned on $\bfk$. For a connected node pair $i \adjacent j$, $\bfS(i, j)$ may be considered the conditioning set closest to separating $i$ and $j$, and $\Phi_{ij}$ measures how close. 

The process itself is straightforward: for a sequence of significance levels $\{\alpha^{(t)}\}$, we obtain updated PDAG estimates $\calG^{(t)}$ by thresholding the maximum achieved p-values $\Phi$ to obtain skeleton estimates with edge sets $\bfE^{(t)} = \{i \adjacent j: \Phi_{ij} \leq \alpha^{(t)} \}$ and then orienting them to CPDAGs according \texttt{skel-to-cpdag} (\autoref{alg:cpdag}) with the corresponding separation information $\bfS^{(t)} = \{\bfS(i, j) : \Phi_{ij} > \alpha^{(t)} \}$. The quality of the estimates are then evaluated by a scoring criterion and the highest-scoring network is returned. In what follows, we develop the choice of the threshold values and present the strategy for estimate generation and selection. 

We begin with a graph estimate obtained by executing the pPC algorithm with some maximal threshold value $\alpha$. The goal is to start with the densest graph so that the elements of $\Phi$ and $\bfS$ in \eqref{eq:sep} are maximized over a larger number of visited conditioning sets $\abs{\calK}$. We generate a sequence of $\tau$ values decreasing from $\alpha^{(1)} \coloneqq \alpha$ to some minimum threshold value $\alpha^{(\tau)}$. This sequence may be incremented according to some linear or log-linear scale, but we choose to achieve maximal difference in sparsity amongst estimates by utilizing the information in $\Phi$. Given each $\alpha^{(t)}$ corresponding to estimate $\calG^{(t)} = (\bfV, \bfE^{(t)})$, we choose $\alpha^{(t+1)}$ such that 
\begin{align}\label{eq:edge_diff}
	\abs{\bfE^{(t)}} - \abs{\bfE^{(t+1)}} \approx \frac{\abs{\bfE^{(1)}} - \abs{\bfE^{(\tau)}}}{\tau - 1}.
\end{align}
Noting that $\abs{\bfE^{(t)}} = \sum_{i < j} \ind{\Phi_{ij} \leq \alpha^{(t)}}$ for indicator function $\ind{\cdot}$, it is easy to see that the sequence $\alpha^{(1)}, \dots, \alpha^{(\tau)}$ can be straightforwardly obtained using the order statistics of the elements of $\Phi$. 

Once a solution path of CPDAG estimates $\{ \calG^{(t)} : t \in \{1, \dots, \tau \} \}$ is obtained, we select the highest quality estimate by means of score-based selection. The \emph{Bayesian information criterion} (BIC) is a penalized log-likelihood score derived from the asymptotic behavior of Bayesian network models, with established consistency \citep{schwarz1978}. The formulation of the BIC score makes clear its \emph{score decomposability} (see \eqref{eq:penll}); that is, the score $\phi(\calG, \calD)$ can be computed as the sum of the scores of the individual variables with respect to their parents in $\calG$: $\phi (\calG, \calD) = \sum_{i=1}^p \phi(X_i, \Pa{i})$. The BIC score is additionally \emph{score equivalent}, evaluating all Markov equivalent DAGs as of identical quality, with a higher value indicating a better fit to the data. 

Due to score equivalence, it is sufficient to evaluate each CPDAG $\calG^{(t)}$ with any arbitrary DAG $\tilde{\calG}^{(t)}$ in its equivalence class, called a \emph{consistent extension} of $\calG^{(t)}$. \cite{dor1992simple} proposed a simple algorithm, which we denote \texttt{pdag-to-dag}, that obtains such an extension by orienting the undirected (reversible) edges in $\calG^{(t)}$ without inducing directed cycles or introducing new v-structures, and is guaranteed to succeed if a consistent extension exists (further discussed in \autoref{sec:hgi}). After obtaining these DAG extensions, in practice, score decomposability can be leveraged to avoid scoring $p$ nodes for $\tau$ estimates by setting $\Delta^{(1)} = 0$ and computing the score differences between estimates $\Delta^{(t)} = \phi(\tilde{\calG}^{(t)}, \calD) - \phi(\tilde{\calG}^{(t-1)}, \calD)$ for $t = 2, \dots, \tau$, caching computed node score to avoid redundant computations.
The best solution can then be straightforwardly obtained according to
	\begin{align*}
		t^* 	&= \argmax_{t \in \{1, \dots, \tau\}} \phi(\calG^{(t)}, \calD) 
			=\argmax_{t \in \{1, \dots, \tau\}} \sum_{r = 1}^t \Delta^{(r)}.
	\end{align*}
We detail the PATH solution path strategy in \autoref{alg:path}.

\begin{algorithm}
\caption{\texttt{PATH($\Phi$, $\bfS$, $\calD$, $\tau$, $\alpha^{(\tau)}$)} (sample version)}
\label{alg:path}
\begin{algorithmic}[1]
\Require{maximum p-values $\Phi$, separation sets $\bfS$, data $\calD$, number of estimates $\tau$, minimum threshold $\alpha^{(\tau)}$}
\Ensure{CPDAG $\calG^{(t^*)}$, CPDAG solution path $\{\calG^{(t)} : t \in \{1, \dots, \tau\}\}$}
\State generate a decreasing sequence of $\tau$ values $\{\alpha^{(t)}\}$ from $\alpha^{(1)} \coloneqq \max_{i, j} \Phi_{ij}$ to $\alpha^{(\tau)}$ by \eqref{eq:edge_diff}\label{algl:alpha}
\For {$t = 1, 2,  \dots, \tau$}
\State obtain skeleton $\calG^{(t)} = (\bfV, \bfE^{(t)})$ by thresholding: $\bfE^{(t)} = \{i \adjacent j : \Phi_{ij} \leq \alpha^{(t)} \}$
\State obtain separation sets $\bfS^{(t)} = \{\bfS(i, j) : \Phi_{ij} > \alpha^{(t)} \}$
\State{execute $\calG^{(t)} \gets \texttt{skel-to-cpdag($\calG^{(t)}, \bfS^{(t)}$)}$ (\autoref{alg:cpdag})}\label{algl:path_cpdag}
\State{obtain DAG extension $\tilde{\calG}^{(t)} \gets \texttt{pdag-to-dag($\calG^{(t)}$)}$ \citep{dor1992simple}} \label{algl:path_extend}
\If{$t = 1$}
\State $\Delta^{(t)} = 0$
\Else
\State compute score difference $\Delta^{(t)} = \phi(\tilde{\calG}^{(t)}, \calD) - \phi(\tilde{\calG}^{(t-1)}, \calD)$
\EndIf \label{algl:path_end_if}
\EndFor
\State select the best estimate $t^* = \argmax_{t \in \{1, \dots, \tau\}} \sum_{r = 1}^t \Delta^{(r)}$ \label{algl:score_pdag}
\State return $\calG^{(t^*)}$ and $\{\calG^{(t)} : t \in \{1, \dots, \tau\}\}$
\end{algorithmic}
\end{algorithm}

\begin{remark}\label{rmk:valid}

While \texttt{pdag-to-dag} is guaranteed to extend a PDAG $\calG$ to a DAG if any consistent extension exists, in the presence of finite-sample error, \autoref{alg:cpdag} may obtain a PDAG estimate $\calG$ for which no such extension exists. In such a case, we say that $\calG$ does not admit a consistent extension, and refer to it as an invalid CPDAG. Such PDAGs contain undirected edges that cannot be oriented without inducing cycles or constructing additional v-structures, and do not encode any probabilistic model of $P$. To account for these, we restrict the candidate graphs considered in line~\ref{algl:score_pdag} to valid CPDAGs. In the case that no valid CPDAG is obtained, we obtain semi-arbitrary DAG extensions $\tilde{\calG}^{(t)}$ by first applying the algorithm by \cite{dor1992simple} and randomly directing as many remaining undirected edges as possible without introducing any cycles, finally removing edges that cannot be oriented. The resulting DAGs are used to score the PDAGs, and the original PDAGs are returned in the output as these structures are nonetheless interpretable even as incomplete dependency structures. 

\end{remark}

The computational expense of executing \autoref{alg:path} can reasonably be expected to be insignificant compared to any constraint-based algorithm to which it is attached, supported by our empirical results in \autoref{sec:hgi_results}. Our exploitation of score decomposability reduces the order of score computations far below the worst case of $O(\tau p)$, which is already much more efficient than any favorable order of conditional independence tests such as polynomial with $p$. As for the $\tau$ executions of \texttt{skel-to-cpdag} and \texttt{pdag-to-dag}, \cite{chickering2002learning} found the computational cost of applying the edge orientation heuristics to be insignificant, as does \cite{madsen2017} for \texttt{skel-to-cpdag} in comparison to skeleton learning. As such, the computational cost for score-based selection from the solution path can be expected to be essentially inconsequential, an assertion further validated in our experiments. 

In the finite-sample setting, the thresholded estimate $\calG^{(t)}$ corresponding to $\alpha^{(t)} < \alpha^{(1)}$ typically does not correspond exactly to the estimate obtained by directly executing PC or pPC with significance level $\alpha^{(t)}$. Adjacencies that would survive to the later stages of the learning process with threshold $\alpha^{(1)}$ may be removed much earlier with a stricter threshold $\alpha^{(t)}$, resulting in fewer conditional independence evaluations and different conditioning sets considered for each node pair. Nonetheless, our empirical results demonstrate the potential of the solution path generated by thresholding, showing that PATH applied to pPC and PC is able to produce estimates of competitive quality to the best of those obtained by multiple executions with various $\alpha$. 

In the large-sample limit, the correctness of \eqref{eq:exist2} and \texttt{skel-to-cpdag} implies the asymptotic consistency of PC and pPC under certain conditions without any solution path (\autoref{lma:ppc_consistent}, \autoref{sec:proofs}). This property can be achieved with a consistent conditional independence test by controlling type~I error with $\alpha_n \to 0$ as $n \to \infty$, thus rendering the test Chernoff-consistent \citep[Definition 2.13]{cressie1989, shao2003}. As $\alpha_n$ is not practically informative due to its implicit dependence on $n$, \autoref{alg:path} contributes an element of accessibility to this asymptotic result in the form of the following theorem, a proof for which can be found in \autoref{sec:proofs}.

\begin{theorem}\label{thm:path}
Suppose the distribution $P$ is fixed and faithful to a DAG with CPDAG $\calG^*$, $\calD$ is data containing $n$ i.i.d. samples from $P$, and $\phi$ is a consistent score.
Let $\Phi_n=(\Phi_{n,ij})$ and $\bfS_n$ be the maximum p-values and corresponding separation sets recorded for any exhaustive investigation of \eqref{eq:exist2}, executed with a consistent test applied with threshold $\alpha_n$. Let $\hcalG_n^{(t^*)}(\alpha_n)$ be the selected estimate of \autoref{alg:path} with parameters $\tau = 1 + \sum_{i < j} \ind{\Phi_{n,ij} \leq \alpha_n}$ and $\alpha^{(\tau)} = 0$ applied to $\Phi_n$ and $\bfS_n$. Then there exists $a_n \to 0$ as $n \to \infty$ such that if $\alpha_n \geq a_n$ when $n$ is large, 
\begin{align}\label{eq:thm_path}
	\lim_{n \to \infty} \rmPr \left[ \hcalG_n^{(t^*)} (\alpha_n) = \calG^* \right] = 1. 
\end{align}
In particular, \eqref{eq:thm_path} holds if $\alpha_n$ is fixed to some constant $\alpha \in (0, 1)$ for all $n$.
\end{theorem}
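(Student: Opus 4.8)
The plan is to prove the statement in two movements. First, I would show that the true CPDAG $\calG^*$ appears as one of the estimates $\calG^{(t)}$ in the solution path of \autoref{alg:path} with probability tending to one; second, that the consistent score $\phi$ selects exactly that estimate, so $t^*$ indexes $\calG^*$. Since $p$ is fixed, the path contains at most $\binom{p}{2}+1$ candidate estimates, a finite number, which lets me close the second movement with a union bound. This mirrors and extends the base skeleton-consistency result \autoref{lma:ppc_consistent}, the point being that the score-based selection over a path removes the need for $\alpha_n \to 0$.

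For the first movement I would analyze the recorded maximum p-values $\Phi_{n,ij}$ of \eqref{eq:sep}. For a true edge $i\adjacent j$ of $\calG^*$, faithfulness (\autoref{def:faithfulness}) forbids any d-separator, so every $G^2_{ij|\bfk}$ diverges and the p-value for each conditioning set vanishes; taking the max over finitely many sets gives $M_n \coloneqq \max_{(i,j)\in\bfE^*}\Phi_{n,ij}\to 0$ in probability. For a true non-edge there is a separator $\bfk^*$ whose p-value is asymptotically uniform, so $\Phi_{n,ij}$ is small only if this uniform p-value is small. Writing $m_n\coloneqq\min_{(i,j)\notin\bfE^*}\Phi_{n,ij}$, for fixed $\delta>0$ I get $\rmPr(M_n\ge m_n)\le \rmPr(M_n\ge\delta)+\rmPr(m_n<\delta)$, where the first term vanishes and the second is bounded by $C\delta$ with $C$ the number of non-edges; letting $\delta\to 0$ yields $\rmPr(M_n<m_n)\to 1$. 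Hence the true edges are exactly the $\abs{\bfE^*}$ pairs of smallest $\Phi_{n,ij}$, so $\bfE^*$ is one of the nested thresholded skeletons $\bfE^{(t)}=\{i\adjacent j:\Phi_{n,ij}\le\alpha^{(t)}\}$. The parameter choices $\tau=1+\sum_{i<j}\ind{\Phi_{n,ij}\le\alpha_n}$ and $\alpha^{(\tau)}=0$, together with the order-statistic spacing \eqref{eq:edge_diff}, ensure the path resolves the sparsity level $\abs{\bfE^*}$. The condition $\alpha_n\ge a_n$ enters precisely here: I define $a_n\to 0$ with $\rmPr(M_n\le a_n)\to 1$, which guarantees every true edge survives the initial threshold-$\alpha_n$ run and hence lies in the candidate set; any fixed $\alpha\in(0,1)$ satisfies $\alpha\ge a_n$ eventually, giving the final assertion.

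To see the recovered skeleton orients to $\calG^*$, I note that for each true non-edge the stored $\bfS_n(i,j)=\argmax_\bfk\rmPr(\chi^2_f>G^2_{ij|\bfk})$ is, with probability tending to one, a genuine d-separator: non-separating sets produce diverging statistics and vanishing p-values, whereas separators give asymptotically stable p-values, so the argmax lands on a separator. By the standard soundness of constraint-based orientation, the correct skeleton paired with valid separating sets classifies every v-structure correctly, so \texttt{skel-to-cpdag} (\autoref{alg:cpdag}) returns $\calG^*$; thus $\calG^{(t_0)}=\calG^*$ for some index $t_0$ with probability tending to one. For the second movement, consistency of $\phi$ gives that any fixed structure not Markov equivalent to $\calG^*$ is asymptotically strictly dominated in score: deleting a true edge costs $\Theta(n)$ in log-likelihood, while adding a truly absent edge yields only $O_p(1)$ gain against a penalty diverging like $\log n$. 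Applying this to each of the finitely many path members $\calG^{(t)}\ne\calG^{(t_0)}$ and taking a union bound shows $\rmPr(t^*=t_0)\to 1$, whence $\rmPr[\hcalG_n^{(t^*)}(\alpha_n)=\calG^*]\to 1$.

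The \emph{main obstacle} is the clean separation $\rmPr(M_n<m_n)\to 1$, which is exactly what fails for plain PC at a fixed threshold; the argument rests on the asymmetry that true-edge p-values vanish at a rate while a non-edge can sink below the true edges only if its asymptotically-uniform separator p-value is itself small, an event of probability comparable to $M_n$ and hence vanishing. Secondary care is needed to verify that the order-statistic construction of $\{\alpha^{(t)}\}$ genuinely realizes the level $\abs{\bfE^*}$ inside the path, and that the stored argmax separating sets are valid so that the orientation step is correct.
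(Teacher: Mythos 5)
Your proposal is correct and takes essentially the same approach as the paper's proof: both rest on the asymmetry that all dependent-case p-values vanish while visited separators yield asymptotically uniform p-values, so that the true edges become exactly the pairs with smallest $\Phi_{n,ij}$, the true CPDAG enters the order-statistic solution path with genuine separators stored for orientation, and score consistency over the finitely many candidates selects it. The only substantive differences are presentational: your $\delta$-argument for $\rmPr(M_n<m_n)\to 1$ replaces the paper's single sandwiching event $\sup_{i,j,\bfk}p^*_{n;i,j|\bfk}\leq a_n<\inf_{i,j,\bfk}p_{n;i,j|\bfk}$ in \eqref{eq:consistent}, and where you assert that a separator's uniform p-value lower-bounds $\Phi_{n,ij}$ for each true non-edge, the paper makes explicit that this visitation is itself a consequence of the no-false-negative event guaranteed by $\alpha_n\geq a_n$, a point your write-up invokes only to keep true edges in the candidate set.
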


To the best of our knowledge, there is currently no easy way to choose the optimal threshold $\alpha_n$ for PC (or pPC), and thus repeated executions are often needed for parameter tuning. The application of PATH pragmatically allows for a single execution of PC or pPC with fixed threshold $\alpha$ while returning estimates with both theoretical guarantees (\autoref{thm:path}) and empirical well-performance (\autoref{sec:ppc_results}).

\section{Consistent Hybrid Structure Learning}\label{sec:hybrid}

Despite the accessibility provided by PATH (\autoref{alg:path}), the asymptotic guarantees of constraint-based learning strategies do not necessarily translate to well-performance in practice. For this reason, 
\cite{tsamardinos2006max} preferred greedy search in the DAG space over sound and complete constraint-based orientation in their development of their algorithm, even though the former is lacking in comparable theoretical guarantees. 

In the interest of eliminating this compromise, we develop the hybrid greedy initialization (HGI) strategy to preserve the asymptotic guarantees of sound and complete constraint-based structure learning while improving on the empirical well-performance of the current standard hybrid framework. We motivate and develop HGI in this section, first reviewing relevant standard score-based and hybrid structure learning before describing the HGI strategy in detail.

\subsection{Score-based and Hybrid Structure Learning}\label{sec:score}

\cite{chickering2002learning} distills score-based Bayesian network learning into two general problems: the {evaluation} problem and the {identification} problem. In this section, we begin by introducing the relevant tenets of score-based structure learning under these categories. 

We have briefly interacted with the evaluation problem in our discussion of the BIC score in \autoref{sec:path}, which more broadly involves the development of scoring criteria to evaluate the goodness-of-fit of a Bayesian network to data. The existence of equivalence classes (\autoref{sec:equivalence}) motivates the design of metrics that evaluate all structures within an equivalence class as of equal quality, satisfying the \emph{score equivalence} property. The BIC score that we utilize satisfies this property, and is equivalent to the (negative) minimum description length (MDL) in \cite{rissanen1978}. Other scores that are score equivalent include log-likelihood, Akaike's information criterion (AIC), and Bayesian Dirichlet equivalence score (BDeu) \citep{akaike1974, buntine1991, heckerman1995}. Notwithstanding, prominent scores that are not score equivalent exist as well, such as the K2 score \citep{cooper1991} and, more recently, $\ell_1$-regularized likelihood scores \citep{fu2013, gu2019}. In their investigation, \cite{liu2012} found BIC to have favorable model selection performance relative to a number of other scores. 

The consistency of the BIC score guarantees that in the large-sample limit, $\calG^* = \argmax_{\calG} \phi(\calG, \calD)$ is in the equivalence class of the underlying DAG. BIC additionally retains the property of \emph{local consistency} \citep{chickering2002optimal}, meaning for any DAG $\calG$ and another DAG $\calG^\prime$ resulting from adding the edge $X_i \to X_j$ to $\calG$, the following two properties hold asymptotically:
\begin{align}
	\notindep{X_j}{X_i}{\Pa{j}} &\Rightarrow \phi(\bfX \mid \calG^\prime, \calD) > \phi(\bfX \mid \calG, \calD) \text{, and} \label{eq:local1} \\
	\indep{X_j}{X_i}{\Pa{j}} &\Rightarrow \phi(\bfX \mid \calG^\prime, \calD) < \phi(\bfX \mid \calG, \calD). \label{eq:local2}
\end{align}
We have discussed the BIC score as having desirable qualities for evaluating Bayesian network structures, being decomposable, score equivalent, consistent, locally consistent, and empirically well-performing. However, we are reminded of the problem of identification as finding the true global optimum $\calG^*$ is highly non-trivial. 

Relevant to our work is the general \emph{greedy search} algorithm which repeatedly moves from the current state to the neighboring state that maximally improves the optimization criterion (in our application, BIC) until no improvement can be thusly achieved \citep{russell2009}. That is, the algorithm is guaranteed to terminate in a locally optimal state, where locality is determined by the chosen definitions of a state and its neighborhood. The popular hill-climbing algorithm is a greedy search in the state space of DAGs, with neighboring states defined as DAGs obtainable by a single directed edge addition, deletion, or reversal applied to the current DAG \citep{heckerman1995, russell2009}. The greedy equivalence search (GES) is another variation of greedy search in which the state space is CPDAGs representing equivalence classes, with a forward-stepping edge addition phase followed by a edge deletion phase \citep{meek1997, chickering2002learning, chickering2002optimal}.

While widely regarded as efficient and well-performing, the locality of the hill-climbing search unavoidably risks the common problem of accepting locally optimal yet globally suboptimal solutions. \cite{gamez2011learning} showed that under certain conditions, hill-climbing returns a minimal independence map of $P$, but it does not guarantee a globally optimal result. Hill-climbing can be augmented to more thoroughly search the DAG space with one or both of \emph{tabu list} and \emph{random restarts}, governed by parameters $(t_0, t_1)$ and $(r_0, r_1)$ respectively. In what is known as the \emph{tabu search}, a solution is obtained through hill-climbing while a tabu list stores the last $t_1$ DAG structures visited during the search. Then, the hill-climbing procedure is continued for up to $t_0$ iterations while allowing for minimal score decreases, with a local neighborhood restricted by the tabu list to avoid previously visited structures. In hill-climbing with random restarts, the hill-climbing procedure is repeated $r_0$ times after the initial execution by perturbing the current solution with $r_1$ random local changes. In our work, we prefer augmenting hill-climbing with a tabu list rather than random restarts due to its generally superior efficiency and its reliable and deterministic well-performance. 

As mentioned in \autoref{sec:intro}, prominent hybrid structure learning algorithms are instantiations of what we call the generalized sparse candidate (GSC) framework, after the sparse candidate algorithm by \cite{friedman1999}, in which hill-climbing is executed from an empty graph restricted to a sparse set of candidate connections. That is, for a graph $\calG = (\bfV, \bfE)$ estimated using a constraint-based approach, define $\bfA = \{(i, j) : \text{$i$ and $j$ are connected in $\calG$} \}$ as the set of candidates: the unordered node pairs that have not been determined to be conditionally independent. Hill-climbing is then executed from an empty graph on $\bfV$, considering adding an edge $i \to j$ or $j \to i$ only if $(i, j) \in \bfA$. Max-min hill-climbing (MMHC) and hybrid HPC (H2PC) are two well-known examples, obtaining $\bfA$ according to sound skeleton estimation algorithms max-min parents and children (MMPC) and hybrid parents and children (HPC), respectively \citep{tsamardinos2006max, gasse2014}. The GSC strategy guarantees estimation of a valid DAG restricted to $\bfA$, but often accepts locally optimal solutions that are structurally inaccurate due to the connectivity of the DAG space induced by the hill-climbing neighborhood. As will be seen in \autoref{sec:hgi_results}, this problem persists even when the search space is well-restricted and a tabu list is utilized, leaving much to be desired.

\subsection{Hybrid Greedy Initialization}\label{sec:hgi}

We now develop our proposed HGI strategy to overcome the aforementioned difficulties for hybrid algorithms belonging to the GSC framework.  Our method is designed to retain the soundness and completeness of constraint-based structure learning, while empirically improving structural estimation accuracy and achieving higher-scoring structures as compared to those obtained by the GSC framework. The primary novel contribution is the introduction of a score-based ordering to the application of orientation heuristics to obtain a favorable initialization for hill-climbing. Given the skeleton output of a constraint-based algorithm, we sequentially add v-structures that most improve the score, scored with respect to directed edges. We then make greedy determinations for the remaining undirected edges according to efficient criteria from \texttt{pdag-to-dag} by \cite{dor1992simple}, assisted by Meek's rules R1-4. Finally, we execute hill-climbing initialized by the resulting DAG. From a score-based learning perspective, the formulation of HGI may be understood as a principled strategy for obtaining a good starting point for greedy search. In what follows, we further detail and discuss the HGI algorithm.

Recall that \texttt{pdag-to-dag} (introduced in \autoref{sec:path}) is guaranteed to obtain a consistent extension of a PDAG if one exists, and thus implicitly includes Meek's rules R1-4 when the given PDAG is a valid pattern that admits a consistent extension. Let $\calG_0$ and $\calG$ be identical copies of a PDAG to be oriented. The algorithm repeatedly searches for a node $j$ satisfying the following conditions in a PDAG $\calG_0$:
\begin{enumerate}[(a),topsep=0.5em,itemsep=0.25em]
	\item $j$ is a \emph{sink}: that is, $j$ has no edges directed outwards in $\calG_0$;\label{is_sink}
	\item For every vertex $k$ connected to $j$ by an undirected edge in $\calG_0$, $k$ is adjacent to all the other vertices which are adjacent to $j$ in $\calG_0$.\label{adj_check}
\end{enumerate}
If such a node $j$ can be found, all undirected edges adjacent to $j$ are oriented into $j$ in $\calG$ and $\calG_0$. Node $j$ is then a \emph{complete sink}, a node satisfying \ref{is_sink} with no undirected edges incident to it, and is removed from $\calG_0$ with all edges incident to it in order to uncover subsequent candidate nodes. This process is repeated until $\calG$ is fully oriented to a DAG, or until no such node $j$ can be found, in which case the initial PDAG does not admit a consistent extension. Briefly exposited, \ref{is_sink} ensures acyclicity by requiring that all directed paths induced by considered orientations terminate in sinks, and \ref{adj_check} ensures that considered orientations do not create new v-structures if applied.

\begin{figure}[h]
\centering
\begin{subfigure}[b]{0.3\textwidth}
\centering
\resizebox{0.9\linewidth}{!}{
\noindent\begin{tikzpicture}[main/.style = {draw, circle}, thick, node distance={16mm}] 
\node[main] (1) {$X_1$};   
\node[main] (2) [right of=1] {$X_2$};
\node[main] (3) [right of=2] {$X_3$}; 
\node[main] (4) [below of=1] {$X_4$};
\node[main] (5) [right of=4] {$X_5$}; 
\node[main] (6) [right of=5] {$X_6$};
\node[main] (7) [below of=4] {$X_7$};  
\node[main] (8) [right of=7] {$X_8$};  
\draw (1) -- (4);  
\draw[->] (4) -- (5);
\draw[->] (2) -- (5);
\draw (3) -- (2);
\draw (3) -- (6);
\draw[->] (6) -- (8);  
\draw[->] (5) -- (8);  
\draw (5) -- (1);  
\draw (5) -- (7);  
\end{tikzpicture}
}
\caption{Pattern PDAG structure}
\label{fig:extend_eg_a}
\end{subfigure}
\begin{subfigure}[b]{0.3\textwidth}
\centering
\resizebox{0.9\linewidth}{!}{
\noindent\begin{tikzpicture}[main/.style = {draw, circle}, thick, node distance={16mm}] 
\node[main] (1) {$X_1$};
\node[main] (2) [right of=1] {$X_2$};
\node[main] (3) [right of=2] {$X_3$};
\node[main] (4) [below of=1] {$X_4$};  
\node[main] (5) [right of=4] {$X_5$};   
\node[main] (6) [right of=5] {$X_6$};
\node[main] (7) [below of=4] {$X_7$};
\node[main] (8) [right of=7] {$X_8$};
\draw[<-] (1) -- (4);  
\draw[->] (4) -- (5);  
\draw[->] (2) -- (5);  
\draw[->] (3) -- (2);  
\draw[->] (3) -- (6);  
\draw[->] (6) -- (8);  
\draw[->] (5) -- (8);  
\draw[->] (5) -- (1);  
\draw[->] (5) -- (7);  
\end{tikzpicture}
}
\caption{DAG extension}
\label{fig:extend_eg_b}
\end{subfigure}
\caption{Example of \texttt{pdag-to-dag}  \citep{dor1992simple} applied to a PDAG pattern structure.}
\label{fig:extend_eg}
\end{figure}

Consider an example of \texttt{pdag-to-dag} applied to a PDAG in \autoref{fig:extend_eg}. The algorithm proceeds as follows. Starting from the PDAG structure in \autoref{fig:extend_eg_a}, nodes $X_1$, $X_7$, and $X_8$ satisfy conditions \ref{is_sink} and \ref{adj_check}, with $X_8$ additionally a complete sink. Nodes $X_2$, $X_4$, and $X_6$ violate condition \ref{is_sink}, and $X_3$ and $X_5$ violate condition \ref{adj_check}. Since $X_1$, $X_7$, and $X_8$ are not adjacent to each other, they may be selected by the algorithm in an arbitrary order without affecting the particular outcome of the DAG extension, resulting in orientations $X_4 \to X_1$, $X_5 \to X_1$, and $X_5 \to X_7$. Once these nodes are removed from consideration, $X_5$ and $X_6$ are likewise removed as complete sinks, and the remaining undirected edge $X_2 \adjacent X_3$ may be oriented in either direction as both $X_2$ and $X_3$ satisfy \ref{is_sink} and \ref{adj_check}.

Since the implementation of \texttt{pdag-to-dag}, as proposed, does not straightforwardly lend itself to greedy application, we accomplish this by developing a decomposed version of \texttt{pdag-to-dag}. Let $\calG_0$ be a PDAG with only v-structures oriented, and let $\calG$ be a DAG consisting of only the directed edges in $\calG_0$. We prioritize checking for and removing all complete sinks from consideration by deleting all edges incident to such nodes in $\calG_0$, and we greedily consider orienting $i \to j$ in $\calG_0$ and $\calG$ if $i \adjacent j$ is an undirected edge in $\calG_0$ and conditions \ref{is_sink} and \ref{adj_check} are satisfied for node $j$. For example, in a single greedy step applied to \autoref{fig:extend_eg_a}, we would first remove $X_8$ from $\calG_0$ as a complete sink, resulting in nodes $X_1$, $X_6$, and $X_7$ satisfying \ref{is_sink} and \ref{adj_check}. We then greedily consider the individual edge orientations $X_4 \to X_1$, $X_5 \to X_1$, $X_5 \to X_7$, and $X_3 \to X_6$, applying the orientation that most improves the score computed with respect to the structure of $\calG$ (i.e., all edges that have determined orientations). This design essentially decomposes the node-centric operations in \texttt{pdag-to-dag} into single edge operations (e.g., $X_4 \to X_1$ and $X_5 \to X_1$ are considered as individual orientations instead of both being considered with node $X_1$), and its result is a DAG in the same equivalence class as the output of \texttt{pdag-to-dag} given a valid PDAG. In practice, as with the sequential v-structure application, the greedy ordering filters edges and selects between ambiguous orientations. In the case that undirected edges still exist in $\calG_0$ and no node satisfying \ref{is_sink} and \ref{adj_check} can be found, we likewise greedily consider transformations compelled by Meek's rules R1-4 applied to $\calG_0$. We detail the HGI strategy in \autoref{alg:hgi}. 

\begin{algorithm}[h]
\noindent
\begin{minipage}{\textwidth}
\renewcommand*\footnoterule{}  
\begin{savenotes}  
\caption{\texttt{HGI($\calG_0$, $\calD$, $\bfU$)} (sample version)}
\label{alg:hgi}
\begin{algorithmic}[1]
\Require{undirected graph $\calG_0$, data $\calD$, and v-structures $\bfU$}
\Ensure{DAG $\calG$}
\State initialize $\calG$ as the empty graph on $\bfV$
\Repeat\label{algl:hgs_3}
\State{orient $i \to k$ and $j \to k$ in $\calG_0$\footnote{Note that $\calG_0$ begins as an undirected graph and is oriented to a PDAG as orientations are applied.} and $\calG$ for $(i, k, j) \in \bfU$ that most improves $\phi(\calG, \calD)$ 
\Statex \hskip\algorithmicindent and does not induce any cycles or conflict with any v-structures in $\calG$}
\Until no such improvement possible
\Repeat
\State delete all edges incident to complete sinks in $\calG_0$
\State apply the first applicable of the following operations for $i \adjacent j$ in $\calG_0$:
	\begin{enumerate}[(i),leftmargin=3.5em]
		\item apply $i \to j$ to $\calG_0$ and $\calG$ satisfying \ref{is_sink} and \ref{adj_check} that most improves $\phi(\calG, \calD)$ \label{algl:hgs_i}
		\item delete $i \adjacent j$ from $\calG_0$ satisfying \ref{is_sink} and \ref{adj_check} that if applied, would most deteriorate $\phi(\calG, \calD)$ \label{algl:hgs_ii}
		\item apply $i \to j$ to $\calG_0$ and $\calG$ compelled by R1-4 that most improves $\phi(\calG, \calD)$ \label{algl:hgs_iii}
		\item delete $i \adjacent j$ from $\calG_0$ with $i \to j$ compelled by R1-4 such that if applied, would most deteriorate $\phi(\calG, \calD)$ \label{algl:hgs_iv}
	\end{enumerate} \label{algl:hgs_op}
\Until no such operation possible\label{algl:hgs_end_x}
\end{algorithmic}
\end{savenotes}
\end{minipage}
\end{algorithm}

Important to note is that in the population setting, v-structure detection and orientation is order-independent, and while the particular DAG obtained by (our decomposed) \texttt{pdag-to-dag} is order-dependent, it will always recover a DAG in the same equivalence class if successful (i.e., a consistent extension of the input PDAG exists). In such a case, whatever ordering imposed on both or either of the heuristics has no meaningful effect on the result. Furthermore, given a greedy criterion, a locally consistent score will asymptotically accept proposed additions of truly connected edges due to \eqref{eq:local1}, thus preserving guaranteed identification of the equivalence class of the underlying DAG. Indeed, in such a setting, a lenient score that prefers denser graphs is sufficient as only property \eqref{eq:local1} is required. 

In the finite-sample setting, incorrectly inferred conditional independence information can result in the determination of incomplete or extraneous and even conflicting v-structures, and could result in PDAGs that do not admit a consistent extension (\autoref{rmk:valid}). The outcome of a naive application of v-structures and (our decomposed) \texttt{pdag-to-dag} empirically varies in quality depending on the order by which the operations are applied due to conflicting operations and obstacles induced by the acyclicity constraint, providing the primary incentive for greedy decisions regarding proposed constraint-based orientations. From a constraint-based learning perspective, greedy forward stepping imposes a greedy ordering on the application of v-structures and other potentially conflicting or ambiguous edge orientations, while additionally providing an element of selectivity by disregarding operations that deteriorate the score. 

As already discussed, \autoref{alg:hgi} asymptotically preserves sound and complete orientation of the skeleton of the underlying DAG $\calG$ to a DAG in its equivalence class, straightforwardly evident from our discussion thus far.

\begin{lemma}\label{lma:hgi}

Suppose that probability distribution $P$ is fixed and faithful to a DAG $\calG^*$, $\calD_n$ is data containing $n$ i.i.d. samples from $P$, and $\phi$ is a score satisfying local consistency. Let $\hcalG_n$ be the output of \autoref{alg:hgi}. If $\calG_0$ is the skeleton of $\calG^*$ and $\bfU$ contains the v-structures of $\calG^*$, then $\hcalG_n$ is in the same equivalence class as $\calG^*$ with probability approaching one as $n\to\infty$.

\end{lemma}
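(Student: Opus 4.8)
The plan is to show that, on an event of probability tending to one, the greedy decisions made by \autoref{alg:hgi} coincide with those of an oracle (population) execution, and that this oracle execution returns a DAG Markov equivalent to $\calG^*$. Since $p$ is fixed, the algorithm makes only finitely many score comparisons, and every intermediate graph $\calG$ arising during the search is one of finitely many DAGs. For each fixed comparison between a DAG $\calG$ and a DAG $\calG'$ obtained by adding a single edge $X_i \to X_j$, local consistency ensures that the sign of $\phi(\calG', \calD_n) - \phi(\calG, \calD_n)$ agrees asymptotically with the corresponding conditional (in)dependence in $P$ through \eqref{eq:local1} and \eqref{eq:local2}. A union bound over this finite collection of comparisons gives an event of probability approaching one on which all relevant score differences are simultaneously correctly signed; I argue on this event.

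First consider the v-structure phase. Because $\bfU$ consists exactly of the v-structures of $\calG^*$, every triple $(i,k,j) \in \bfU$ is oriented $i \to k \gets j$ in the DAG $\calG^*$, so orienting all of them together introduces no directed cycle and no conflict, and each remains an admissible move throughout the phase. I claim each is score-improving. Writing the joint addition of $i \to k$ and $j \to k$ as two successive single-edge additions and invoking decomposability, it suffices that $\notindep{X_k}{X_i}{\Pa{k}}$ and $\notindep{X_k}{X_j}{\Pa{k} \cup \{X_i\}}$ hold for the current parent set $\Pa{k}$ of $k$ in $\calG$; since $i \adjacent k$ and $j \adjacent k$ in $\calG^*$, faithfulness renders $X_i$ and $X_j$ dependent with $X_k$ given every conditioning set, so \eqref{eq:local1} makes both additions raise the score. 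Hence at every stage any not-yet-oriented triple in $\bfU$ is an improving admissible move, and the loop cannot terminate until all of $\bfU$ has been oriented; it orients nothing outside $\bfU$. At that point $\calG_0$ carries precisely the skeleton and v-structure orientations of $\calG^*$, i.e. it equals the pattern of $\calG^*$.

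Next consider the extension phase. The pattern of $\calG^*$ admits the consistent extension $\calG^*$, so the decomposed \texttt{pdag-to-dag} procedure of \cite{dor1992simple}, possibly assisted by Meek's rules, can orient every remaining undirected edge, and as established in the text it returns a DAG in the same equivalence class as the undecomposed version. It remains to exclude the deletion operations of \autoref{alg:hgi} asymptotically. Each orientation $i \to j$ considered there satisfies conditions \ref{is_sink}--\ref{adj_check} (or is compelled by Meek's rules), so it keeps every intermediate $\calG$ a subgraph of some DAG in the equivalence class of $\calG^*$; thus the current parents of $j$ are genuine parents, and because $i \adjacent j$ in $\calG^*$ faithfulness gives $\notindep{X_j}{X_i}{\Pa{j}}$, whence \eqref{eq:local1} makes orienting $i \to j$ score-improving. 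Therefore an orienting operation is always available and is strictly preferred over the two deletion operations, so no true edge is ever deleted. The search consequently orients all edges and returns a DAG $\hcalG_n$ with the same skeleton as $\calG^*$ and the same v-structures (the extension phase creates none). By \autoref{thm:equivalence}, $\hcalG_n$ and $\calG^*$ share a pattern and are therefore Markov equivalent.

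The main obstacle is the bookkeeping in the reduction to the oracle execution: the intermediate graphs are themselves random, so one cannot simply apply \eqref{eq:local1}--\eqref{eq:local2} to a single predetermined comparison. The remedy is finiteness of the configuration space for fixed $p$ — finitely many candidate graphs and finitely many single-edge comparisons — which licenses a union bound guaranteeing that all relevant score differences are correctly signed simultaneously with probability tending to one. A secondary subtlety is that the ``most improves''/``most deteriorates'' tie-breaking and the particular greedy ordering may orient reversible edges differently than $\calG^*$ does; this is harmless, since every completion produced by \texttt{pdag-to-dag} from the pattern of $\calG^*$ lies in the same equivalence class, so the equivalence-class conclusion is independent of these choices.
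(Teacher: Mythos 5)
Your proof is correct and follows essentially the same route as the paper's own justification of this lemma: local consistency plus faithfulness guarantee that every constraint-based orientation move (greedy v-structure application, then the decomposed \texttt{pdag-to-dag} assisted by Meek's rules) is asymptotically score-improving and hence accepted, while the order-invariance of these operations up to Markov equivalence yields a DAG in the equivalence class of $\calG^*$. Your union bound over the finitely many single-edge score comparisons (valid since $p$ is fixed) simply makes rigorous what the paper treats as ``straightforwardly evident'' from its preceding discussion.
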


Note that while we state \autoref{lma:hgi} assuming possession of all v-structures $\bfU$ that are present in the underlying DAG, these may be correctly obtained asymptotically depending on what information is available from the skeleton estimation method, which we discuss in \autoref{sec:orient}.

Indeed, neither operations \ref{algl:hgs_ii}-\ref{algl:hgs_iv} in line \ref{algl:hgs_op} nor any subsequent score-based search is necessary for \autoref{lma:hgi} to hold, but rather serve in a corrective capacity in the finite-sample setting. The process of completing and deleting sinks to uncover subsequent sinks in $\calG_0$ requires decisions for each undirected edge participating in a node satisfying \ref{is_sink} and \ref{adj_check} in order to continually progress in the algorithm. Operation \ref{algl:hgs_ii} discards each proposed edge addition $i \to j$ that deteriorates the score when no improvement according to \ref{algl:hgs_i} is possible in order for $j$ to be completed and removed. In the case that no node satisfying both \ref{is_sink} and \ref{adj_check} can be found, we apply the same greedy criterion to all edges compelled by Meek's rules R1-4 in operations \ref{algl:hgs_iii} and \ref{algl:hgs_iv}. These rules are not subject to a leaf-to-root construction and often help resume applications of \ref{algl:hgs_i} and \ref{algl:hgs_ii}, for example by deleting an undirected edge $i \adjacent j$ participating in an unshielded triple from $\calG_0$ so that $j$ can satisfy \ref{adj_check}. 

Note that in finite-sample applications, repeated application of \ref{algl:hgs_i}-\ref{algl:hgs_iv} does not guarantee orientation or deletion of all undirected edges in $\calG_0$ (e.g. consider an undirected square where no vertex satisfies \ref{is_sink} and \ref{adj_check} and no edge is compelled by R1-4), though we empirically find it to typically address most if not all edges. Furthermore, while the adjacency criterion \ref{adj_check} exists to prevent the creation of additional v-structures, it is still possible for new v-structures to be created by deletion. Consider an undirected triangle in $\calG_0$ where all three vertices $i$, $j$, and $k$ satisfy \ref{is_sink} and \ref{adj_check}. The greedy ordering may orient $i \to k$ and $j \to k$, remove node $k$ once it is a complete sink, and eventually delete $i \adjacent j$, leaving $i \to k \leftarrow j$ as a new v-structure in $\calG$. 

While essentially equivalent in the large sample setting, directly executing a greedy decomposed \texttt{pdag-to-dag} poses a number of pragmatic advantages over first greedily applying Meek's rules in the presence of finite-sample error. The sink criterion \ref{is_sink} effectively accomplishes acyclicity checks for each proposed edge orientation, which grow increasingly computationally burdensome for larger networks. It additionally induces a leaf-to-root construction with operations that minimally conflict with subsequent operations, with $i \to j$ only denying $i$ from satisfying \ref{is_sink} until $j$ is removed. This further strengthens the effect of the greedy ordering in minimizing ambiguity in the initial DAG construction process. Indeed, considering \autoref{thm:equivalence}, the order of greedy v-structure application is unambiguous given a score equivalent metric. In contrast, hill-climbing from an empty graph restricted to sparse candidates $\bfA$ begins with $O(\abs{\bfA}^2)$ ambiguous edge additions where, for any distinct node pair $(i, j) \in \bfA$, adding the edge $i \to j$ or $j \to i$ results in the same score improvement, again evident from \autoref{thm:equivalence}. Hill-climbing may encounter many such non-unique edge additions which are typically decided according to a node ordering that is often arbitrary, and their compounding effect can result in conflicts that, together with the acyclicity constraint, entrap hill-climbing in local solutions.

\begin{remark}\label{rmk:pef}

Relevant to our work is the partition, estimation, and fusion (PEF) framework by \cite{gu2020learning}, a hybrid strategy consisting of a final fusion step that is conceptually analogous to a non-greedy form of sparse candidate hill-climbing initialized with the directed edges of an estimated PDAG rather than an empty graph. The algorithm removes all undirected edges from a PDAG input and performs local edge additions, reversals, and deletions to the resulting DAG that improve the overall score by repeatedly iterating through the surviving node pairs in a semi-arbitrary order, simultaneously testing for conditional independence. While they empirically demonstrated this process to correct many of the errors in the estimated structure, we find that the order with which the edges are visited can result in varying degrees of improvement, and the testing strategy performs redundant conditional independence tests. Furthermore, naively initializing a score-based search with the PDAG output of a constraint-based algorithm may prove volatile given the sensitivity of \texttt{skel-to-cpdag} to erroneous conditional independence inferences as well as its order-dependence.
Lastly, even if initialized by the DAG consisting of the compelled edges of the underlying DAG and perfectly restricted to true connections, neither PEF nor hill-climbing with a consistent score guarantees asymptotic orientation to a DAG in the equivalence class of $\calG$. 

\end{remark}

\begin{algorithm}[h]
\noindent
\begin{minipage}{\textwidth}
\renewcommand*\footnoterule{}  
\begin{savenotes}  
\caption{\texttt{pHGS($\calD$, $\alpha$, $\tau$, $\alpha^{(\tau)}$)} (sample version)}
\label{alg:phgs}
\begin{algorithmic}[1]
\Require{data $\calD$, threshold $\alpha$, number of estimates $\tau$, minimum threshold $\alpha^{(\tau)}$}
\Ensure{DAG $\calG$}
\State execute $\texttt{pPC($\calD, \alpha$)}$ (the sample version of \autoref{alg:ppc}), omitting edge orientation in line \ref{algl:ppc_orient}, to obtain $\Phi$ and $\bfS$ as in \eqref{eq:sep}
\State execute $\calG^{(t^*)} \gets \texttt{PATH($\Phi, \bfS, \calD, \tau, \alpha^{(\tau)}$)}$ (\autoref{alg:path}) with the following modifications:
    \begin{itemize}[label={},leftmargin=1em]
    	\item line \ref{algl:path_cpdag}: detect v-structures $\bfU^{(t)}$ with $\bfS^{(t)}$ and execute $\calG^{(t)} \gets \texttt{HGI($\calG^{(t)}, \calD, \bfU^{(t)}$)}$ (\autoref{alg:hgi})
    	\item line \ref{algl:path_extend}: copy $\tilde{\calG}^{(t)} \gets \calG^{(t)}$
    \end{itemize} \label{algl:phgs_path}
\State execute hill-climbing\footnote{Or otherwise greedy search.} to obtain $\calG$, initialized with the selected estimate $\calG^{(t^*)}$ and restricted to $\bfA = \{ (i, j) : \Phi_{ij} \leq \alpha^{(1)} \}$ where $\alpha^{(1)} = \alpha$ is the maximum p-value (see \autoref{sec:path}) \label{algl:phgs_hc}
\end{algorithmic}
\end{savenotes}
\end{minipage}
\end{algorithm}

Finally, we detail in \autoref{alg:phgs} the partitioned hybrid greedy search (pHGS) algorithm, a composition of pPC, PATH, and HGI. The pHGS efficiently restricts the search space with the pPC algorithm (\autoref{alg:ppc}), obtaining $\Phi$ and $\bfS$ as in \eqref{eq:sep} for use in PATH. Instead of generating $\tau$ CPDAG estimates with \texttt{skel-to-cpdag}, PATH instead obtains $\tau$ DAG estimates by detecting v-structures $\bfU^{(t)}$ in each thresholded skeleton $\calG^{(t)}$ and executing HGI (\autoref{alg:hgi}). See \autoref{sec:orient} for details on v-structure detection. The highest-scoring of the $\tau$ estimates $\calG^{(t^*)}$ is selected to initialize hill-climbing (or an alternate score-based search algorithm) restricted to the active set $\bfA = \{ (i, j) : \Phi_{ij} \leq \alpha^{(1)} \}$. We choose the maximum threshold $\alpha^{(1)}$ (see \autoref{sec:path}) for the restriction instead of $\alpha^{(t^*)}$ corresponding to the highest-scoring estimate to reduce false negatives that excessively restrict the score-based exploration in the finite-sample setting. 

In the large-sample limit, under the same conditions and parameter specifications as \autoref{thm:path} and \autoref{lma:hgi}, the output of pHGS (\autoref{alg:phgs}) is a DAG that is Markov equivalent to the underlying DAG. Indeed, this result is already achieved by the modified PATH in line \ref{algl:phgs_path}, and in such a case the subsequent greedy search exists only to verify its optimality.

\section{Numerical Results}\label{sec:results}

We conducted extensive simulations considering various discrete Bayesian network configurations alongside a number of other popular structure learning algorithms to demonstrate the merits of pPC, PATH, HGI, and pHGS.

\subsection{Simulation Set-up}\label{sec:setup}

The performance of our methods were evaluated in comparison to several structure learning algorithms on numerous real Bayesian networks obtained from the Bayesian network repository compiled for the \texttt{R} package \texttt{bnlearn} \citep{scutari2010, scutari2017}. Most available discrete networks were considered, with the MUNIN networks represented by MUNIN1 and the LINK network omitted because certain minuscule marginal probabilities required much larger sample sizes to generate complete sets of valid variables. The following preprocessing procedures were applied to each network. For each random variable $X_i$, non-informative states $x_i$ with $\Pr(X_i = x_i) = 0$ were removed, and non-informative variables $X_i$ with $\abs{r_i} = 1$ were likewise removed. Furthermore, each variable $X_i$ was restricted to $\abs{r_i} \leq 8$, with the extraneous discrete states of excessively granular variables removed by randomly merging states. The conditional probability distributions imposed by merged states were averaged, weighted according to their marginal probabilities. 

In order to demonstrate the effectiveness of our methods for learning large discrete networks, we generated larger versions of each network with a house implementation of the tiling method proposed by \cite{tsamardinos2006tile}, modified to approximately preserve the average in-degree amongst non-minimal nodes. In particular, let $\calG = (\bfV, \bfE)$ be the structure consisting of $\kappa$ disconnected subgraphs to be connected by tiling. For a minimal node $k$ (that is, $k$ has no parents) with $d = d \coloneqq \max_{i \in \bfV} \abs{\Pa{i}}$ parents, instead of probabilistically choosing the number of added interconnecting edges, denoted by $e_k$, according to $e_k \sim {\rm Unif}\{0, d \coloneqq \max_{i \in \bfV} \abs{\Pa{i}} \}$, we let $\Pr (e_k = a) = \sum_{i \in \bfV} \ind{\abs{\Pa{i}} = a} / \abs{\bfV}$ for $a = 0, \dots, \min\{d,4\}$. Note that in this process we do not enforce any block structure on the tiled structures. 

The considered networks, along with selected descriptive characteristics, are presented in \autoref{tab:networks}, ordered by increasing complexity. The MIX network consists of the 14 networks from \autoref{tab:networks} with the least complexity, tiled in random order. For each network configuration, we generated $N = 100$ datasets with $n = 25000$ data samples each, for a total of 2000 datasets. The $p$ columns of each dataset were randomly permuted so as to obfuscate any information regarding the causal ordering. Note that while only one sample size was considered for all the networks of similar order in $p$, the networks vary significantly with respect to sparsity, structure, and complexity, thus representing a wide variety of conditions.

\begin{table}[h]
\begin{centering}
\begin{tabular}{rlrccccr}
\toprule
  \multicolumn{1}{c}{} & \multicolumn{1}{l}{Network} & \multicolumn{1}{c}{$\kappa$} & $p$                        & $\abs{\bfE}$                 & $\overline{\abs{\bfN^\mathcal{G}}}$ & $\max_i{\abs{\Pa{i}}}$     & \multicolumn{1}{r}{$\abs{\bm\Theta}$}\\
\midrule
1 & EARTHQUAKE & 200 & 1000 & 1103 & 2.206 & 2 & 2380\\
2 & CANCER & 200 & 1000 & 1123 & 2.246 & 2 & 2399\\
3 & ASIA & 125 & 1000 & 1243 & 2.486 & 2 & 2544\\
4 & SURVEY & 167 & 1002 & 1347 & 2.689 & 2 & 4444\\
5 & ANDES & 5 & 1115 & 2245 & 4.027 & 6 & 7082\\
6 & WIN95PTS & 14 & 1064 & 2184 & 4.105 & 7 & 9525\\
7 & CHILD & 50 & 1000 & 1309 & 2.618 & 2 & 11649\\
8 & ALARM & 28 & 1036 & 1689 & 3.261 & 4 & 16574\\
9 & MIX & 14 & 1011 & 1877 & 3.713 & 7 & 17319\\
10 & SACHS & 91 & 1001 & 1822 & 3.640 & 3 & 18746\\
11 & PIGS & 3 & 1323 & 2182 & 3.299 & 2 & 20102\\
12 & HEPAR2 & 15 & 1050 & 2077 & 3.956 & 6 & 23007\\
13 & INSURANCE & 38 & 1026 & 2120 & 4.133 & 3 & 40918\\
14 & HAILFINDER & 18 & 1008 & 1541 & 3.058 & 4 & 54322\\
15 & WATER & 39 & 1014 & 2303 & 4.542 & 5 & 72316\\
16 & MUNIN1 & 7 & 1064 & 1776 & 3.338 & 3 & 83208\\
17 & PATHFINDER & 10 & 1090 & 1968 & 3.611 & 5 & 96037\\
18 & DIABETES & 3 & 1239 & 2035 & 3.285 & 2 & 302008\\
19 & MILDEW & 29 & 1015 & 1913 & 3.769 & 3 & 345575\\
20 & BARLEY & 21 & 1008 & 2101 & 4.169 & 4 & 1771800\\
\bottomrule
\end{tabular}
\par\end{centering}
\caption{Simulated networks consisting of $\kappa$ connected sub-networks with $p$ nodes, $\abs{\bfE}$ edges, average number of neighbors $\overline{\abs{\bfN^\calG}}$, maximum in-degree $\max_i \abs{\Pa{i}}$, and $\abs{\bm \Theta}$ number of parameters.}
\label{tab:networks}
\end{table}

Algorithm implementations of competing algorithms MMPC, HPC, HITON, IAMB, MMHC, and H2PC, which we briefly introduce in their respective featuring sections, were obtained from the \texttt{R} package \texttt{bnlearn}, which is written in R with computationally intensive operations delegated to \texttt{C} \citep{scutari2010, scutari2017}. Our pPC, PATH, HGI, and pHGS implementations were built in \texttt{R} and \texttt{Rcpp} using tools from the \texttt{bnlearn} package, and the results for PC were obtained by executing pPC restricted to $\kappa = 1$, for fair comparison. 

We evaluate the quality of a graph estimate $\hcalG = (\bfV, \hat{\bfE})$ with respect to the underlying DAG $\calG = (\bfV, \bfE)$ by considering the Jaccard index of the CPDAG of $\hcalG$ in comparison to the CPDAG of $\calG$. The Jaccard index (JI) is a normalized measure of accuracy (higher is better), computed as JI~=~TP~/~($\abs{\bfE}+\abs{\hat{\bfE}}-\text{TP}$) where TP is the number of true positive edges: the number of edges in the CPDAG of $\hcalG$ that coincide exactly with the CPDAG of $\calG$ (both existence and orientation). 

We use the JI as our primary accuracy metric over the popular structural Hamming distance (SHD) as we find it to be consistent with SHD (higher JI almost always indicates lower SHD) and for its convenience as a normalized metric. The choice of evaluating the CPDAG estimates rather than DAG estimates is motivated foremost by the fact that given that our estimates are inferred from observational data, the orientation of reversible edges in DAGs provide no meaningful interpretation (see \autoref{sec:equivalence}). Additionally, the aforementioned metrics allow for evaluation of the quality of estimated PDAGs that do not admit a consistent extension (see \autoref{rmk:valid}). 

Regarding efficiency, execution time is confounded by factors such as hardware, software platform, and implementation quality. Even if the aforementioned variables are accounted for, performing all simulations on the same device cannot guarantee consistent performance over all simulations, and additionally severely constrains the feasible scope of study. We evaluate the estimation speed of structure learning algorithms by the number of statistical calls to conditional independence tests or local score differences, with fewer calls indicating greater efficiency. For pPC, we additionally include mutual information and entropy evaluations to account for the expense of clustering (see \autoref{sec:cluster}).

\subsection{pPC and PATH}\label{sec:ppc_results}

As the pPC algorithm can be considered an augmentation of the PC algorithm by imposing an ordering to the conditional independence tests by partitioning, we highlight its performance against the PC algorithm. We additionally apply the PATH augmentation to pPC and PC.

Note that our proposed HGI strategy motivates the design of high-performing constraint-based algorithms that not only efficiently restrict the search space, but also demonstrate potential for good score-based search initialization with HGI by producing structurally accurate estimates. As such, we further validate the performance of pPC and PATH against four other established constraint-based structure learning algorithms, all local discovery methods, modified with a symmetry correction for skeleton learning \citep{aliferis2010}. Max-min parents and children (MMPC) uses a heuristic that selects variables that maximize a minimum association measure before removing false positives by testing for conditional independence \citep{tsamardinos2003, tsamardinos2006max}. The fast version of the incremental association Markov blanket algorithm (Fast-IAMB; IAMB in this paper) is a two-phase algorithm that consists of a speculative stepwise forward variable selection phase designed to reduce the number of conditional independence tests as compared to single forward variable selection, followed by a backward variable pruning phase by testing for conditional independence \citep{tsamardinos2003iamb, yaramakala2005}. The semi-interleaved implementation of HITON\footnote{From the Greek word ``\emph{X\textgreek{iton}}", pronounced ``\emph{hee-t\'{o}n}", meaning ``cover", ``cloak", or ``blanket".} parents and children (SI-HITON-PC; HITON in this paper) iteratively selects variables based on maximum pairwise marginal association while attempting to eliminate selected variables by testing for conditional independence \citep{aliferis2003, aliferis2010}. Finally, hybrid parents and children (HPC) is comprised of several subroutines designed to efficiently control the false discovery rate while reducing false negatives by increasing the reliability of the tests \citep{gasse2014}. For each of these methods, following skeleton estimation, we orient edges by detecting and orienting v-structures according to \eqref{eq:vs2} and applying Meek's rules R1-4.

The maximum size of considered conditioning sets $m$ was chosen empirically for a balance between efficiency and well-performance: $m=3$ for pPC and PC, $m = 4$ for HPC, and $m = 5$ for the remaining methods. Note that HPC insignificantly varies in efficiency with $m$ and performs best with $m = 4$, and the remaining competing methods are more efficient but significantly less accurate with $m < 5$. We executed each algorithm on each network configuration for each of the following ten choices of significance level thresholds: 
\begin{align}\label{eq:alpha}
\alpha \in \mathcal{A} \coloneqq \{0.1, 0.05, 0.01, 0.005, 0.001, 0.0005, 0.0001, 0.00005, 0.00001, 0.000005 \}.
\end{align}
For each execution of pPC and PC, estimates for $\tau = 10$ thresholding values were automatically generated with PATH (\autoref{alg:path}) according to \eqref{eq:edge_diff}, restricted to a minimum value of $\alpha^{(\tau)} = 10^{-5}$.

\begin{table}[t]
\begin{centering}
\scalebox{1}{
\centering
\begin{tabular}{rrrrrrrrrr}
\toprule
\multicolumn{1}{c}{} & \multicolumn{6}{c}{\textbf{JI}} & \multicolumn{3}{c}{\textbf{Normalized Calls}} \\
\cmidrule(l{3pt}r{3pt}){2-7} \cmidrule(l{3pt}r{3pt}){8-10}
\multicolumn{1}{c}{} & \multicolumn{2}{c}{pPC} & \multicolumn{2}{c}{PC} & \multicolumn{1}{c}{pPC$^*$} & \multicolumn{1}{c}{PC$^*$} & \multicolumn{1}{c}{PC} & \multicolumn{1}{c}{pPC$^*$} & \multicolumn{1}{c}{PC$^*$} \\
\cmidrule(l{3pt}r{3pt}){2-3} \cmidrule(l{3pt}r{3pt}){4-5} \cmidrule(l{3pt}r{3pt}){6-6} \cmidrule(l{3pt}r{3pt}){7-7} \cmidrule(l{3pt}r{3pt}){8-8} \cmidrule(l{3pt}r{3pt}){9-9} \cmidrule(l{3pt}r{3pt}){10-10}
\multicolumn{1}{c}{} & \multicolumn{2}{c}{$\alpha = 0.1$} & \multicolumn{2}{c}{$\alpha = 0.1$} & \multicolumn{1}{c}{$\alpha \in \mathcal{A}$} & \multicolumn{1}{c}{$\alpha \in \mathcal{A}$} & \multicolumn{1}{c}{$\alpha = 0.1$} & \multicolumn{1}{c}{$\alpha \in \mathcal{A}$} & \multicolumn{1}{c}{$\alpha \in \mathcal{A}$} \\
\cmidrule(l{3pt}r{3pt}){2-3} \cmidrule(l{3pt}r{3pt}){4-5} \cmidrule(l{3pt}r{3pt}){6-6} \cmidrule(l{3pt}r{3pt}){7-7} \cmidrule(l{3pt}r{3pt}){8-8} \cmidrule(l{3pt}r{3pt}){9-9} \cmidrule(l{3pt}r{3pt}){10-10}
  & None & $\tau = 10$ & None & $\tau = 10$ & None & None & $\tau = 10$ & None & None\\
\midrule
1 & 0.449 & \textbf{0.726} & 0.390 & 0.694 & 0.712 & 0.707 & 3.256 & 8.537 & 11.543\\
2 & 0.249 & \textbf{0.333} & 0.259 & 0.331 & 0.292 & 0.292 & 7.842 & 7.258 & 15.863\\
3 & 0.220 & \textbf{0.292} & 0.211 & 0.290 & 0.258 & 0.259 & 1.434 & 4.629 & 5.199\\
4 & 0.326 & 0.359 & 0.309 & \textbf{0.361} & 0.359 & 0.359 & 2.811 & 7.934 & 10.474\\
5 & 0.504 & 0.560 & 0.488 & \textbf{0.561} & 0.519 & 0.508 & 4.334 & 7.681 & 12.905\\
6 & 0.407 & \textbf{0.414} & 0.385 & 0.405 & 0.412 & 0.400 & 2.490 & 8.377 & 11.049\\
7 & 0.568 & \textbf{0.583} & 0.538 & 0.564 & 0.573 & 0.552 & 1.585 & 8.478 & 9.480\\
8 & 0.475 & 0.478 & 0.469 & \textbf{0.480} & 0.477 & 0.474 & 1.811 & 8.193 & 10.013\\
9 & 0.518 & \textbf{0.548} & 0.505 & 0.537 & 0.520 & 0.508 & 1.296 & 7.515 & 8.166\\
10 & 0.621 & \textbf{0.633} & 0.572 & 0.595 & 0.621 & 0.574 & 1.894 & 8.249 & 10.089\\
11 & 0.852 & 0.853 & 0.867 & \textbf{0.871} & 0.852 & 0.867 & 1.561 & 8.938 & 10.747\\
12 & 0.206 & \textbf{0.235} & 0.199 & 0.226 & 0.207 & 0.201 & 2.235 & 7.729 & 10.013\\
13 & 0.407 & \textbf{0.426} & 0.404 & 0.421 & 0.411 & 0.407 & 1.898 & 8.175 & 10.095\\
14 & 0.339 & 0.359 & 0.341 & \textbf{0.361} & 0.342 & 0.347 & 1.465 & 8.624 & 9.697\\
15 & \textbf{0.307} & 0.296 & 0.291 & 0.262 & \textbf{0.307} & 0.291 & 1.874 & 8.537 & 10.396\\
16 & 0.086 & 0.085 & 0.087 & 0.087 & 0.088 & \textbf{0.090} & 2.009 & 8.620 & 13.546\\
17 & 0.053 & 0.053 & 0.053 & 0.052 & \textbf{0.054} & 0.053 & 1.266 & 8.844 & 9.940\\
18 & 0.219 & 0.219 & 0.262 & 0.262 & 0.239 & \textbf{0.263} & 1.765 & 8.744 & 11.738\\
19 & 0.323 & 0.297 & 0.312 & 0.274 & \textbf{0.326} & 0.313 & 1.664 & 8.707 & 10.410\\
20 & 0.158 & 0.157 & 0.149 & 0.149 & \textbf{0.168} & 0.160 & 1.684 & 8.289 & 9.840\\
\bottomrule
\end{tabular}
}
\par\end{centering}
\begin{centering}
\protect\caption{Accuracy (JI) and efficiency (Normalized Calls) comparison between pPC and PC, without and with PATH (indicated by None and $\tau = 10$, respectively), with number of statistical calls normalized by pPC-PATH$(\alpha = 0.1, \tau = 10)$. Rows correspond to the networks in \autoref{tab:networks}. Columns pPC$^*$ and PC$^*$ provide the highest JI and total statistical calls of executions for all ten $\alpha \in \mathcal{A}$. Best values are provided in boldface.}
\label{tab:ppc_pc}
\par\end{centering}
\end{table}

The comparison results for pPC and PATH are reported in \autoref{tab:ppc_pc}. We first compare pPC against PC in terms of computational efficiency and estimation accuracy. Unsurprisingly, pPC demonstrates the greatest computational benefit over PC for large $\alpha$, typically halving the number of conditional independence tests for $\alpha = 0.1$, as seen from the normalized calls of PC with $\alpha=0.1$ and PATH ($\tau=10$) in the table. Note that our partitioning strategy is solely responsible for this computational improvement as we do not consider any parallelization in our results, and by design pPC can, like PC, further benefit from parallel execution. The reduction suffers from diminishing returns with decreasing $\alpha$, with an average speed-up of about $20\%$ across the ten $\alpha$. Notwithstanding, we found pPC and PC, even without PATH, to generally prefer larger $\alpha$. In particular, for both pPC and PC, estimates with thresholds $\alpha = 0.1$ and $0.05$ produced the best estimates (highest JI) for over 240 of the 400 datasets, resulted in the highest average JI scores for 14 out of the 20 networks, and achieved the highest JI scores averaged across all datasets. As such, algorithm executions with large significance level thresholds are not unreasonable in practice, which coincides with the general strategy of PATH. We note that pPC appears to be typically slightly more accurate than PC, though the improvement is not substantial.

Additionally, we see from \autoref{tab:ppc_pc} that PATH applied to pPC and PC is able to obtain, from a single execution with $\alpha = 0.1$ and $\tau = 10$, estimates of similar and often superior quality compared to the best estimates without PATH (pPC$^*$ and PC$^*$) obtained from ten executions with the various $\alpha \in \mathcal{A}$. Important to note is that the solution path automatically selects an estimate based on a BIC selection strategy restricted to valid CPDAG estimates, if any (see \autoref{rmk:valid}), whereas for the multiple executions the maximum JI (as computed with respect to the CPDAG of the underlying DAG) for each dataset was chosen. The BIC selection strategy appears less effective for a couple of networks (e.g., 15 and 19), where on average the original estimates without PATH were more structurally accurate than those chosen from a solution path. One explanation for the worse performance could be the presence of invalid CPDAG estimates. PATH prefers valid estimates, and may prefer lower-scoring valid estimates over more structurally accurate invalid estimates. In the case that all estimates are invalid, the semi-arbitrary DAG extension process  can be volatile, resulting in structurally inaccurate estimates being selected. As anticipated in \autoref{sec:path}, the computational expense required to execute \autoref{alg:path} is practically negligible in comparison to skeleton estimation. The statistical calls for pPC and PC with PATH ($\alpha = 0.1$ and $\tau = 10$) include the scores evaluated for BIC selection from the generated solutions by PATH, and are practically indistinguishable from those without PATH, with the score evaluations typically consisting of less than $0.1\%$ of the statistical calls.

\begin{table}[t!]
\begin{centering}
\scalebox{1}{}
\centering
\begin{tabular}{rrrrrrrrrr}
\toprule
\multicolumn{1}{c}{} & \multicolumn{5}{c}{\textbf{JI}} & \multicolumn{4}{c}{\textbf{Normalized Calls}} \\
\cmidrule(l{3pt}r{3pt}){2-6} \cmidrule(l{3pt}r{3pt}){7-10}
  & pPC & MMPC & HPC & IAMB & HITON & MMPC & HPC & IAMB & HITON\\
\midrule
1 & 0.726 & 0.759 & \textbf{0.760} & 0.749 & 0.759 & 33.882 & 86.705 & 53.451 & 17.162\\
2 & 0.333 & 0.757 & 0.805 & \textbf{0.870} & 0.701 & 30.137 & 156.298 & 39.448 & 16.699\\
3 & 0.292 & \textbf{0.306} & 0.245 & 0.260 & 0.299 & 15.263 & 47.094 & 21.773 & 7.754\\
4 & 0.359 & 0.773 & 0.818 & \textbf{0.835} & 0.734 & 33.020 & 97.426 & 40.322 & 20.100\\
5 & 0.560 & 0.563 & \textbf{0.692} & 0.532 & 0.606 & 34.573 & 166.639 & 32.882 & 60.477\\
6 & \textbf{0.414} & 0.295 & 0.379 & 0.251 & 0.243 & 30.760 & 101.360 & 44.621 & 17.124\\
7 & 0.583 & 0.087 & \textbf{0.717} & 0.317 & 0.077 & 29.538 & 67.649 & 49.586 & 15.094\\
8 & \textbf{0.478} & 0.200 & 0.434 & 0.314 & 0.158 & 29.007 & 78.418 & 59.496 & 14.934\\
9 & 0.548 & 0.423 & \textbf{0.633} & 0.360 & 0.316 & 12.139 & 53.574 & 14.409 & 7.756\\
10 & 0.633 & 0.259 & \textbf{0.724} & 0.352 & 0.358 & 27.113 & 99.919 & 30.702 & 14.292\\
11 & 0.853 & 0.536 & \textbf{0.974} & 0.385 & 0.276 & 12.441 & 69.035 & 12.747 & 6.194\\
12 & 0.235 & 0.259 & \textbf{0.399} & 0.289 & 0.262 & 21.312 & 82.698 & 31.353 & 12.402\\
13 & \textbf{0.426} & 0.189 & 0.310 & 0.252 & 0.114 & 24.900 & 95.270 & 38.858 & 12.959\\
14 & \textbf{0.359} & 0.246 & 0.325 & 0.287 & 0.220 & 32.216 & 59.157 & 40.003 & 16.633\\
15 & \textbf{0.296} & 0.283 & 0.287 & 0.291 & 0.232 & 32.270 & 88.132 & 48.729 & 16.591\\
16 & \textbf{0.085} & 0.004 & 0.010 & 0.032 & 0.004 & 11.961 & 72.268 & 12.745 & 6.756\\
17 & 0.053 & 0.067 & \textbf{0.077} & 0.050 & 0.062 & 7.282 & 19.076 & 10.686 & 3.737\\
18 & 0.219 & 0.067 & \textbf{0.221} & 0.073 & 0.061 & 28.934 & 74.795 & 26.146 & 14.964\\
19 & \textbf{0.297} & 0.066 & 0.252 & 0.232 & 0.064 & 32.694 & 82.547 & 36.928 & 16.622\\
20 & 0.157 & 0.055 & \textbf{0.244} & 0.093 & 0.055 & 24.945 & 96.624 & 22.784 & 12.777\\
\bottomrule
\end{tabular}
\par\end{centering}
\begin{centering}
\protect\caption{Accuracy (JI) and efficiency (Normalized Calls) comparison amongst constraint-based methods, with total number of statistical calls normalized by pPC. Rows correspond to the networks in \autoref{tab:networks}. The results for pPC were obtained from a single execution with PATH and parameters $\alpha = 0.1$ and $\tau = 10$, whereas all other methods report the highest JI from and the total statistical calls for the executions across the ten $\alpha \in \mathcal{A}$. Best values are provided in boldface.}
\label{tab:ppc_all}
\par\end{centering}
\end{table}

In \autoref{tab:ppc_all}, we compare pPC with PATH against other constraint-based structure learning algorithms. We exclude PC as its comparison with pPC is thoroughly demonstrated in \autoref{tab:ppc_pc}. Again, competing methods report optimal results and total statistical calls for the executions across the ten significance levels $\alpha \in \mathcal{A}$. In terms of structural accuracy, the only algorithm that can compete against pPC is HPC, which outperforms pPC in twelve of the network configurations, sometimes by a substantial margin. However, when it comes to efficiency, there is no contest against the pPC algorithm, in general even if the number of calls were averaged across the ten executions instead of summed. Additionally, pPC most often produced valid CPDAG estimates, succeeding with $49.1\%$ of the datasets in contrast to from $9.8\%$ by HPC to up to $41.3\%$ by IAMB. 

In
\autoref{sec:supplement},
we provide detailed results for the comparisons of pPC against three constraint-based methods, PC, MMPC, and HPC, thus far discussed in \autoref{tab:ppc_pc} and \autoref{tab:ppc_all}. 
\autoref{fig:constraint_points}
plots the accuracy and efficiency results obtained by these methods for each network, visualizing the variability amongst datasets by including the results for individual datasets. We also include detailed tables with additional metrics.

\subsection{HGI and pHGS}\label{sec:hgi_results}

Having discussed the theoretical merits of HGI in \autoref{sec:hgi}, we demonstrate the empirical performance of HGI applied to the GSC framework in this section. 
We refer to unrestricted hill-climbing as simply HC and perfectly restricted hill-climbing as GSC$^*$. In general, for a restriction of the search space with Alg, we refer to the GSC version as Alg-HC, and the version with HGI as Alg-HGI-HC. However, we refer to the versions of established algorithms MMHC and H2PC that are augmented with HGI as MMHC-HGI and H2PC-HGI. 

The hill-climbing phase of each algorithm was augmented with a tabu list to avoid $t_1 = 100$ previously visited DAG structures for $t_0 = 100$ suboptimal iterations (see \autoref{sec:score}). All score-based methods evaluated structures with the BIC score. We executed pHGS (\autoref{alg:phgs}) with significance level $\alpha = 0.05$, and generated and selected from $\tau = 10$ HGI estimates in PATH by thresholding to a minimum of $\alpha^{(\tau)} = 10^{-5}$.

\begin{table}[h]
\begin{minipage}{\textwidth}
\begin{centering}
\scalebox{1}{}
\centering
\begin{tabular}{rccccccccc}
\toprule
\multicolumn{1}{r}{Restrict} & \multicolumn{1}{c}{None} & \multicolumn{2}{c}{True Skeleton} & \multicolumn{2}{c}{pPC} & \multicolumn{2}{c}{MMPC} & \multicolumn{2}{c}{HPC} \\
\cmidrule(l{3pt}r{3pt}){1-1} \cmidrule(l{3pt}r{3pt}){2-2} \cmidrule(l{3pt}r{3pt}){3-4} \cmidrule(l{3pt}r{3pt}){5-6} \cmidrule(l{3pt}r{3pt}){7-8} \cmidrule(l{3pt}r{3pt}){9-10}
\multicolumn{1}{r}{Initial} & \multicolumn{1}{c}{EG} & \multicolumn{1}{c}{EG} & \multicolumn{1}{c}{HGI} & \multicolumn{1}{c}{EG} & \multicolumn{1}{c}{HGI} & \multicolumn{1}{c}{EG} & \multicolumn{1}{c}{HGI} & \multicolumn{1}{c}{EG} & \multicolumn{1}{c}{HGI} \\
\cmidrule(l{3pt}r{3pt}){1-1} \cmidrule(l{3pt}r{3pt}){2-2} \cmidrule(l{3pt}r{3pt}){3-3} \cmidrule(l{3pt}r{3pt}){4-4} \cmidrule(l{3pt}r{3pt}){5-5} \cmidrule(l{3pt}r{3pt}){6-6} \cmidrule(l{3pt}r{3pt}){7-7} \cmidrule(l{3pt}r{3pt}){8-8} \cmidrule(l{3pt}r{3pt}){9-9} \cmidrule(l{3pt}r{3pt}){10-10}
\cmidrule(l{3pt}r{3pt}){1-1} \cmidrule(l{3pt}r{3pt}){2-2} \cmidrule(l{3pt}r{3pt}){3-3} \cmidrule(l{3pt}r{3pt}){4-4} \cmidrule(l{3pt}r{3pt}){5-5} \cmidrule(l{3pt}r{3pt}){6-6} \cmidrule(l{3pt}r{3pt}){7-7} \cmidrule(l{3pt}r{3pt}){8-8} \cmidrule(l{3pt}r{3pt}){9-9} \cmidrule(l{3pt}r{3pt}){10-10}
Alias  & HC & GSC$^*$ &  &  & pHGS & MMHC &  & H2PC & \\
\midrule
1 & 0.341 & 0.478 & 0.806 & 0.500 & \textbf{0.746} & 0.500 & 0.761 & 0.548 & 0.762\\
2 & 0.419 & 0.596 & 0.954 & 0.581 & \textbf{0.758} & 0.570 & 0.874 & 0.590 & 0.903\\
3 & \textbf{0.323} & 0.504 & 0.802 & 0.255 & 0.289 & 0.254 & 0.302 & 0.224 & 0.255\\
4 & 0.649 & 0.770 & 0.931 & 0.751 & \textbf{0.817} & 0.726 & 0.819 & 0.760 & 0.887\\
5 & 0.610 & 0.910 & 0.970 & 0.716 & \textbf{0.765} & 0.544 & 0.598 & 0.686 & 0.738\\
6 & 0.295 & 0.564 & 0.736 & 0.414 & \textbf{0.476} & 0.254 & 0.306 & 0.330 & 0.403\\
7 & 0.450 & 0.535 & 0.995 & 0.528 & \textbf{0.871} & 0.191 & 0.199 & 0.532 & 0.854\\
8 & 0.361 & 0.567 & 0.889 & 0.410 & \textbf{0.522} & 0.288 & 0.329 & 0.493 & 0.636\\
9 & 0.630 & 0.823 & 0.902 & 0.686 & \textbf{0.739} & 0.388 & 0.441 & 0.695 & 0.738\\
10 & 0.320 & 0.417 & 1.000 & 0.413 & \textbf{0.873} & 0.276 & 0.308 & 0.418 & 0.921\\
11 & 0.827 & 1.000 & 1.000 & 0.992 & \textbf{0.991} & 0.450 & 0.535 & 0.990 & 0.991\\
12 & 0.529 & 0.634 & 0.763 & 0.513 & 0.579 & 0.247 & 0.294 & \textbf{0.585} & 0.622\\
13 & 0.368 & 0.513 & 0.811 & 0.416 & \textbf{0.615} & 0.212 & 0.227 & 0.404 & 0.423\\
14 & \textbf{0.456} & 0.730 & 0.920 & 0.387 & 0.449 & 0.279 & 0.315 & 0.455 & 0.488\\
15 & 0.234 & 0.361 & 0.581 & 0.293 & \textbf{0.432} & 0.268 & 0.366 & 0.328 & 0.419\\
16 & \textbf{0.261} & 0.576 & 0.655 & 0.043 & 0.056 & 0.007 & 0.007 & 0.096 & 0.102\\
17 & \textbf{0.358} & 0.407 & 0.478 & 0.069 & 0.075 & 0.081 & 0.081 & 0.345 & 0.336\\
18 & 0.222 & 0.618 & 0.946 & 0.166 & 0.186 & 0.063 & 0.069 & \textbf{0.241} & 0.304\\
19 & 0.375 & 0.549 & 0.690 & 0.278 & 0.327 & 0.106 & 0.114 & \textbf{0.440} & 0.481\\
20 & 0.260 & 0.524 & 0.607 & 0.144 & 0.163 & 0.068 & 0.080 & \textbf{0.289} & 0.327\\
\bottomrule
\end{tabular}
\par\end{centering}
\begin{centering}
\protect\caption{Accuracy (JI) comparisons for CPDAGs estimated without and with HGI (indicated by EG for empty graph and HGI, respectively) given restrictions obtained by various skeleton methods. Unrestricted hill-climbing is provided under HC. HGI under pPC corresponds to a single execution of pHGS with $\alpha = 0.05$, $\tau = 10$, and $\alpha^{(\tau)} = 10^{-5}$. All methods other than HC and pHGS report the highest JI for the executions across the ten $\alpha \in \mathcal{A}$. Best values amongst pHGS, HC, MMHC, and H2PC are provided in boldface.}
\label{tab:hgs_ji}
\par\end{centering}
\end{minipage}
\end{table}

The accuracy results for HGI and pHGS are summarized in \autoref{tab:hgs_ji}. In the first column, the performance of (unrestricted) HC leaves much to be desired with its generally lackluster structural accuracy in comparison with the hybrid methods. Exceptions exist, as anticipated by \autoref{tab:ppc_all} in which constraint-based methods struggle to produce good estimates for some higher complexity networks, often inferring excessive false negatives that erroneously reduce the search space. In such cases, hybrid approaches are limited by their constraint-based component and thus perform worse than HC. 

We first demonstrate the improvement of initialization with HGI compared to the empty graph (EG) for different skeleton restriction methods. A perfect restriction to the true skeleton represents the most optimistic scenario for GSC and HGI, wherein all true positives are accessible and no false positives are considered. HGI additionally enjoys consideration of all true v-structures when detecting v-structures amongst unshielded triples according to \eqref{eq:vs2}. Unsurprisingly, perfectly restricted GSC uniformly improves on the performance of unrestricted HC. The addition of HGI achieves further improvements to structural accuracy of typically 28\% and up to 139\% (\autoref{tab:hgs_ji}, True Skeleton). This same trend persists when comparing GSC without and with HGI for empirical skeleton estimation methods pPC, MMPC, and HPC, demonstrating the effectiveness of HGI. For the various skeleton estimation methods, the addition of HGI achieves estimates that are typically 14\% and up to 118\% more structurally accurate.

We now compare pHGS with established algorithms HC, MMHC, and H2PC, the best values amongst which are indicated in boldface in \autoref{tab:hgs_ji}. In most of the network configurations, a single execution of pHGS learns estimates of higher quality than the best of the ten executions with $\alpha \in \mathcal{A}$ of MMHC and H2PC. Note that for the GSC framework, the goal of parameter tuning for $\alpha$ is to obtain a balance between true positives and true negatives. MMHC does not outperform pHGS in any meaningful capacity, and H2PC only substantially outperforms pHGS for higher complexity networks due to a mechanism in HPC to reduce false negative edges \citep{gasse2014}. While the addition of HGI dramatically improves the general accuracy of MMHC, only H2PC-HGI performs competitively with pHGS, reflective of the results in \autoref{tab:ppc_all} where HPC rivaled pPC with respect to structural accuracy. However, as we will see from our discussion of \autoref{tab:hgi_calls}, the speed comparisons in \autoref{tab:ppc_all} generally hold for these hybrid variants as well, with pHGS on average nearly an order of magnitude more efficient than H2PC and around 2.5 times more efficient than MMHC per execution, with or without HGI. HC only outperforms the hybrid methods for a few structures in which the latter overly restrict the search space. Overall, we find pHGS to be most well-performing method, followed by H2PC, MMHC, and HC. We provide detailed results for these methods in 
\autoref{sec:supplement},
with boxplots 
(\autoref{fig:established_boxplots})
visualizing the accuracy comparisons across datasets for each network as well as tables with additional metrics.

\begin{table}[h]
\begin{centering}
\scalebox{0.94}{
\centering
\begin{tabular}{llllll}
\toprule
\multicolumn{1}{c}{} & \multicolumn{1}{c}{GSC} & \multicolumn{4}{c}{HGI-HC} \\
\cmidrule(l{3pt}r{3pt}){2-2} \cmidrule(l{3pt}r{3pt}){3-6}
  & HC & Detect ($\bfU$) & HGI & HC & Total\\
\midrule
pPC & 1 (0.04, 1.87) & 0 (0, 0) & 1.08 (0.05, 3.51) & 0.48 (0.03, 1.11) & 1.59 (0.11, 4.25)\\
MMPC & 0.13 (0.01, 0.28) & 0.9 (0.02, 51.6) & 0.02 (0, 0.06) & 0.06 (0.01, 0.16) & 1.01 (0.03, 51.7)\\
HPC & 0.22 (0.02, 0.41) & 0.92 (0, 32.6) & 0.04 (0.01, 0.08) & 0.12 (0.02, 0.21) & 1.11 (0.04, 32.8)\\
\bottomrule
\end{tabular}
}
\par\end{centering}
\begin{centering}
\protect\caption{Median and $95\%$ precision intervals of percent additional statistical calls by GSC and HGI-HC with respect to skeleton learning. Each point represents one algorithm execution for each network. HGI with pPC represents pHGS, which includes PATH executed with $\tau = 10$ and $\alpha = 0.05$, whereas MMPC and HPC include the results from each individual execution from the ten $\alpha \in \mathcal{A}$.}
\label{tab:hgi_calls}
\par\end{centering}
\end{table}

As evidenced in \autoref{tab:hgi_calls}, while HGI-HC typically comes at greater computational cost to GSC alone, the expense of either is largely negligible in comparison to that of skeleton learning, generally (and often significantly) fewer than an additional $2\%$ statistical calls. Rare exceptions exist, in particular extreme cases where MMPC or HPC required a significant number of additional tests to detect v-structures. Here, pPC has a clear computational advantage, having the ability to detect v-structures using separation sets accrued throughout skeleton learning (see \eqref{eq:vs1}, resulting in fewer than $4.5\%$ additional calls for every dataset to execute HGI $\tau = 10$ times in PATH and perform hill-climbing from the chosen initial DAG. Other algorithms must conduct additional conditional independence tests to detect v-structures via \eqref{eq:vs2}, which can quickly add up if the learned skeleton structure has a significant number of unshielded triples $i \adjacent k \adjacent j$, or if either or both of $\abs{\bfN_i^\calG}$ and $\abs{\bfN_j^\calG}$ are large. On the topic of efficiency, unrestricted HC typically requires three to five times the number of statistical calls to execute as compared to pHGS, providing further validation for the hybrid approach.

In general, we find the initial DAG obtained by HGI (\autoref{alg:hgi}) through the greedy application of v-structures and greedy decomposed \texttt{pdag-to-dag} to be typically superior in structural accuracy compared to the direct application of \texttt{skel-to-cpdag} (\autoref{alg:cpdag}), the standard edge orientation strategy of constraint-based algorithms. HGI exhibits the greatest median improvement of $16.2\%$ over \texttt{skel-to-cpdag} when applied to pPC, followed by $9.5\%$ with HPC, $7.3\%$ restricted to the skeleton of the underlying DAG (True), and $1.9\%$ with MMPC. In general, pPC detects the most v-structures as its detection criterion \eqref{eq:vs1} may be considered less strict than \eqref{eq:vs2} used by True, MMPC, and HPC. Consequently, pPC generally detects a significant number of false positive v-structures, thus benefiting most significantly from the greedy v-structure determinations. The poor skeleton estimation performance of MMPC is likely responsible for its lackluster improvement, with its estimated skeletons generally containing the fewest unshielded triples corresponding to true v-structures in the underlying DAGs in comparison to pPC and HPC.

\section{Discussion}\label{sec:discussion}

In this paper, we proposed three independent yet compatible contributions to the general well-performance of discrete Bayesian network structure learning, culminated in the form of the pHGS algorithm. 

First, the pPC algorithm improves on the empirical efficiency of the PC algorithm while retaining its soundness and completeness as well as its empirical performance. Though it is difficult to quantify the expected computational reduction, our simulation results in \autoref{tab:ppc_pc} indicate that for the empirically preferred significance level threshold, pPC typically requires half the number of conditional independence tests per execution compared to PC. This speed-up is enjoyed at no compromise to structural accuracy, with pPC performing comparably with and often even outperforming PC. 

Second, the PATH algorithm effectively accomplishes the task of parameter tuning for certain constraint-based structure learning algorithms such as pPC and PC, theoretically preserving consistency in the classical setting and empirically achieving highly competitive structural accuracy at negligible computational expense. In the current landscape, the asymptotic result for sound and complete constraint-based structure learning asserts the existence of some uninformative sequence of significance levels $\alpha_n \to 0$ as $n \to \infty$ that recovers the underlying equivalence class in the large-sample limit. We prove that appropriately applied to pPC or PC executed with any fixed threshold $\alpha \in (0, 1)$, PATH asymptotically includes and correctly selects the underlying CPDAG in its generated solution path. We demonstrate an analogous result in the empirical setting, wherein pPC with PATH returns estimates of competitive quality to that of optimistic parameter tuning, achieving significant computational reductions compared to the current standard.

Third, the HGI algorithm provides a principled strategy for initializing score-based search in hybrid methods that asymptotically preserves soundness and completeness of constraint-based structure learning, elevating the GSC framework to consistency in the classical setting while significantly improving its empirical performance. While popular hybrid algorithms MMHC and H2PC forego asymptotic consistency for empirical performance, our HGI strategy makes no such compromise. When applied to GSC with various skeleton estimation strategies (including MMHC and H2PC), HGI significantly improves the estimation accuracy with typically negligible additional computational expense. Notably, a more recent development in hybrid structure learning is adaptively restricted greedy equivalence search (ARGES), which adaptively relaxes the restricted search space in order to ensure a search path in the space of equivalence classes (CPDAGs) to the optimal solution \citep{nandy2018}. Though in this paper we take primary interest in improving upon the GSC framework which operates in the space of DAGs, we present preliminary simulation results in 
\autoref{sec:supplement}
that indicate significant potential for empirical improvement to ARGES through the initialization provided by HGI.

Altogether, we combined pPC, PATH, and HGI to create the pHGS algorithm, which enjoys the skeleton estimation efficiency of pPC, the parameter tuning by PATH, and and the empirical well-performance of GSC with HGI. In comparison to MMHC and H2PC, pHGS learns more accurate DAGs in nearly every considered underlying network configuration with a fraction of the computational cost.

While we have empirically demonstrated a significant reduction to the number of conditional independence tests executed by pPC in comparison to PC, we have not established any concrete theoretical complexity results. Indeed, the extent of computational reduction inevitably depends on the quality of the partition with respect to the underlying structure. As such, it is of interest to determine under what conditions pPC is guaranteed to perform fewer tests than PC, and to quantify the reduction. Such an investigation is crucial to establish structure learning consistency of pPC in the sparse high-dimensional setting with multivariate Gaussian distributions. The high-dimensional Gaussian consistency of PC proved in \cite{kalisch2007} relies on the number of conditional independence tests performed, determined by the maximum size of conditioning sets reached by PC with no errors. The same result holds for pPC under the same assumptions if the number of tests investigated by pPC is not greater than that of PC. Note that empirically, we find pPC to always perform fewer tests than PC.

Additionally, as formulated, pPC is limited to obtaining $\kappa \leq 20$ clusters as per a loose suggestion by \cite{hartigan1981}. Note that by our design, pPC is not limited to parallel processing utility from at most 20 processors, having comparable capacity for parallel execution as PC. Nonetheless, a future direction for further improvement would be the development of an unsupervised criterion to more flexibly determine the target number of clusters that optimizes the efficiency of pPC.

\bibliographystyle{plainnat}
\bibliography{references}

\newpage
\appendix

\section{Proofs}\label{sec:proofs}

In this appendix, we prove \autoref{thm:ppc} and \autoref{thm:path}.

\subsection{Proof of \autoref{thm:ppc}}\label{sec:ppc_proof}

The proof of \autoref{thm:ppc} regarding the soundness and completeness of the pPC algorithm (\autoref{alg:ppc}) follows from the thorough investigation of the criterion \eqref{eq:exist2} in the pPC algorithm.

\begin{proof}[Proof of \autoref{thm:ppc}]

Let $\bfc$ be any clustering labels obtained in the clustering step. Let $\hcalG_1$ denote the estimated structure after estimating edges within clusters at line \ref{algl:between_fix1}, and $\tdcalG$ denote the final estimated skeleton at line \ref{algl:ppc_skeleton}.

We first show that every truly adjacent node pair (that is, adjacent in ${\calG^*}$) is also adjacent in the estimated structure at every stage of \autoref{alg:ppc} following line \ref{algl:end_screen1}. An implication of the Markov condition (\autoref{sec:background}) is that no truly adjacent pair of nodes $i, j \in \bfV$ are independent conditioned on any set of variables not containing $X_i$ or $X_j$. Thus, every truly adjacent pair of nodes belonging to the same cluster are never disconnected after being connected in the first execution of \autoref{alg:ppc} in line \ref{algl:pc_within}:
\begin{align}\label{eq:super_within}
    \bfN_i^{\hcalG_1} \supseteq \{ X_k \in \bfN_i^{\calG^*} : c_k = c_i \} \text{ for all } i \in \bfV.
\end{align}
Similarly, every truly adjacent pair of nodes belonging to different clusters will necessarily be connected in the first screening of edges between clusters, and are never disconnected. Thus, $\tdcalG$ is equivalent to or a supergraph of the skeleton of ${\calG^*}$, with
\begin{align}\label{eq:super_all}
    \bfN_i^{\tdcalG} \supseteq \bfN_i^{\calG^*} \text{ for all } i \in \bfV.
\end{align}

Next, we show that no truly nonadjacent node pair (that is, not adjacent in ${\calG^*}$) will be connected in $\tdcalG$. Restating \eqref{eq:exist2}, for every truly nonadjacent distinct node pair $i, j \in \bfV$ there exists at least one conditioning set $\bfX_\bfk \subseteq \bfN_i^{\calG^*} \setminus X_j$ or $\bfX_\bfk \subseteq \bfN_j^{\calG^*} \setminus X_i$ such that $\indep{X_i}{X_j}{\bfX_\bfk}$. Thus, it is sufficient to show that for every truly nonadjacent node pair, $\indep{X_i}{X_j}{\bfX_\bfk}$ for one such separation set $\bfS(i,j) \coloneqq \bfk$ is evaluated by \autoref{alg:ppc}. Consider the following for any truly nonadjacent node pair $i, j \in \bfV$.

Suppose $c_i = c_j$. If there exists a conditioning set $\bfX_\bfk$ such that $\indep{X_i}{X_j}{\bfX_\bfk}$ and $c_k = c_j$ for all $k \in \bfk$, then since \eqref{eq:super_within}, $\indep{X_i}{X_j}{\bfX_\bfk}$ is evaluated by the first modified execution of PC in line \ref{algl:pc_within}. Otherwise, if $\exists k \in \bfk$ such that $c_k \neq c_j$, then for any and every conditioning set $\bfX_\bfk$ such that $\indep{X_i}{X_j}{\bfX_\bfk}$, $\bfX_\bfk$ satisfies criteria (a) of \eqref{eq:criteria}. In such a case, since \eqref{eq:super_all}, $\indep{X_i}{X_j}{\bfX_\bfk}$ for one such conditioning set $\bfX_\bfk$ is evaluated by the second modified execution of PC in line \ref{algl:ppc_skeleton}.

Suppose $c_i \neq c_j$. Ideally, $i$ and $j$ will be separated during the screening of edges between clusters in lines \ref{algl:between_fix1}-\ref{algl:end_screen2}. If they are not, then for any and every conditioning set $\bfX_\bfk$ such that $\indep{X_i}{X_j}{\bfX_\bfk}$, $\bfX_\bfk$ satisfies criteria (b) of \eqref{eq:criteria}. Since \eqref{eq:super_all}, $\indep{X_i}{X_j}{\bfX_\bfk}$ for one such $\bfX_\bfk$ is evaluated by the second modified execution of PC in line \ref{algl:ppc_skeleton}. 

We have shown that \autoref{alg:ppc} correctly estimates the skeleton of ${\calG^*}$ and, in particular, for every truly nonadjacent $i, j \in \bfV$, $\indep{X_i}{X_j}{\bfX_\bfk}$ for some conditioning set $\bfX_\bfk$ has been evaluated, with separation set $\bfS(i,j) \coloneqq \bfk$ stored as specified in \autoref{alg:ppc}. Since these separation sets are guaranteed to be accurate with conditional independence oracles, orientation of $\tdcalG$ to the CPDAG of ${\calG^*}$ in line \ref{algl:ppc_orient} follows directly from the correctness of \texttt{skel-to-cpdag} (\autoref{alg:cpdag}) \citep{spirtes1991, meek1995, kalisch2007}. 

\end{proof}

\subsection{Proof of \autoref{thm:path}}\label{sec:path_proof}

To prove \autoref{thm:path}, we begin by stating and proving a well-known consistency result for constraint-based structure learning in the classical setting. 

\begin{lemma}\label{lma:ppc_consistent}
Suppose the distribution $P$ is fixed and faithful to a DAG with CPDAG ${\calG^*}$ and $\calD$ is data containing $n$ i.i.d. samples from $P$. Let $\hcalG_n (\alpha_n) = (\bfV, \hat{\bfE}_n)$ be the graph output of any exhaustive investigation of \eqref{eq:exist2} followed by \autoref{alg:cpdag} executed with a consistent test applied with threshold $\alpha_n$. Then there exists $\alpha_n \to 0$ $(n \to \infty)$ such that
\begin{align}\label{eq:lma_ppc_consistent}
	\lim_{n \to \infty} \rmPr \left[ \hcalG_n (\alpha_n) = {\calG^*} \right] = 1. 
\end{align}
\end{lemma}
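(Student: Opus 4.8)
The plan is to reduce this finite-sample statement to the population (oracle) correctness of the procedure by means of a union bound over a finite collection of conditional independence tests. First I would invoke faithfulness: since $P$ is faithful to a DAG with CPDAG $\calG^*$, an exhaustive investigation of the edge-existence criterion \eqref{eq:exist2} followed by \texttt{skel-to-cpdag} (\autoref{alg:cpdag}) recovers $\calG^*$ exactly when supplied with conditional independence oracles, which is precisely the soundness and completeness underlying \autoref{thm:ppc}. The goal is therefore to show that, with probability tending to one, the sample procedure run at threshold $\alpha_n$ makes the same decision as the oracle on every test it queries, so that its entire (adaptive) execution coincides with the oracle execution.

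Second, I would note that because $p = |\bfV|$ is fixed, the set $\mathcal{H}$ of all hypotheses of the form $\indep{X_i}{X_j}{\bfX_\bfk}$ with $i \neq j$ and $\bfk \subseteq \bfV \setminus \{i,j\}$ is finite, say $|\mathcal{H}| = M$, and any exhaustive investigation of \eqref{eq:exist2} queries only hypotheses in $\mathcal{H}$. For each $H \in \mathcal{H}$ I would control the probability that the sample test disagrees with the truth. When $H$ holds in $P$, the $G^2$ statistic $G^2_{ij|\bfk}$ converges in distribution to $\chi^2_f$, so the $p$-value is asymptotically uniform and $\Pr(\text{reject } H) \to 0$ for \emph{any} $\alpha_n \to 0$ (vanishing Type I error). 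When $H$ is false, consistency of the test gives $G^2_{ij|\bfk} \to \infty$ in probability (indeed $G^2_{ij|\bfk}/n$ tends to a strictly positive limit governed by the relevant mutual information), so the $p$-value decays to $0$; thus $\Pr(\text{fail to reject } H) \to 0$ provided $\alpha_n$ does not shrink too quickly (vanishing Type II error).

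Third, I would exhibit a single sequence $\alpha_n \to 0$ that simultaneously controls both error types over all of $\mathcal{H}$. Since under each false $H$ the $p$-value decays exponentially in $n$, it suffices that $\alpha_n$ vanish more slowly than this exponential rate; because $\mathcal{H}$ is finite, any subexponentially decaying sequence (for instance $\alpha_n = 1/\log n$) meets all $M$ of these requirements at once while still satisfying $\alpha_n \to 0$, which secures the Type I bound. Defining $A_n$ to be the event that the sample test agrees with the oracle on every $H \in \mathcal{H}$, a union bound gives $\Pr(A_n^c) \leq \sum_{H \in \mathcal{H}} \Pr(\text{wrong decision on } H) \to 0$, so $\Pr(A_n) \to 1$. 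On $A_n$ the sample procedure follows exactly the oracle's execution path, returning the skeleton of $\calG^*$ together with separation sets that are genuine d-separating sets (so v-structures are correctly identified); hence \texttt{skel-to-cpdag} outputs $\calG^*$, and $\Pr[\hcalG_n(\alpha_n) = \calG^*] \geq \Pr(A_n) \to 1$, establishing \eqref{eq:lma_ppc_consistent}.

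The main obstacle is the content of the third step: constructing one threshold sequence that balances the opposing demands of the two error types uniformly across all tests. The tension is that Type I control pushes $\alpha_n$ to be small whereas Type II control forbids it from being too small; the resolution rests on the divergence of the statistic under each fixed alternative (i.e.\ consistency of the $G^2$ test) together with the finiteness of $\mathcal{H}$, which permits taking a worst-case-over-$\mathcal{H}$ admissible rate. I would also take care to argue that the union bound over the \emph{deterministic} collection $\mathcal{H}$ dominates the error from the actually-performed (random, adaptive) subset of tests, so the adaptivity of the procedure introduces no additional difficulty.
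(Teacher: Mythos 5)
Your proposal is correct and follows essentially the same route as the paper's proof: control Type~I error via the asymptotic uniformity of null $p$-values (so any $\alpha_n \to 0$ suffices), control Type~II error via consistency of the test (so $p$-values under fixed alternatives vanish, permitting a sufficiently slowly decaying $\alpha_n$), union-bound over the finitely many hypotheses available when $p$ is fixed, and conclude from the population correctness of \eqref{eq:exist2} and \autoref{alg:cpdag}. Your extra touches---explicitly handling the adaptivity of the queried tests and exhibiting a concrete sequence $\alpha_n = 1/\log n$ justified by exponential $p$-value decay under fixed alternatives---are refinements of steps the paper leaves implicit, not a different approach.
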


\begin{proof}[Proof of \autoref{lma:ppc_consistent}]
Define $p_{n;i,j|\bfk}$ and $p_{n;i,j|\bfk}^*$ be the p-values for testing the independence between $i$ and $j$ conditioned on $\bfk$ with $n$ data samples when $i$ and $j$ are and are not d-separated by $\bfk$ in ${\calG^*}$, respectively. For example,
\begin{align*}
	p_{n;i,j\mid\bfk} &= \rmPr(\chi^2_f > G^2_{n;ij | \bfk} \mid \dsep{X_i}{X_j}{\bfX_{\bfk}}) \text{, and } \\ 
	p_{n;i,j\mid\bfk}^* &= \rmPr(\chi^2_f > G^2_{n;ij | \bfk} \mid \notdsep{X_i}{X_j}{\bfX_{\bfk}})
\end{align*}
for the $G^2$ test of independence. Given faithfulness, d-separation in ${\calG^*}$ corresponds one-to-one with conditional independence in $P$. In the case that conditional independence holds (i.e. the null hypothesis is true), $p_{n;i,j|\bfk} \sim \text{Unif}(0, 1)$, so $\rmPr(p_{n;i,j|\bfk} \leq \alpha_n) = \alpha_n \to 0$ $(n \to \infty)$. Whereas whenever conditional independence does not hold, the consistency of the conditional independence test implies that $p^*_{n;i,j|\bfk} = o(1)$. As these statements hold for every $i, j \in \bfV$ and $\bfk \subseteq \bfV \setminus \{i, j\}$ with fixed $p = \abs{\bfV}$, what follows is the existence of $\alpha_n \to 0$ such that the probability of making any erroneous conditional independence inference in the investigation of \eqref{eq:exist2} decays to zero when $n\to\infty$. That is,
\begin{align}\label{eq:ppc_errors}
	\lim_{n \to \infty} \rmPr \left[ \bigcup_{i,j,\bfk} \left( p_{n;i,j|\bfk} \leq \alpha_n \right) \cup \left( p^*_{n;i,j|\bfk} > \alpha_n \right)  \right] 
		&= 0.
\end{align}
Given perfect conditional independence inferences asymptotically, the consistency follows straightforwardly from the correctness of \eqref{eq:exist2} for skeleton identification and the soundness and completeness of \autoref{alg:cpdag} for CPDAG orientation. 

\end{proof}

The proof of \autoref{thm:path} then proceeds from \autoref{lma:ppc_consistent} and its proof as follows.

\begin{proof}[Proof of \autoref{thm:path}]

\eqref{eq:ppc_errors} indicates the existence of some $a_n \to 0$ $(n \to \infty)$ such that
\begin{align}\label{eq:consistent}
	\lim_{n \to \infty} \rmPr \left[ \sup_{i, j, \bfk} p^*_{n;i,j|\bfk}
			\leq a_n <  \inf_{i, j, \bfk} p_{n;i,j|\bfk} \right] = 1.
\end{align}
That is, in the large-sample limit, $a_n$ perfectly discriminates true positives from true negatives. 
For conditional independence inferred with threshold $\alpha_n \geq a_n \geq \sup_{i, j, \bfk} p^*_{n;i,j|\bfk}$, only false positive edges are possible. This ensures that for every truly nonadjacent distinct node pair $i, j \in \bfV$ (that is, $i \adjacent j \not\in \bfE^*$ where $\bfE^*$ is the edge set of the skeleton of $\calG^*$), at least one conditioning set that separates $i$ and $j$ is considered in the investigation of \eqref{eq:exist2} and stored in \eqref{eq:sep} since \eqref{eq:consistent} holds asymptotically.
\begin{align}\label{eq:p_values}
	\max_{i \adjacent j \in \bfE^*} \Phi_{n;ij} &\leq \sup_{i, j, \bfk} p^*_{n;i,j|\bfk} 
		\leq a_n < \inf_{i, j, \bfk} p_{n;i,j|\bfk} \leq \min_{i \adjacent j \not\in \bfE^*} \Phi_{n;ij}.
\end{align}
Inspecting \eqref{eq:p_values} confirms the existence of a threshold value $\hat{a}_n$, in particular $\hat{a}_n = \max_{i \adjacent j \in \bfE^*} \Phi_{n;i,j}$, that perfectly discriminates true positives from true negatives. Consequently, $\bfE^* = \{i \adjacent j : \Phi_{n;ij} \leq \hat{a}_n \}$, and $\{\bfS_n(i, j) : \Phi_{n;ij} > \hat{a}_n \}$ obtains correct separation sets for complete and sound orientation to ${\calG^*}$ according to \autoref{alg:cpdag}. 

Having verified the existence of a threshold $\hat{a}_n$ that recovers ${\calG^*}$ by thresholding $\Phi_n$ and $\bfS_n$, it remains to show that $\hat{a}_n$ is recovered and selected by \autoref{alg:path}. Parameter settings $\tau = 1 + \sum_{i < j} \ind{\Phi_{n;ij} \leq \alpha_n} = 1 + \abs{\hat{\bfE}_n^{(1)}}$ and $\alpha^{(\tau)} = 0$ regulate the generation of $\alpha^{(t)}$ such that the skeletons of sequential estimates differ in sparsity by exactly one edge. The resulting values $\mathcal{A} \coloneqq \{\alpha^{(1)}, \ldots, \alpha^{\tau}\}$ correspond to the order statistics of the upper triangular elements of $\Phi_n$ decreasing from $\alpha_n^{(1)} = \alpha_n \geq a_n \geq \hat{a}_n$ to $\alpha_n^{(\tau)} = 0$, ensuring that $\hat{a}_n = \max_{i \adjacent j \in \bfE^*} \Phi_{n;i,j} \in \mathcal{A}$ and so ${\calG^*} \in \{ \calG^{(t)} : t \in \{1, 2, \dots, \tau \} \}$. The consistency of $\phi$ ensures that in the large-sample limit, $t^* = \argmax_{t \in \{1, \dots, \tau\}} \phi(\bfX \mid \hcalG^{(t)}, \calD)$ selects $\alpha^{(t^*)} = \hat{a}_n$ that results in the true CPDAG $\hcalG_n^{(t^*)} (\alpha_n) = {\calG^*}$, with high probability. 

Given that the arguments hold for any $\alpha_n$ as long as $\alpha_n \geq a_n$ when $n$ is large, it is not necessary for $\alpha_n$ to converge to zero. In particular, \eqref{eq:thm_path} holds for any fixed $\alpha_n = \alpha \in (0, 1)$. 

\end{proof}

Note that while \autoref{thm:path} generously allows for any $\alpha_n \in (0, 1)$, in practice a small threshold is desirable in the interest of efficiency. Since $\Pr(p_{n;i,j|\bfk} \leq \alpha_n) = \alpha_n$, the expected length of the solution path $\tau$ may be approximately bounded between $\abs{\bfE^*}$ and $\abs{\bfE^*} + \frac{p(p-1)}{2} \alpha_n$. The bounds may be further regulated if approximate knowledge of the sparsity of ${\calG^*}$ is known. 

Furthermore, an inspection of the proof of \autoref{thm:path} more generally indicates that the solution path returned by \autoref{alg:path} with the same $\tau$ and $\alpha^{(\tau)}$ contains the CPDAG of the underlying DAG ${\calG^*}$ if there exists some $a \leq \alpha$ such that
\begin{align*}
	\max_{i, j \in \bfE^*} \Phi_{ij} 
		\leq a <  \min_{i, j \not\in \bfE^*} \Phi_{ij}
\end{align*} 
holds when \eqref{eq:exist2} is investigated with threshold $\alpha$, which can hold even for a finite sample size $n$.

\section{Constraint-based Edge Orientation}\label{sec:orient}

In this section, we review established constraint-based strategies for determining edge orientations, summarized in \autoref{alg:cpdag}.

We begin by discussing the work of \cite{verma1991equivalence} in determining edge orientations. Under faithfulness, knowledge about the conditional independence relationships between variables can be used to detect the existence of v-structures (defined in \autoref{sec:equivalence}) as follows. 
A triplet of nodes $(i, k, j)$ configured $i \adjacent k \adjacent j$ with $i$ and $j$ not adjacent, called an \emph{unshielded triple}, is a v-structure if and only if 
\begin{align}\label{eq:vs1}
	\exists \bfk \subseteq \bfV \setminus \{i, j, k\}  \text{ such that } \indep{X_i}{X_j}{\bfX_{\bfk}}.
\end{align}
It is easy to see that any other directed configuration of $i \adjacent k \adjacent j$ requires $k$ to separate $i$ and $j$ \citep[Lemma 5.1.3]{spirtes2000}. To investigate this criterion, the separation sets $\bfS$ are recorded throughout the estimation process (e.g., line \ref{algl:sep} of \autoref{alg:pc}), defined as $\bfS(i, j) = \bfS(j, i) \subseteq \bfV \setminus \{i, j \}$ such that $\indep{X_i}{X_j}{\bfX_{\bfS(i, j)}}$ for distinct nodes $i$ and $j$. Assuming accurate separation sets, every v-structure in $\calG$ can be recovered according to \eqref{eq:vs1}, guaranteeing extension of the skeleton of $\calG$ to its pattern, to which Meek's rules can be straightforwardly applied to obtain its CPDAG (see \autoref{sec:equivalence}). 

Alternatively, considering the Markov condition, an unshielded triple $(i, k, j)$ in $\calG$ with adjacencies $\bfN_i^\calG$ and $\bfN_j^\calG$ can be correctly identified as a v-structure by investigating the criteria \eqref{eq:vs1} limited to sets $\bfk$ such that $\bfX_\bfk \subseteq \bfN_i^\calG$ or sets $\bfk$ such that $\bfX_\bfk \subseteq \bfN_j^\calG$. \cite{margaritis2003} proposed the following criteria for identifying $(i, k, j)$ as a v-structure: 
\begin{align}
	&\notindep{X_i}{X_j}{\bfX_{\bfk}} \text{ for all } \bfX_{\bfk} \subseteq \bfN_{i}^\calG \text{ or for all } \bfX_{\bfk} \subseteq \bfN_{j}^\calG \text{ with } k \in \bfk, \label{eq:vs2}
\end{align}
investigating the smaller of $\abs{\bfN_i^\calG}$ and $\abs{\bfN_j^\calG}$ for efficiency. For constraint-based algorithms that do not record $\bfS$, we empirically prefer investigating \eqref{eq:vs2} over \eqref{eq:vs1} for both general well-performance and efficiency. As with \eqref{eq:vs1}, the correctness of \eqref{eq:vs2} guarantees detection of all and only the v-structures in a DAG $\calG$ faithful to $P$ given its skeleton and conditional independence oracles.

\begin{algorithm}
\caption{\texttt{skel-to-cpdag($\calG$, $\bfS$, $\indepinfo$)}}
\label{alg:cpdag}
\begin{algorithmic}[1]
\Require{undirected graph $\calG$, and separation sets $\bfS$ or conditional independence information $\indepinfo$}
\Ensure{CPDAG $\calG$}
\State{initialize $\bfU = \emptyset$ to store v-structures} \label{algl:cpdag_start_detect}
\ForAll{triplets of nodes configured $i \adjacent k \adjacent j$ in $\calG$ where $i$ and $j$ are not adjacent}
\If{$\bfS$ is supplied}
\If{$k \not\in \bfS(i, j)$}\label{algl:detect_v}
\State{store $(i, k, j)$ in $\bfU$}
\EndIf
\Else
\State{let $q = \argmin_{l \in \{i, j \}} \abs{\bfN_l^{\calG}}$}
\If {$\notindep{X_i}{X_j}{\bfX_{\bfk}} \text{ for all } \bfX_{\bfk} \subseteq \bfN_{q}^\calG \text{ with } k \in \bfk$}
\State{store $(i, k, j)$ in $\bfU$}
\EndIf
\EndIf
\EndFor \label{algl:cpdag_end_detect}
\ForAll{triplets $(i, k, j) \in \bfU$} \label{algl:cpdag_start_orient}
\State{orient $i \to k$ and $j \to k$ in $\calG$}
\EndFor \label{algl:cpdag_end_orient}
\State{repeatedly apply Meek's rules R1-4 to $\calG$ until no rule is applicable} \label{algl:meek}
\end{algorithmic}
\end{algorithm}

\section{Details Regarding Statistical Evaluations}\label{sec:evaluations}

In this section, we provide basic details omitted from the body of the paper for the sake of brevity. 

We develop here the notation and evaluation of the popular $G^2$ log-likelihood ratio test of independence for empirical estimation of conditional independence in $P$ \citep{spirtes2000}. For further details and examples, we refer to \cite{neapolitan2004} 10.3.1. Let $n[x_i]$ denote the number of counts for which random variable $X_i = x_i$ in $n$ data samples $\calD$, with the definition extending similarly to $n[x_i, x_j]$ and so on. For $\bfk \subseteq \bfV$, let $n[x_i, x_j, \bfx_{\bfk}]$ denote the number of counts for which $X_i = x_i$, $X_j = x_j$, and $\bfX_{\bfk}$ attains one of its $\prod_{k \in \bfk} r_k$ state configurations $\bfx_{\bfk}$ in $\calD$. Then the $G^2$ test statistic for testing $H_0: \indep{X_i}{X_j}{X_k}$ is calculated as
\begin{align}\label{eq:gsquare}
	G^2_{ij | \bfk} 
		&= 2 \sum_{x_i, x_j, \bfx_{\bfk}} n[x_i, x_j, \bfx_{\bfk}] \log \left( \frac{n[x_i, x_j, \bfx_{\bfk}] \cdot n[\bfx_{\bfk}]}{n[x_i, x_j] \cdot n[x_j, \bfx_{\bfk}]} \right).
\end{align}
The equation can be applied to test marginal independence with $\bfk = \emptyset$. Under $H_0$, the $G^2$ statistic 
is asymptotically $\chi^2_f$ distributed with $f = (r_i - 1) (r_j - 1) \prod_{k \in \bfk} r_k$ degrees of freedom. Then for a chosen significance level $\alpha$, 
\begin{align*} 
	\rmPr(\chi^2_f > G^2_{ij|\bfk}) \leq \alpha ~\Rightarrow~ \text{reject $H_0$: }\notindep{X_i}{X_j}{\bfX_{\bfk}}.
\end{align*}
Note that the PC algorithm (\autoref{alg:pc}) operates in a somewhat backwards fashion where the initialized complete graph on $\bfV$ assumes all distinct pairs of variables dependent rather than independent. A node pair $i$ and $j$ is then disconnected if
\begin{align*} 
	\rmPr(\chi^2_f > G^2_{ij|\bfk}) > \alpha ~\Rightarrow~ \text{accept $H_0$: }\indep{X_i}{X_j}{\bfX_{\bfk}}
\end{align*}
for some considered conditioning set $\bfX_\bfk$.

The mutual information $\I(X_i, X_j)$ serves as a similarity measure between discrete random variables $X_i$ and $X_j$. It may be interpreted as the Kullback-Leibler divergence between the joint probability distribution and the product of the marginals, and it is empirically calculated for $n$ data observations as 
\begin{align*} 
	\hat{\I}(X_i, X_j) 
		= \sum_{x_i, x_j}
		\frac{n[x_i, x_j]}{n} \log \left( \frac{n[x_i, x_j] / n}{n[x_i]/n \cdot n[x_j] / n} \right)
\end{align*}
where $n[x_i,x_j]$ is the counts of the instances of the $n$ observations that satisfy $X_i = x_i$ and $X_j = x_j$, with corresponding definitions for $n[x_i]$ and $n[x_j]$. It is easy to see that $G^2_{ij} = 2n \cdot \hat{\I} (X_i, X_j)$.

Lastly, we include the penalized multinomial log-likelihood score with penalty parameter $\lambda$. Following the notation from \eqref{eq:gsquare}, let $\pa{i}$ represent one of the $q_i$ unique state configurations of $\Pa{i}$. Given the Bayesian network DAG structure $\calG$ and empirical data $\mathcal{D}$ from $P$, the penalized score is computed as
\begin{align}\label{eq:penll}
	\begin{split}
	\phi (\calG, \mathcal{D})
		&= \sum_{i, x_i, \pa{i}} n[x_{i}, \pa{i}] \log \left( \frac{n[x_{i}, \pa{i}]}{n[\pa{i}]} \right) - \lambda \sum_{i} (r_i - 1) q_i \\
		&= \sum_{i=1}^p \left( \sum_{x_i, \pa{i}} n[x_{i}, \pa{i}] \log \left( \frac{n[x_{i}, \pa{i}]}{n[\pa{i}]} \right) - \lambda (r_i - 1) q_i \right) \\
		&= \sum_{i=1}^p \phi(X_i, \Pa{i}).
	\end{split}
\end{align}
In the penalty term, $\sum_{i=1}^p (r_i - 1) q_i$ encodes model complexity, encouraging sparsity in the structure of $\calG$. In the Bayesian information criterion (BIC) score, $\lambda = \frac{1}{2} \log(n)$ \citep{schwarz1978}.

\pagebreak

\section{Supplementary Information}\label{sec:supplement}

This section contains supplementary figures and tables with additional and more detailed results.

\subsection{Additional Results}\label{sec:additional}

\begin{figure}[H]
  \includegraphics[width=164mm]{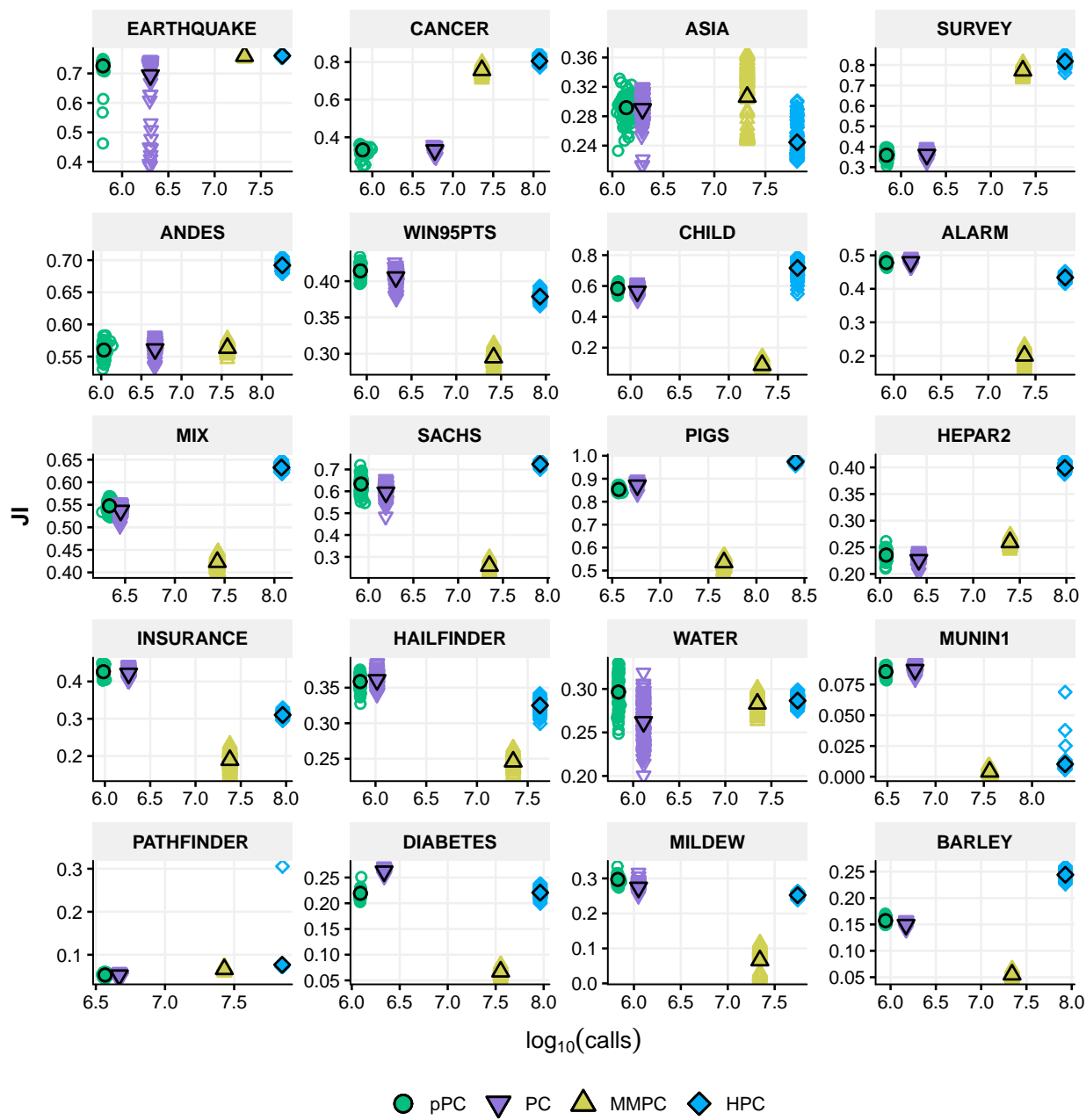}
  \caption{Visual accuracy and efficiency comparisons amongst pPC, PC, MMPC, and HPC. Filled shapes with black outlines indicate averages, whereas colored outlines represent results for individual datasets. The highest JI and $\log_{10}$ total calls were reported for PC, MMPC, and HPC for ten executions with $\alpha \in \mathcal{A}$, whereas the results for pPC represent a single execution with PATH and parameters $\alpha = 0.1$ and $\tau = 10$.}
  \label{fig:constraint_points}
\end{figure}

\begin{table}[H]
\begin{centering}
\scalebox{0.9}{
\centering
\begin{tabular}{lrrrrrrrr}
\toprule
  & pPC & PC & MMPC & HPC & pPC & PC & MMPC & HPC\\
\midrule
\addlinespace
\multicolumn{1}{l}{ } & \multicolumn{4}{l}{EARTHQUAKE} & \multicolumn{4}{l}{CANCER} \\
\cmidrule(l{3pt}r{3pt}){2-5} \cmidrule(l{3pt}r{3pt}){6-9}
P & 850.3 & 869.0 & 874.0 & 858.2 & 1058.6 & 1035.8 & 1073.0 & 1120.4\\
TP & 820.7 & 801.4 & \textbf{853.1} & 847.0 & 543.5 & 537.3 & 945.8 & \textbf{1000.4}\\
R & 26.9 & 49.4 & 16.6 & \textbf{8.8} & 497.2 & 497.6 & 112.9 & \textbf{78.2}\\
FP & 2.7 & 18.2 & 4.3 & \textbf{2.3} & 17.9 & \textbf{0.8} & 14.3 & 41.8\\
SHD & 285.0 & 319.8 & \textbf{254.3} & 258.3 & 597.3 & 586.5 & 191.5 & \textbf{164.4}\\
JI & 0.726 & 0.694 & 0.759 & \textbf{0.760} & 0.333 & 0.331 & 0.757 & \textbf{0.805}\\
Calls & \textbf{5.793} & 6.306 & 7.323 & 7.731 & \textbf{5.880} & 6.775 & 7.360 & 8.074\\
\addlinespace
\multicolumn{1}{l}{ } & \multicolumn{4}{l}{ASIA} & \multicolumn{4}{l}{SURVEY} \\
\cmidrule(l{3pt}r{3pt}){2-5} \cmidrule(l{3pt}r{3pt}){6-9}
P & 767.3 & 767.5 & 906.8 & 712.9 & 1284.2 & 1278.7 & 1307.2 & 1346.2\\
TP & 453.4 & 451.1 & \textbf{503.3} & 384.0 & 694.2 & 696.2 & 1156.9 & \textbf{1211.8}\\
R & 309.6 & 311.3 & 342.2 & \textbf{299.1} & 576.8 & 579.2 & 146.9 & \textbf{117.7}\\
FP & \textbf{4.3} & 5.1 & 61.2 & 29.8 & 13.2 & \textbf{3.3} & 3.4 & 16.7\\
SHD & \textbf{793.8} & 797.1 & 800.8 & 888.7 & 666.0 & 654.0 & 193.5 & \textbf{151.9}\\
JI & 0.292 & 0.290 & \textbf{0.306} & 0.245 & 0.359 & 0.361 & 0.773 & \textbf{0.818}\\
Calls & \textbf{6.137} & 6.296 & 7.323 & 7.812 & \textbf{5.838} & 6.287 & 7.357 & 7.827\\
\addlinespace
\multicolumn{1}{l}{ } & \multicolumn{4}{l}{ANDES} & \multicolumn{4}{l}{WIN95PTS} \\
\cmidrule(l{3pt}r{3pt}){2-5} \cmidrule(l{3pt}r{3pt}){6-9}
P & 1792.2 & 1768.9 & 1557.5 & 1773.2 & 1354.7 & 1302.8 & 1025.6 & 1174.5\\
TP & 1449.3 & 1443.1 & 1370.2 & \textbf{1643.3} & \textbf{1036.4} & 1005.7 & 731.0 & 922.6\\
R & 313.0 & 318.1 & 149.5 & \textbf{118.1} & 259.2 & 264.3 & 273.7 & \textbf{236.1}\\
FP & 29.9 & \textbf{7.7} & 37.7 & 11.8 & 59.2 & 32.8 & 20.9 & \textbf{15.9}\\
SHD & 825.6 & 809.6 & 912.5 & \textbf{613.5} & \textbf{1206.8} & 1211.2 & 1474.0 & 1277.3\\
JI & 0.560 & 0.561 & 0.563 & \textbf{0.692} & \textbf{0.414} & 0.405 & 0.295 & 0.379\\
Calls & \textbf{6.033} & 6.670 & 7.572 & 8.255 & \textbf{5.929} & 6.325 & 7.417 & 7.935\\
\addlinespace
\multicolumn{1}{l}{ } & \multicolumn{4}{l}{CHILD} & \multicolumn{4}{l}{ALARM} \\
\cmidrule(l{3pt}r{3pt}){2-5} \cmidrule(l{3pt}r{3pt}){6-9}
P & 1299.3 & 1303.6 & 977.2 & 1310.9 & 1268.8 & 1286.3 & 1011.8 & 1403.3\\
TP & 959.9 & 942.3 & 183.2 & \textbf{1092.4} & 956.9 & \textbf{964.9} & 450.4 & 935.5\\
R & 338.4 & 359.1 & 777.1 & \textbf{207.5} & \textbf{309.7} & 321.1 & 549.1 & 451.2\\
FP & \textbf{1.0} & 2.1 & 16.9 & 11.1 & 2.2 & \textbf{0.2} & 12.3 & 16.7\\
SHD & 350.1 & 368.9 & 1142.8 & \textbf{227.6} & 734.3 & \textbf{724.4} & 1250.8 & 770.2\\
JI & 0.583 & 0.564 & 0.087 & \textbf{0.717} & 0.478 & \textbf{0.480} & 0.200 & 0.434\\
Calls & \textbf{5.869} & 6.070 & 7.340 & 7.700 & \textbf{5.921} & 6.179 & 7.383 & 7.815\\
\addlinespace
\multicolumn{1}{l}{ } & \multicolumn{4}{l}{MIX} & \multicolumn{4}{l}{SACHS} \\
\cmidrule(l{3pt}r{3pt}){2-5} \cmidrule(l{3pt}r{3pt}){6-9}
P & 1478.9 & 1482.2 & 1167.3 & 1567.7 & 1791.3 & 1798.6 & 1372.0 & 1809.4\\
TP & 1187.3 & 1173.5 & 904.9 & \textbf{1334.8} & 1400.1 & 1349.5 & 656.0 & \textbf{1525.3}\\
R & 285.5 & 305.2 & 244.8 & \textbf{223.7} & 391.2 & 449.1 & 714.6 & \textbf{283.7}\\
FP & 6.0 & \textbf{3.5} & 17.6 & 9.2 & \textbf{0.0} & \textbf{0.0} & 1.3 & 0.4\\
SHD & 695.6 & 707.0 & 989.7 & \textbf{551.4} & 421.9 & 472.5 & 1167.3 & \textbf{297.1}\\
JI & 0.548 & 0.537 & 0.423 & \textbf{0.633} & 0.633 & 0.595 & 0.259 & \textbf{0.724}\\
Calls & \textbf{6.342} & 6.455 & 7.427 & 8.072 & \textbf{5.916} & 6.194 & 7.349 & 7.916\\
\bottomrule
\end{tabular}
}
\par\end{centering}
\begin{centering}
\protect\caption{Detailed results for the first ten networks comparing pPC with PATH against established constraint-based algorithms PC, MMPC, and HPC. Calls reports $\log_{10}$ calls. The highest JI and $\log_{10}$ total calls were reported for PC, MMPC, and HPC for ten executions with $\alpha \in \mathcal{A}$, whereas the results for pPC represent a single execution with PATH and parameters $\alpha = 0.1$ and $\tau = 10$. Best values are provided in boldface.}
\label{tab:detailed_constraint1}
\par\end{centering}
\end{table}

\begin{table}[H]
\begin{centering}
\scalebox{0.9}{
\centering
\begin{tabular}{lrrrrrrrr}
\toprule
  & pPC & PC & MMPC & HPC & pPC & PC & MMPC & HPC\\
\midrule
\multicolumn{1}{l}{ } & \multicolumn{4}{l}{PIGS} & \multicolumn{4}{l}{HEPAR2} \\
\cmidrule(l{3pt}r{3pt}){2-5} \cmidrule(l{3pt}r{3pt}){6-9}
P & 2171.2 & 2172.3 & 1533.1 & 2192.7 & 1286.3 & 1280.7 & 1015.3 & 1687.5\\
TP & 2004.2 & 2027.0 & 1295.9 & \textbf{2158.1} & \textbf{640.6} & 619.6 & 636.9 & 1073.6\\
R & 166.9 & 145.3 & 213.6 & \textbf{14.6} & 645.6 & 661.1 & \textbf{310.1} & 589.8\\
FP & 0.1 & \textbf{0.0} & 23.7 & 19.9 & \textbf{0.0} & \textbf{0.0} & 68.3 & 24.0\\
SHD & 177.9 & 155.0 & 909.8 & \textbf{43.9} & 1436.4 & 1457.4 & 1508.4 & \textbf{1027.3}\\
JI & 0.853 & 0.871 & 0.536 & \textbf{0.974} & 0.235 & 0.226 & 0.259 & \textbf{0.399}\\
Calls & \textbf{6.568} & 6.761 & 7.663 & 8.407 & \textbf{6.066} & 6.416 & 7.395 & 7.984\\
\addlinespace
\multicolumn{1}{l}{ } & \multicolumn{4}{l}{INSURANCE} & \multicolumn{4}{l}{HAILFINDER} \\
\cmidrule(l{3pt}r{3pt}){2-5} \cmidrule(l{3pt}r{3pt}){6-9}
P & 1434.5 & 1437.8 & 991.2 & 1609.4 & 927.3 & 910.1 & 830.5 & 1113.5\\
TP & \textbf{1062.0} & 1054.3 & 495.0 & 883.2 & \textbf{651.4} & 649.7 & 468.1 & 651.1\\
R & \textbf{371.4} & 383.5 & 473.4 & 696.0 & \textbf{219.9} & 249.0 & 316.2 & 461.9\\
FP & 1.1 & \textbf{0.0} & 22.8 & 30.1 & 56.0 & 11.3 & 46.1 & \textbf{0.4}\\
SHD & \textbf{1059.1} & 1065.7 & 1647.8 & 1266.9 & 945.5 & 902.6 & 1119.0 & \textbf{890.4}\\
JI & \textbf{0.426} & 0.421 & 0.189 & 0.310 & 0.359 & \textbf{0.361} & 0.246 & 0.325\\
Calls & \textbf{5.983} & 6.262 & 7.379 & 7.962 & \textbf{5.849} & 6.015 & 7.357 & 7.621\\
\addlinespace
\multicolumn{1}{l}{ } & \multicolumn{4}{l}{WATER} & \multicolumn{4}{l}{MUNIN1} \\
\cmidrule(l{3pt}r{3pt}){2-5} \cmidrule(l{3pt}r{3pt}){6-9}
P & 1328.6 & 1299.5 & 1313.6 & 1481.5 & 528.2 & 548.3 & 267.2 & 713.6\\
TP & 830.2 & 747.8 & 797.7 & \textbf{842.8} & 181.4 & \textbf{185.9} & 8.3 & 25.1\\
R & \textbf{495.0} & 551.3 & 513.4 & 616.3 & 307.3 & 325.6 & \textbf{216.0} & 636.2\\
FP & 3.4 & \textbf{0.4} & 2.5 & 22.4 & 39.5 & \textbf{36.8} & 42.9 & 52.3\\
SHD & \textbf{1476.1} & 1555.6 & 1507.8 & 1482.6 & \textbf{1634.0} & 1626.8 & 1810.6 & 1803.2\\
JI & \textbf{0.296} & 0.262 & 0.283 & 0.287 & 0.085 & \textbf{0.087} & 0.004 & 0.010\\
Calls & \textbf{5.840} & 6.113 & 7.349 & 7.785 & \textbf{6.484} & 6.787 & 7.562 & 8.343\\
\addlinespace
\multicolumn{1}{l}{ } & \multicolumn{4}{l}{PATHFINDER} & \multicolumn{4}{l}{DIABETES} \\
\cmidrule(l{3pt}r{3pt}){2-5} \cmidrule(l{3pt}r{3pt}){6-9}
P & 303.2 & 296.1 & 330.6 & 1185.3 & 1029.7 & 1059.7 & 700.4 & 1495.9\\
TP & 114.4 & 112.3 & 143.5 & \textbf{223.7} & 551.2 & \textbf{642.3} & 171.9 & 638.0\\
R & 182.2 & 179.1 & \textbf{170.3} & 916.9 & 458.5 & \textbf{409.0} & 465.0 & 673.6\\
FP & 6.6 & \textbf{4.8} & 16.9 & 44.7 & 20.1 & \textbf{8.4} & 63.5 & 184.2\\
SHD & 1860.2 & 1860.5 & 1841.4 & \textbf{1789.0} & 1504.0 & \textbf{1401.1} & 1926.6 & 1581.2\\
JI & 0.053 & 0.052 & 0.067 & \textbf{0.077} & 0.219 & \textbf{0.262} & 0.067 & 0.221\\
Calls & \textbf{6.567} & 6.669 & 7.429 & 7.847 & \textbf{6.090} & 6.337 & 7.552 & 7.964\\
\addlinespace
\multicolumn{1}{l}{ } & \multicolumn{4}{l}{MILDEW} & \multicolumn{4}{l}{BARLEY} \\
\cmidrule(l{3pt}r{3pt}){2-5} \cmidrule(l{3pt}r{3pt}){6-9}
P & 1047.5 & 1029.3 & 783.3 & 1493.0 & 733.5 & 713.5 & 585.2 & 1322.5\\
TP & 678.2 & 632.5 & 163.7 & \textbf{686.0} & 385.0 & 365.6 & 140.0 & \textbf{671.6}\\
R & \textbf{362.8} & 396.8 & 613.5 & 806.0 & \textbf{328.9} & 329.7 & 421.7 & 580.4\\
FP & 6.6 & \textbf{0.0} & 6.2 & 1.0 & 19.6 & \textbf{18.3} & 23.5 & 70.6\\
SHD & 1241.5 & 1280.5 & 1755.5 & \textbf{1227.9} & 1735.6 & 1753.7 & 1984.6 & \textbf{1500.0}\\
JI & \textbf{0.297} & 0.274 & 0.066 & 0.252 & 0.157 & 0.149 & 0.055 & \textbf{0.244}\\
Calls & \textbf{5.828} & 6.049 & 7.343 & 7.745 & \textbf{5.944} & 6.171 & 7.341 & 7.930\\
\bottomrule
\end{tabular}
}
\par\end{centering}
\begin{centering}
\protect\caption{Detailed results for the last ten networks comparing pPC with PATH against established constraint-based algorithms PC, MMPC, and HPC. Calls reports $\log_{10}$ calls. The highest JI and $\log_{10}$ total calls were reported for PC, MMPC, and HPC for ten executions with $\alpha \in \mathcal{A}$, whereas the results for pPC represent a single execution with PATH and parameters $\alpha = 0.1$ and $\tau = 10$. Best values are provided in boldface.}
\label{tab:detailed_constraint2}
\par\end{centering}
\end{table}

\begin{figure}[H]
  \includegraphics[width=164mm]{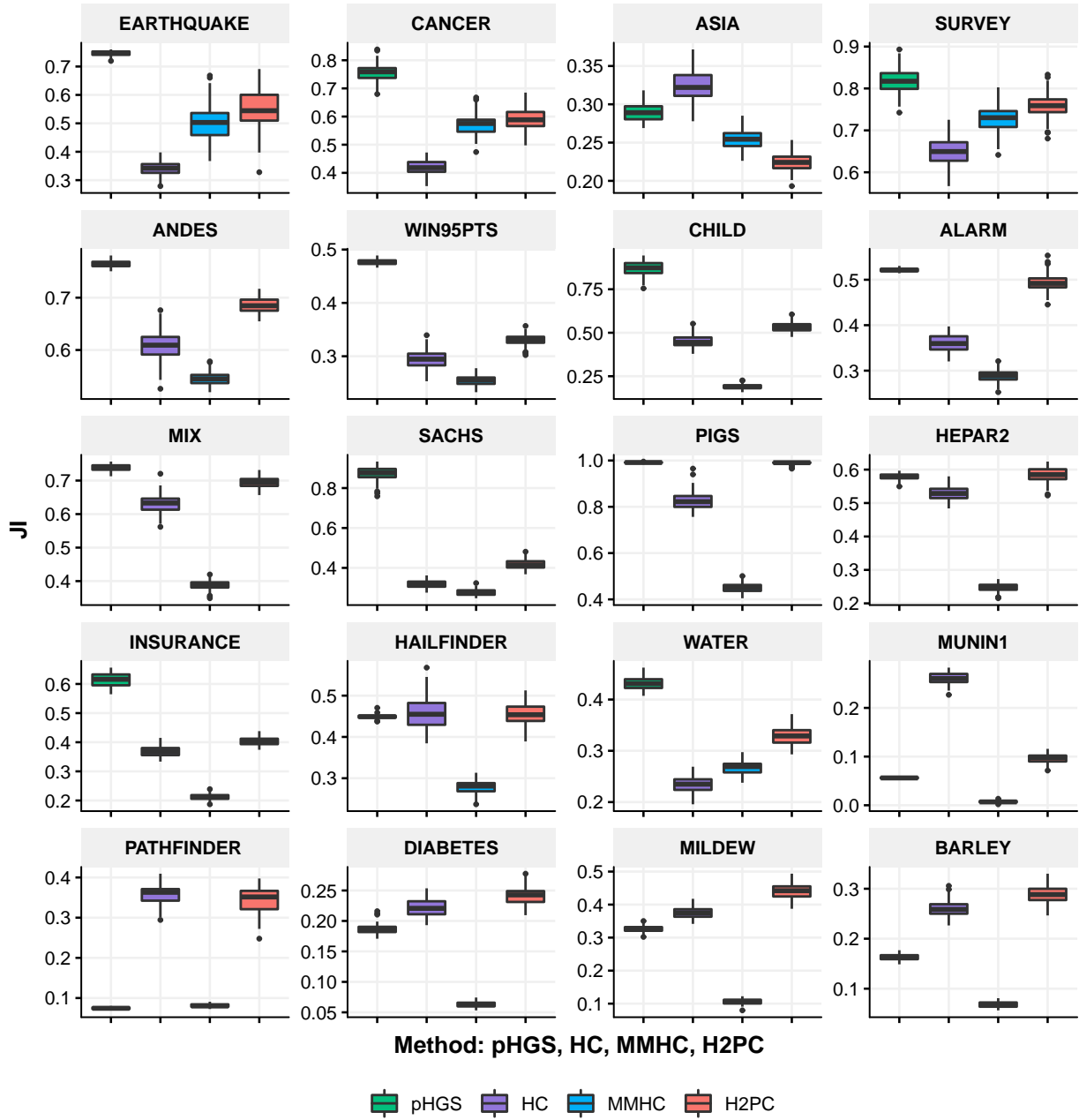}
  \caption{Visual representation of select accuracy results comparing pHGS against perfectly restricted hill-climbing from an empty graph (GSC$^*$) and established structure learning algorithms HC, MMHC, and H2PC. The highest JI were reported for MMHC and H2PC for ten executions with $\alpha \in \mathcal{A}$, and pHGS was executed with parameters $\alpha = 0.05$ and $\tau = 10$.}
  \label{fig:established_boxplots}
\end{figure}

\begin{table}[H]
\begin{centering}
\scalebox{0.9}{
\centering
\begin{tabular}{lrrrrrrrr}
\toprule
  & pHGS & HC & MMHC & H2PC & pHGS & HC & MMHC & H2PC\\
\midrule
\addlinespace
\multicolumn{1}{l}{ } & \multicolumn{4}{l}{EARTHQUAKE} & \multicolumn{4}{l}{CANCER} \\
\cmidrule(l{3pt}r{3pt}){2-5} \cmidrule(l{3pt}r{3pt}){6-9}
P & 914.2 & 1210.5 & 874.7 & 865.3 & 1138.9 & 1306.5 & 1107.0 & 1094.6\\
TP & \textbf{862.2} & 588.1 & 657.2 & 693.8 & \textbf{974.5} & 716.7 & 808.9 & 822.2\\
R & \textbf{23.6} & 313.4 & 208.2 & 161.1 & \textbf{103.6} & 364.3 & 255.9 & 252.3\\
FP & 28.4 & 309.1 & \textbf{9.3} & 10.4 & 60.7 & 225.5 & 42.2 & \textbf{20.0}\\
SHD & \textbf{269.2} & 824.0 & 455.1 & 419.6 & \textbf{209.2} & 631.8 & 356.3 & 320.8\\
JI & \textbf{0.746} & 0.341 & 0.500 & 0.548 & \textbf{0.758} & 0.419 & 0.570 & 0.590\\
Calls & \textbf{5.756} & 6.378 & 6.321 & 6.732 & \textbf{5.795} & 6.444 & 6.326 & 7.074\\
\addlinespace
\multicolumn{1}{l}{ } & \multicolumn{4}{l}{ASIA} & \multicolumn{4}{l}{SURVEY} \\
\cmidrule(l{3pt}r{3pt}){2-5} \cmidrule(l{3pt}r{3pt}){6-9}
P & 903.2 & 1432.0 & 852.3 & 702.0 & 1337.5 & 1432.2 & 1327.6 & 1341.7\\
TP & 481.6 & \textbf{653.3} & 424.4 & 356.2 & \textbf{1206.5} & 1093.4 & 1124.7 & 1160.7\\
R & 373.3 & 446.1 & 407.7 & \textbf{326.0} & \textbf{120.9} & 241.2 & 190.1 & 171.6\\
FP & 48.3 & 332.5 & 20.2 & \textbf{19.8} & 10.1 & 97.5 & 12.8 & \textbf{9.4}\\
SHD & \textbf{809.6} & 922.2 & 838.9 & 906.6 & \textbf{150.6} & 351.1 & 235.1 & 195.8\\
JI & 0.289 & \textbf{0.323} & 0.254 & 0.224 & \textbf{0.817} & 0.649 & 0.726 & 0.760\\
Calls & \textbf{5.871} & 6.510 & 6.323 & 6.812 & \textbf{5.786} & 6.506 & 6.327 & 6.823\\
\addlinespace
\multicolumn{1}{l}{ } & \multicolumn{4}{l}{ANDES} & \multicolumn{4}{l}{WIN95PTS} \\
\cmidrule(l{3pt}r{3pt}){2-5} \cmidrule(l{3pt}r{3pt}){6-9}
P & 1901.3 & 2572.1 & 1549.8 & 1770.7 & 1348.4 & 2334.8 & 1024.0 & 1167.4\\
TP & 1796.7 & \textbf{1823.7} & 1337.3 & 1633.3 & \textbf{1139.7} & 1028.4 & 649.9 & 832.0\\
R & \textbf{52.8} & 270.2 & 185.2 & 130.5 & \textbf{161.9} & 501.5 & 358.2 & 323.8\\
FP & 51.8 & 478.3 & 27.3 & \textbf{6.9} & 46.7 & 804.8 & 15.9 & \textbf{11.6}\\
SHD & \textbf{500.1} & 899.6 & 935.0 & 618.6 & \textbf{1091.0} & 1960.4 & 1550.0 & 1363.6\\
JI & \textbf{0.765} & 0.610 & 0.544 & 0.686 & \textbf{0.476} & 0.295 & 0.254 & 0.330\\
Calls & \textbf{5.980} & 6.776 & 6.442 & 7.201 & \textbf{5.901} & 6.627 & 6.402 & 6.926\\
\addlinespace
\multicolumn{1}{l}{ } & \multicolumn{4}{l}{CHILD} & \multicolumn{4}{l}{ALARM} \\
\cmidrule(l{3pt}r{3pt}){2-5} \cmidrule(l{3pt}r{3pt}){6-9}
P & 1299.4 & 1438.7 & 969.6 & 1298.0 & 1299.0 & 1828.8 & 1020.6 & 1399.9\\
TP & \textbf{1213.3} & 851.6 & 364.8 & 905.3 & \textbf{1024.2} & 932.5 & 606.5 & 1020.0\\
R & \textbf{85.9} & 449.2 & 597.8 & 391.5 & \textbf{270.9} & 511.8 & 403.2 & 371.8\\
FP & \textbf{0.2} & 137.9 & 7.0 & 1.1 & \textbf{3.9} & 384.6 & 10.9 & 8.1\\
SHD & \textbf{95.8} & 595.4 & 951.2 & 404.8 & \textbf{668.7} & 1141.1 & 1093.5 & 677.1\\
JI & \textbf{0.871} & 0.450 & 0.191 & 0.532 & \textbf{0.522} & 0.361 & 0.288 & 0.493\\
Calls & \textbf{5.834} & 6.486 & 6.338 & 6.698 & \textbf{5.879} & 6.577 & 6.382 & 6.812\\
\addlinespace
\multicolumn{1}{l}{ } & \multicolumn{4}{l}{MIX} & \multicolumn{4}{l}{SACHS} \\
\cmidrule(l{3pt}r{3pt}){2-5} \cmidrule(l{3pt}r{3pt}){6-9}
P & 1572.7 & 1937.4 & 1171.8 & 1563.5 & 1801.5 & 1944.2 & 1373.0 & 1779.1\\
TP & 1465.8 & \textbf{1473.2} & 851.8 & 1410.0 & \textbf{1688.0} & 911.6 & 690.9 & 1061.1\\
R & \textbf{88.7} & 231.7 & 303.2 & 150.2 & \textbf{113.6} & 855.0 & 680.8 & 718.0\\
FP & 18.1 & 232.5 & 16.8 & \textbf{3.3} & \textbf{0.0} & 177.6 & 1.3 & \textbf{0.0}\\
SHD & \textbf{429.3} & 636.3 & 1042.0 & 470.3 & \textbf{134.0} & 1088.0 & 1132.4 & 760.9\\
JI & \textbf{0.739} & 0.630 & 0.388 & 0.695 & \textbf{0.873} & 0.320 & 0.276 & 0.418\\
Calls & \textbf{6.281} & 6.631 & 6.397 & 6.950 & \textbf{5.887} & 6.577 & 6.346 & 6.905\\
\bottomrule
\end{tabular}
}
\par\end{centering}
\begin{centering}
\protect\caption{Detailed results for the first ten networks comparing pHGS against established structure learning algorithms HC, MMHC, and H2PC. Calls reports $\log_{10}$(Calls). The highest JI were reported for MMHC and H2PC for ten executions with $\alpha \in \mathcal{A}$, and pHGS was executed with parameters $\alpha = 0.05$ and $\tau = 10$. Best values are pvodied in boldface.}
\label{tab:detailed_established1}
\par\end{centering}
\end{table}

\begin{table}[H]
\begin{centering}
\scalebox{0.9}{
\centering
\begin{tabular}{lrrrrrrrr}
\toprule
  & pHGS & HC & MMHC & H2PC & pHGS & HC & MMHC & H2PC\\
\midrule
\addlinespace
\multicolumn{1}{l}{ } & \multicolumn{4}{l}{PIGS} & \multicolumn{4}{l}{HEPAR2} \\
\cmidrule(l{3pt}r{3pt}){2-5} \cmidrule(l{3pt}r{3pt}){6-9}
P & 2171.9 & 2298.6 & 1533.8 & 2172.9 & 1456.1 & 1781.5 & 988.3 & 1576.6\\
TP & \textbf{2167.5} & 2026.6 & 1152.9 & 2166.2 & 1295.2 & 1335.3 & 606.1 & \textbf{1347.5}\\
R & \textbf{4.4} & 152.5 & 357.2 & 6.7 & \textbf{145.1} & 296.4 & 336.4 & 219.4\\
FP & \textbf{0.0} & 119.5 & 23.6 & \textbf{0.0} & 15.8 & 149.8 & 45.7 & \textbf{9.6}\\
SHD & \textbf{14.5} & 275.0 & 1052.7 & 15.8 & 797.5 & 891.6 & 1516.5 & \textbf{739.0}\\
JI & \textbf{0.991} & 0.827 & 0.450 & 0.990 & 0.579 & 0.529 & 0.247 & \textbf{0.585}\\
Calls & \textbf{6.545} & 6.856 & 6.650 & 7.228 & \textbf{6.028} & 6.517 & 6.377 & 6.912\\
\addlinespace
\multicolumn{1}{l}{ } & \multicolumn{4}{l}{INSURANCE} & \multicolumn{4}{l}{HAILFINDER} \\
\cmidrule(l{3pt}r{3pt}){2-5} \cmidrule(l{3pt}r{3pt}){6-9}
P & 1526.2 & 2003.5 & 995.0 & 1536.9 & 994.9 & 1574.8 & 838.9 & 1101.1\\
TP & \textbf{1388.4} & 1108.5 & 545.4 & 1051.4 & 785.7 & \textbf{974.6} & 518.7 & 825.3\\
R & \textbf{134.6} & 582.6 & 426.8 & 469.7 & \textbf{114.5} & 328.6 & 273.1 & 275.5\\
FP & \textbf{3.3} & 312.4 & 22.8 & 15.8 & 94.6 & 271.6 & 47.1 & \textbf{0.3}\\
SHD & \textbf{734.9} & 1323.9 & 1597.5 & 1084.4 & 849.9 & 838.0 & 1069.4 & \textbf{715.9}\\
JI & \textbf{0.615} & 0.368 & 0.212 & 0.404 & 0.449 & \textbf{0.456} & 0.279 & 0.455\\
Calls & \textbf{5.952} & 6.584 & 6.376 & 6.958 & \textbf{5.822} & 6.524 & 6.354 & 6.617\\
\addlinespace
\multicolumn{1}{l}{ } & \multicolumn{4}{l}{WATER} & \multicolumn{4}{l}{MUNIN1} \\
\cmidrule(l{3pt}r{3pt}){2-5} \cmidrule(l{3pt}r{3pt}){6-9}
P & 1356.6 & 1842.9 & 1312.7 & 1449.6 & 524.2 & 1692.7 & 267.8 & 710.7\\
TP & \textbf{1103.0} & 786.0 & 763.9 & 926.1 & 122.5 & \textbf{717.4} & 14.3 & 217.8\\
R & \textbf{252.7} & 632.6 & 547.4 & 517.4 & 365.0 & 318.1 & \textbf{210.8} & 444.4\\
FP & \textbf{1.0} & 424.3 & 1.4 & 6.1 & \textbf{36.8} & 657.2 & 42.7 & 48.5\\
SHD & \textbf{1201.0} & 1941.4 & 1540.4 & 1383.0 & 1690.3 & 1715.8 & 1804.4 & \textbf{1606.6}\\
JI & \textbf{0.432} & 0.234 & 0.268 & 0.328 & 0.056 & \textbf{0.261} & 0.007 & 0.096\\
Calls & \textbf{5.812} & 6.553 & 6.344 & 6.779 & 6.465 & \textbf{6.449} & 6.562 & 7.343\\
\addlinespace
\multicolumn{1}{l}{ } & \multicolumn{4}{l}{PATHFINDER} & \multicolumn{4}{l}{DIABETES} \\
\cmidrule(l{3pt}r{3pt}){2-5} \cmidrule(l{3pt}r{3pt}){6-9}
P & 298.1 & 1430.4 & 330.9 & 1107.7 & 1033.6 & 2209.5 & 702.5 & 1422.5\\
TP & 157.4 & 894.3 & 172.4 & \textbf{787.1} & 482.1 & \textbf{770.0} & 161.8 & 671.2\\
R & \textbf{135.4} & 348.0 & 142.8 & 274.2 & 530.7 & 741.0 & \textbf{477.7} & 602.1\\
FP & \textbf{5.4} & 188.2 & 15.7 & 46.4 & \textbf{20.7} & 698.5 & 63.1 & 149.1\\
SHD & 1816.0 & 1261.8 & 1811.3 & \textbf{1227.2} & 1573.6 & 1963.5 & 1936.3 & \textbf{1512.8}\\
JI & 0.075 & \textbf{0.358} & 0.081 & 0.345 & 0.186 & 0.222 & 0.063 & \textbf{0.241}\\
Calls & 6.552 & 6.437 & \textbf{6.429} & 6.844 & \textbf{6.069} & 6.650 & 6.552 & 6.964\\
\addlinespace
\multicolumn{1}{l}{ } & \multicolumn{4}{l}{MILDEW} & \multicolumn{4}{l}{BARLEY} \\
\cmidrule(l{3pt}r{3pt}){2-5} \cmidrule(l{3pt}r{3pt}){6-9}
P & 1100.7 & 1599.2 & 778.1 & 1314.7 & 742.8 & 1603.1 & 587.9 & 1170.5\\
TP & 741.8 & 958.0 & 257.7 & \textbf{986.4} & 398.8 & \textbf{763.1} & 171.7 & 732.5\\
R & 358.8 & 399.3 & 515.3 & \textbf{327.9} & \textbf{324.5} & 468.2 & 393.5 & 400.8\\
FP & \textbf{0.1} & 241.9 & 5.1 & 0.4 & \textbf{19.6} & 371.9 & 22.7 & 37.2\\
SHD & 1171.3 & 1196.9 & 1660.4 & \textbf{927.0} & 1721.8 & 1709.8 & 1952.0 & \textbf{1405.7}\\
JI & 0.327 & 0.375 & 0.106 & \textbf{0.440} & 0.163 & 0.260 & 0.068 & \textbf{0.289}\\
Calls & \textbf{5.801} & 6.529 & 6.343 & 6.744 & \textbf{5.909} & 6.507 & 6.342 & 6.929\\
\bottomrule
\end{tabular}
}
\par\end{centering}
\begin{centering}
\protect\caption{Detailed results for the last ten networks comparing pHGS against established structure learning algorithms HC, MMHC, and H2PC. Calls reports $\log_{10}$(Calls). The highest JI and average calls were reported for MMHC and H2PC for ten executions with $\alpha \in \mathcal{A}$. The highest JI were reported for MMHC and H2PC for ten executions with $\alpha \in \mathcal{A}$, and pHGS was executed with parameters $\alpha = 0.05$ and $\tau = 10$. Best values are pvodied in boldface.}
\label{tab:detailed_established2}
\par\end{centering}
\end{table}

\subsection{Gaussian Results}\label{sec:gaussian}

Here, we report some preliminary results on the application of HGI to ARGES, as well as briefly discuss the design and asymptotic behavior of these algorithms.

Adaptively restricted greedy equivalence search (ARGES) is a hybrid adaptation of greedy equivalence search (GES) that achieves asymptotic recovery of the CPDAG of the underlying DAG given either a consistent estimation algorithm for the skeleton or the conditional independence graph (CIG), the two variants respectively referred to as ARGES-skeleton and ARGES-CIG \citep{nandy2018}. The CIG of the joint distribution of variables $\bfX = \{X_1, \dots, X_p\}$ represented by nodes $\bfV = \{1, \dots, p \}$ is the undirected graph where $i \adjacent j$ if and only if $\notindep{X_i}{X_j}{\bfX \setminus \{X_i, X_j\}}$, and is a supergraph of the skeleton.

For our simulations, we again considered the network structures in 
\autoref{tab:networks}
for our simulations. For each DAG structure, the edge weights $\beta_{ij}$ were sampled uniformly from $[-1, -0.1] \cup [0.1, 1]$ for $(i, j) \in \bfE$, with $\beta_{ij} = 0$ otherwise. Similarly, the zero mean error variances $\sigma^2_i$, $i = 1, \dots, p$ were drawn uniformly from $[1, 2]$. For each network configuration, we generated $N = 10$ datasets with sample sizes $n \in \{100, 250, 500, 1000, 2500, 5000\}$. 
For each dataset and algorithm, we applied the algorithm with ten score penalties $\lambda \in \mathcal{L} \coloneqq \{0.5, 0.7, \dots, 2.3\}$ (BIC penalty being $\lambda = 0.5$).  Note that due to the restriction to the true skeleton, $\lambda$ functions largely to adjust the search traversal through the space of equivalence classes rather than control the sparsity. The detection of v-structures for HGI was accomplished with a single significance level threshold $\alpha = 0.01$. 

The results are summarized in \autoref{fig:gaussian_hgi}. For ARGES without HGI, we took the highest JI of ARGES-skeleton and ARGES-CIG. For ARGES with HGI, we report the JI obtained by executing ARGES-skeleton initialized with the CPDAG of the initial DAG obtained by HGI. Both ARGES and ARGES with HGI achieve greater structural accuracy as $n$ increases, though ARGES with HGI is generally more accurate and approaches $\text{JI} = 1$ more quickly. The addition of HGI consistently results in typically 14-18\% improvement in JI, typically slightly increasing with sample size. ARGES with HGI only obtained worse JI results than ARGES for 2.7\% of the executions, with fewer than 0.4\% executions more than 4\% worse.

\begin{figure}[H]
  \includegraphics[width=\linewidth]{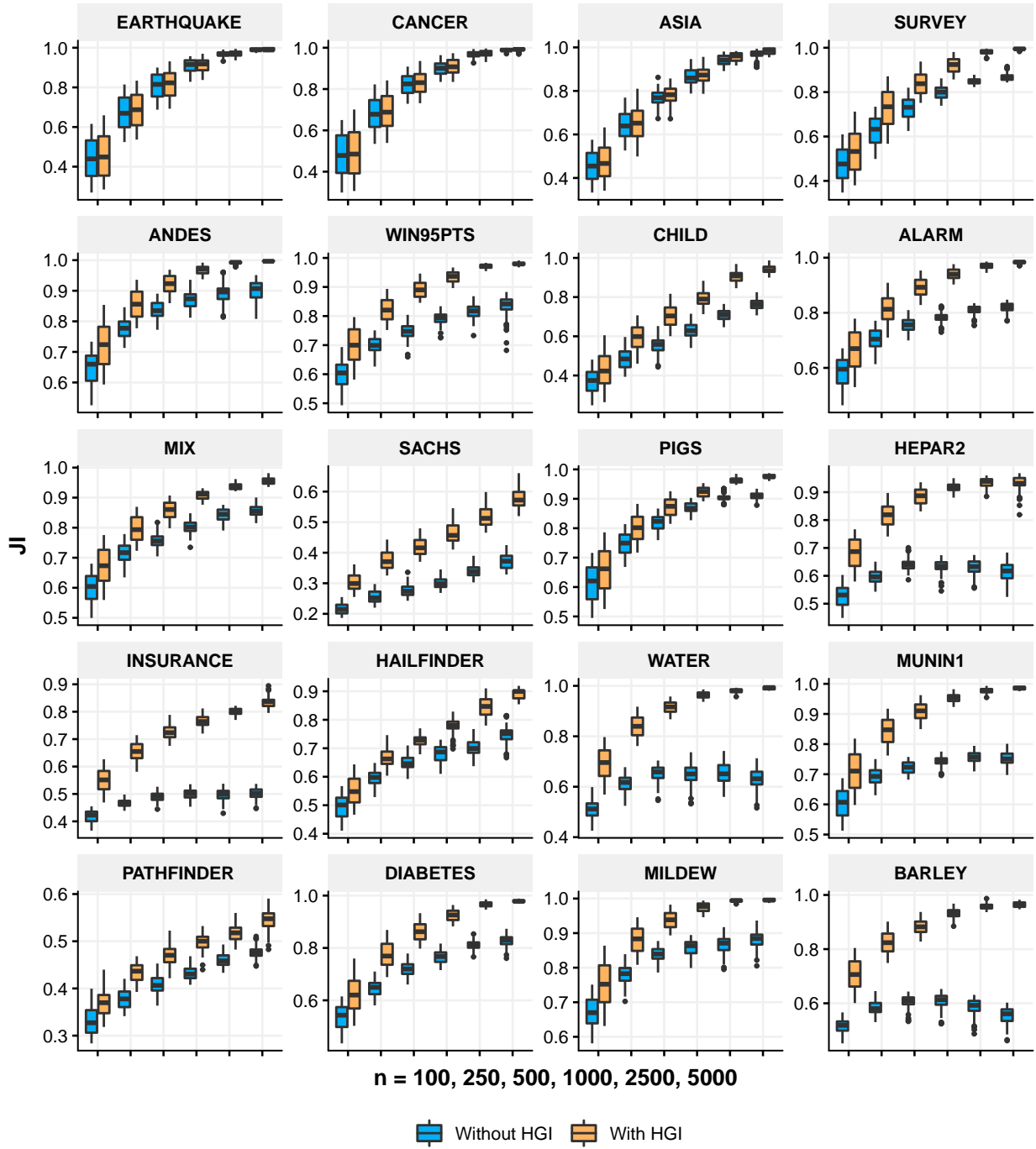}
  \caption{Boxplots comparing the JI of ARGES without and with HGI for various networks and sample sizes, restricted to the skeleton or CIG of the underlying DAG.}
  \label{fig:gaussian_hgi}
\end{figure}

\end{document}